\def\UrlSpecials{\do\~{\kern -.15em\lower .7ex\hbox{~}\kern .04em}} \catcode`~=13 
\newcommand{\vecz}{\mathbf{0}}
\newcommand{\norm}[1]{\left\Vert#1\right\Vert}
\newcommand{\normt}[1]{\Vert#1\Vert}
\newcommand{\abs}[1]{\left\lvert#1\right\rvert}
\newcommand{\lrpar}[1]{\left(#1\right)}
\newcommand{\lrang}[1]{\left\langle#1\right\rangle}
\newcommand{\nn}{\nonumber}
\newcommand{\defeq}{\triangleq}
\newcommand\numberthis{\addtocounter{equation}{1}\tag{\theequation}} 
\newcommand{\prox}{\mathbf{prox}}
\newcommand{\tilell}{\widetilde{\ell}}
\newcommand{\barnabla}{\overline{\nabla}}
\newcommand{\barbh}{\overline{\bf h}}
\newcommand{\barbr}{\overline{\bf r}}
\newcommand{\barbz}{\overline{\bf z}}
\newcommand{\barbH}{\overline{\bf H}}
\newcommand{\barbR}{\overline{\bf R}}
\newcommand{\barbW}{\overline{\bf W}}
\newcommand{\calA}{\mathcal{A}}
\newcommand{\calB}{\mathcal{B}}
\newcommand{\calC}{\mathcal{C}}
\newcommand{\calE}{\mathcal{E}}
\newcommand{\calF}{\mathcal{F}}
\newcommand{\calH}{\mathcal{H}}
\newcommand{\calP}{\mathcal{P}}
\newcommand{\calR}{\mathcal{R}}
\newcommand{\calS}{\mathcal{S}}
\newcommand{\calU}{\mathcal{U}}
\newcommand{\calX}{\mathcal{X}}
\newcommand{\calY}{\mathcal{Y}}
\newcommand{\calZ}{\mathcal{Z}}
\newcommand{\ba}{\mathbf{a}}
\newcommand{\bA}{\mathbf{A}}
\newcommand{\bb}{\mathbf{b}}
\newcommand{\bB}{\mathbf{B}}
\newcommand{\bd}{\mathbf{d}}
\newcommand{\bD}{\mathbf{D}}
\newcommand{\bE}{\mathbf{E}}
\newcommand{\bG}{\mathbf{G}}
\newcommand{\bh}{\mathbf{h}}
\newcommand{\bH}{\mathbf{H}}
\newcommand{\bI}{\mathbf{I}}
\newcommand{\bL}{\mathbf{L}}
\newcommand{\br}{\mathbf{r}}
\newcommand{\bV}{\mathbf{V}}
\newcommand{\bW}{\mathbf{W}}
\newcommand{\bx}{\mathbf{x}}
\newcommand{\bX}{\mathbf{X}}
\newcommand{\by}{\mathbf{y}}
\newcommand{\bY}{\mathbf{Y}}
\newcommand{\bz}{\mathbf{z}}
\newcommand{\bZ}{\mathbf{Z}}
\newcommand{\bbE}{\mathbb{E}}
\newcommand{\bbR}{\mathbb{R}}
\newcommand{\barbbR}{\overline{\bbR}}
\DeclareMathAlphabet{\mathbsf}{OT1}{cmss}{bx}{n}
\DeclareMathAlphabet{\mathssf}{OT1}{cmss}{m}{sl}
\DeclareSymbolFont{bsfletters}{OT1}{cmss}{bx}{n}  
\DeclareSymbolFont{ssfletters}{OT1}{cmss}{m}{n}
\DeclareMathSymbol{\bsfGamma}{0}{bsfletters}{'000}
\DeclareMathSymbol{\ssfGamma}{0}{ssfletters}{'000}
\DeclareMathSymbol{\bsfDelta}{0}{bsfletters}{'001}
\DeclareMathSymbol{\ssfDelta}{0}{ssfletters}{'001}
\DeclareMathSymbol{\bsfTheta}{0}{bsfletters}{'002}
\DeclareMathSymbol{\ssfTheta}{0}{ssfletters}{'002}
\DeclareMathSymbol{\bsfLambda}{0}{bsfletters}{'003}
\DeclareMathSymbol{\ssfLambda}{0}{ssfletters}{'003}
\DeclareMathSymbol{\bsfXi}{0}{bsfletters}{'004}
\DeclareMathSymbol{\ssfXi}{0}{ssfletters}{'004}
\DeclareMathSymbol{\bsfPi}{0}{bsfletters}{'005}
\DeclareMathSymbol{\ssfPi}{0}{ssfletters}{'005}
\DeclareMathSymbol{\bsfSigma}{0}{bsfletters}{'006}
\DeclareMathSymbol{\ssfSigma}{0}{ssfletters}{'006}
\DeclareMathSymbol{\bsfUpsilon}{0}{bsfletters}{'007}
\DeclareMathSymbol{\ssfUpsilon}{0}{ssfletters}{'007}
\DeclareMathSymbol{\bsfPhi}{0}{bsfletters}{'010}
\DeclareMathSymbol{\ssfPhi}{0}{ssfletters}{'010}
\DeclareMathSymbol{\bsfPsi}{0}{bsfletters}{'011}
\DeclareMathSymbol{\ssfPsi}{0}{ssfletters}{'011}
\DeclareMathSymbol{\bsfOmega}{0}{bsfletters}{'012}
\DeclareMathSymbol{\ssfOmega}{0}{ssfletters}{'012}
\newcommand{\hatf}{\widehat{f}}
\newcommand{\tilg}{\widetilde{g}}
\newcommand{\hatbh}{\widehat{\bh}}
\newcommand{\hatbr}{\widehat{\br}}
\newcommand{\hatbV}{\widehat{\bV}}
\newcommand{\hatbW}{\widehat{\bW}}
\newcommand{\tilbW}{\widetilde{\bW}}
\newcommand{\barf}{\overline{f}}
\newcommand{\ubh}{\underline{\bh}}
\newcommand{\iid}{i.i.d.\ }
\newcommand{\floor}[1]{\lfloor{#1}\rfloor}
\newcommand{\lrangle}[2]{\left\langle{#1},{#2}\right\rangle}
\newcommand{\eqa}{\stackrel{\rm(a)}{=}}
\newcommand{\leb}{\stackrel{\rm(b)}{\le}}
\newcommand{\lec}{\stackrel{\rm(c)}{\le}}
\DeclareMathOperator*{\argmax}{arg\,max}
\DeclareMathOperator*{\argmin}{arg\,min}
\DeclareMathOperator{\tr}{tr}
\newtheorem{theorem}{Theorem} 
\newtheorem*{theorem*}{Theorem}
\newtheorem{lemma}{Lemma}
\newtheorem{corollary}{Corollary}
\theoremstyle{definition}
\theoremstyle{remark}
\newtheorem{remark}{Remark}
\newcommand{\qednew}{\nobreak \ifvmode \relax \else
      \ifdim\lastskip<1.5em \hskip-\lastskip
      \hskip1.5em plus0em minus0.5em \fi \nobreak
      \vrule height0.75em width0.5em depth0.25em\fi}
\title{A Unified Framework for Stochastic Matrix Factorization via Variance Reduction}
\author[1,2,3]{Renbo Zhao}
\author[2]{William B. Haskell}
\author[1]{Jiashi Feng}
\affil[1]{\small Department of Electrical and Computer Engineering, National University of Singapore}
\affil[2]{Department of Industrial and Systems Engineering, National University of Singapore}
\affil[3]{Department of Mathematics, National University of Singapore}
\begin{document}

\maketitle

\begin{abstract}
We propose a unified framework to speed up the existing stochastic matrix factorization (SMF) algorithms via variance reduction. Our framework is general and it subsumes several  well-known SMF formulations in the literature. We perform a non-asymptotic convergence analysis of our framework and derive computational and sample complexities for our algorithm to converge to an $\epsilon$-stationary point in expectation. In addition, extensive experiments for a wide class of SMF formulations demonstrate that our framework consistently yields faster convergence and a more accurate output dictionary vis-\`a-vis  state-of-the-art frameworks. 
\end{abstract}

\section{Introduction}

Matrix factorization plays an important role in machine learning, due to its wide applications including collaborative filtering~\cite{Koren_09}, parts-based learning~\cite{Lee_99} and clustering~\cite{Ding_04}. 
Given a data matrix $\bY\!\in\!\bbR^{d\times n}$ (where $n$ denotes the number of data samples and $d$ denotes the dimension of each data sample), one aims to find matrices $\bW\!\in\!\bbR^{d\times k}$ and $\bH\!\in\!\bbR^{k\times n}$ such that $\bY\!\approx\! \bW\bH$, where usually $k\!\ll\! \min(d,n)$. Many existing algorithms in the literature  find $\bW$ and $\bH$ by minimizing $\normt{\bY-\bW\bH}_F^2$ (possibly with regularizations on $\bW$ or $\bH$ to induce structural properties). However, when the number of data samples $n$ becomes large, solving this problem using {batch algorithms} can be highly inefficient, in terms of both computation and storage. To improve the efficiency, {\em online (stochastic) matrix factorization} algorithms~\cite{Mairal_10,Spre_15,Guan_12,Feng_13,Shen_14b,Shen_16,Doh_16,Zhao_17,Zhao_17b} have been proposed to learn the {\em dictionary (matrix)} $\bW$ from a sequence of randomly drawn data samples (possibly in mini-batches).\footnote{We use ``online'' and ``stochastic'' interchangeably. In the literature, online algorithms also cover the setting where the number of data samples $n$ is infinite. However, we only consider $n$ to be finite, but can be very large.} Extensive numerical evidence has shown that the stochastic  matrix factorization algorithms can learn $\bW$ significantly faster than their batch counterparts, in terms of (empirical) convergence rate. In addition, only $O(dk)$ storage space  are consumed by these algorithms, in contrast to $O(ndk)$ by batch methods. Due to these advantages, stochastic matrix factorization algorithms have gained much popularity in recent years. 

From an optimization point of view, previous stochastic matrix factorization algorithms~\cite{Mairal_10,Spre_15,Guan_12,Feng_13,Shen_14b,Shen_16,Doh_16,Zhao_17,Zhao_17b} aim to solve a {\em nonconvex stochastic program} involving the dictionary $\bW$ (see~\eqref{eq:obj_SP} in Remark~\ref{rmk:stoc_prog}). They mainly leverage two optimization frameworks, namely {\em stochastic majorization-minimization} (SMM)~\cite{Mairal_13} and {\em stochastic (sub-)gradient descent} (SGD)~\cite{Borkar_08}. Based on either framework, almost sure asymptotic convergence to stationary points of the stochastic program has been shown. However, the asymptotic nature of this convergence analysis cannot provide insights into the dictionary learning process at any finite time instant. 
Thus, we wish to understand the non-asymptotic convergence of dictionary learning, as well as the 
sample complexity for learning a ``reasonable'' dictionary $\bW$. 
Besides, we desire improved stochastic methods that yield faster convergence, at least for large but {\em finite}  datasets. 

{\bf Main Contributions}.\ Inspired by the recent advances in variance reduction techniques (particularly for {\em nonconvex} problems)~\cite{Johnson_13,Reddi_16,AllenZhu_16a}, we propose a unified framework to speed up the existing stochastic matrix factorization  algorithms via variance reduction (VR). As shown in Section~\ref{sec:prob_form}, our framework is {\em general} and it subsumes {eight}  well-known SMF formulations as special instances, including those {robust} variants that explicitly model outliers in their objective functions. Within this framework, we derive a {\em non-asymptotic} convergence rate, measured in terms of how ``fast'' the algorithm converges to a stationary point in expectation (see Section~\ref{sec:analysis_main_results} for precise statements). 
Accordingly, we derive sample and computational complexities for our algorithm to converge to an $\epsilon$-stationary point in expectation. 
To the best of our knowledge, this is the {\em first time} that a {non-asymptotic convergence analysis} (together with {sample complexity}) is established for SMF algorithms. 
To further improve the efficiency of our framework, we also provide an {\em inexact} convergence analysis of our framework, where in each step the coefficient (and outlier) vectors are only learned {approximately}. In the context of {stochastic (variance-reduced)}  nonconvex proximal gradient algorithms, this is the {\em first} inexact analysis thus far. 
We show, via extensive experiments on various SMF formulations and datasets, that our variance-reduced framework consistently outperforms state-of-the-art frameworks (including SMM and SGD), in terms of both  convergence rate and accuracy of learned dictionaries. 


{\bf Related works}.\ 
Another line of works~\cite{Krish_13,Jin_16,Wang_17} considers using SMF algorithms in a matrix completion setting, where the matrices $\bW$ and $\bH$ are jointly learned from a random sequence of entries (or columns) of a data matrix $\bY$. By leveraging low-rankness and the restricted isometry property (RIP) on $\bY$, the product of $\bW$ and $\bH$ can {\em exactly} recover $\bY$ (with high probability). However, their problem setting differs from ours in two aspects. First, we do not store or explicitly learn $\bH$, due to space constraints. Second, our convergence results do not rely on low-rankness or RIP. Therefore, similar to~\cite{Mairal_10,Spre_15,Guan_12,Feng_13,Shen_14b,Shen_16,Doh_16,Zhao_17,Zhao_17b}, our results are stated in terms of stationary points, which are oftentimes appealing in real applications~\cite{Mairal_10,Zhao_17}. In the context of stochastic PCA, VR techniques have been applied to yield a linear convergence~\cite{Shamir_15b}. Again, the problem setting therein vastly differs from ours. 

{\bf Notations}.\ We use boldface capital letters, boldface lowercase letters and lowercase letters to denote matrices, column vectors and scalars respectively. 
Denote $\bbR_+$ as $[0,\infty)$ and $\barbbR_+$ as $[0,\infty]$. For any $n\ge 1$, define $[n]\defeq \{1,\ldots,n\}$ and $(n]\defeq \{0,\ldots,n\}$. 
For a matrix $\bX$, denote its $j$-th column as $\bX_{:j}$. We use $\norm{\cdot}$ to denote the $\ell_2$-norm of a vector and the Frobenius norm of  a matrix. For a convex and closed set $\calX$, denote its indicator function as $\delta_\calX$, such that $\delta_\calX(\bx)=0$ if $\bx\in\calX$ and $\infty$ otherwise. 
All the sections, lemmas and algorithms beginning with `S' will appear in the supplemental material. 

\section{Problem Formulation}\label{sec:prob_form}

Let $\{\by_i\}_{i=1}^n\subseteq\bbR^{d}$ denote the set of data samples. 
We first define a loss function $\ell_1$ w.r.t.\ a data sample $\by$ and a dictionary $\bW\in\bbR^{d\times k}$ as 
\begin{equation}
\ell_1(\by,\bW) \defeq \min_{\bh\in\calH}\tilell_1(\by,\bW,\bh),\quad 
 \tilell_1(\by,\bW,\bh)\defeq\frac{1}{2} \norm{\by-\bW\bh}_2^2 + \varphi(\bh), \label{eq:loss_1}
\end{equation}
where $\calH$ and $\varphi:\bbR^{k}\to\barbbR_+$ denote the constraint set and regularizer of the coefficient vector $\bh$ respectively. 
Since large-sale datasets  may contain outliers, we can define a ``robust'' version of $\ell_1$ as 
\begin{equation}
\ell_2(\by,\bW) \defeq \min_{\bh\in\calH,\br\in\calR}\tilell_2(\by,\bW,\bh,\br), \quad
\tilell_2(\by,\bW,\bh,\br)\defeq\frac{1}{2} \norm{\by-\bW\bh-\br}_2^2 + \varphi(\bh) + \phi(\br),\label{eq:loss_2}
\end{equation}
where $\calR$ and $\phi:\bbR^{d}\to\barbbR_+$ denote the constraint set and regularizer of the outlier vector $\br$ respectively. For convenience, let $\ell$ denote either $\ell_1$ or $\ell_2$. 
Based on $\ell$, we formulate our problem as 
\begin{equation}
\min_{\bW\in\calC} \left[f(\bW)\defeq g(\bW)+\psi(\bW)\right], \quad g(\bW)\defeq\frac{1}{n} \sum_{i=1}^n \ell(\by_i,\bW),\label{eq:obj}   
\end{equation}
where $\calC$ and $\psi:\bbR^{d\times k}\to\barbbR_+$ denote the constraint set and regularizer  of the dictionary $\bW$ respectively. Our targeted problem \eqref{eq:obj} is general and flexible, in the sense that by choosing the constraint sets $\calC$, $\calH$ and $\calR$ and the regularizers $\psi$, $\varphi$ and $\phi$ in different manners, \eqref{eq:obj} encompasses many important examples in the literature of SMF. For loss function $\ell_1$, we have: 
\begin{enumerate}[label={\bf (P\arabic*)},ref=\textbf{(P\arabic*)},topsep=-.1cm,itemsep=.1cm]
\item {\em Online DL (ODL)}~\cite{Mairal_10}: $\calC\defeq\{\bW\in\bbR^{d\times k}:\norm{\bW_{:i}}\le 1, \forall i\in[n]\}$, $\calH\defeq \bbR^k$, $\psi\equiv 0$ and $\varphi\defeq \lambda\norm{\cdot}_1$ ($\lambda>0$). Some other variants and extensions were also discussed in \cite[Section~5]{Mairal_10}. \label{prob:ODL}
\item {\em Online Structured Sparse Learning (OSSL)}~\cite{Spre_15}: $\calC\defeq \bbR^{d\times k}$, $\calH\defeq \bbR^k$, $\psi\equiv 0$ and $\varphi(\bh)\defeq \sum_{j\in[n']}\lambda_j\normt{\ubh_j}$, where $\ubh_j$ is a subvector of $\bh$, $\lambda_j\ge 0$ and $n'\in [n]$.\label{prob:OSSL} 
\item {\em Online NMF (ONMF)}~\cite{Guan_12}: $\calC\defeq \{\bW\in\bbR_+^{d\times k}\!:\! \norm{\bW_{:i}}_1=1,\forall i\!\in\![n]\}$, $\calH = \bbR_+^k$, $\psi\equiv 0$ and $\varphi\defeq \frac{\lambda}{2}\norm{\cdot}^2$ ($\lambda>0$). See \cite[Section~III]{Guan_12} for some extensions. \label{prob:ONMF}
\item {\em Smooth Sparse ODL (SSODL)}~\cite{Doh_16}: $\calC\defeq \{\bW\in\bbR_+^{d\times k}\!:\! \norm{\bW_{:i}}_1=1,\forall i\!\in\![n]\}$, $\calH = \bbR_+^k$, $\psi(\bW)\defeq \frac{\lambda_1}{2}\tr(\bW^T\bL\bW)$ and $\varphi\defeq \frac{\lambda_2}{2}\norm{\cdot}^2$, where $\lambda_1,\lambda_2>0$ and $\bL$ is positive semidefinite. \label{prob:SSODL} 
\end{enumerate}
\vspace{.1cm} For loss function $\ell_2$, we have: 
\begin{enumerate}[label={\bf (P\arabic*)},ref=\textbf{(P\arabic*)},resume,topsep=-.1cm,itemsep=.1cm]
\item  {\em Online Robust PCA (ORPCA)}~\cite{Feng_13}: $\calC\defeq \bbR^{d\times k}$, $\calH\defeq \bbR^k$, $\calR \defeq \bbR^d$, $\psi\defeq\frac{\lambda_1}{2n}\norm{\cdot}^2$, $\varphi\defeq \frac{\lambda_1}{2}\norm{\cdot}^2$ and $\phi\defeq \lambda_2\norm{\cdot}_1$, where $\lambda_1,\lambda_2>0$.\label{prob:ORPCA}
\item  {\em Online Max-norm Regularized Matrix Decomposition (OMRMD)}~\cite{Shen_14b}: $\calC\defeq \bbR^{d\times k}$, $\calH\defeq \{\bh\in\bbR^k:\norm{\bh}\le 1\}$, $\calR \defeq \bbR^d$, $\psi\defeq \frac{\lambda_1}{2n} \norm{\cdot}_{2,\infty}^2$, $\varphi\equiv 0$ and $\phi=\lambda_2\norm{\cdot}_1$, where $\lambda_1,\lambda_2>0$. \label{prob:OMRMD}
\item {\em Online Low-Rank Representation (OLRR)}~\cite{Shen_16}: $\calC\defeq \bbR^{d\times k}$, $\calH\defeq \bbR^k$, $\calR \defeq \bbR^d$, 
$\varphi\defeq \frac{\lambda_1}{2}\norm{\cdot}^2$, $\phi\defeq \lambda_2\norm{\cdot}_1$ and $\psi(\bW)\defeq\frac{1}{2n^2}(\normt{\bW^T\bA\bY}^2+\lambda_3\normt{\bB\bW}^2)$, where $\bY$, $\bA$ and $\bB$ are given (constant) matrices, and $\lambda_1,\lambda_2,\lambda_3>0$.\label{prob:OLRR}
\item  {\em Online Robust NMF (ORNMF)}~\cite{Zhao_17}: $\calC\defeq \{\bW\in\bbR_+^{d\times k}\!:\! \norm{\bW_{:i}}\le 1,\forall i\!\in\![n]\}$, $\calH\defeq [0,M]^k$, $\calR\defeq [-M',M']^d$, $\psi\equiv 0$, $\varphi\equiv 0$ and $\phi\defeq \lambda\norm{\cdot}_1$, where $M,M',\lambda>0$. \label{prob:ORNMF}
\end{enumerate}

\begin{remark}\label{rmk:stoc_prog}\vspace{.1cm}
In almost all the works cited in this section, the optimization problem 
is formulated as a {\em stochastic program} that minimizes the expected risk of dictionary training. It has the form 
\begin{equation}
\min_{\bW\in\calC} \left[\barf(\bW)\defeq \bbE_{\by\sim\nu}[\ell(\by,\bW)]+\psi(\bW)\right], \label{eq:obj_SP} 
\end{equation}
where $\nu$ is a probability distribution. The reasons that we consider~\eqref{eq:obj} rather than~\eqref{eq:obj_SP} are three-fold. First, \eqref{eq:obj} can be regarded as the {\em sample average approximation}~\cite{Kley_02} of~\eqref{eq:obj_SP}. Solving this approximation has been a popular and efficient approach for solving a general stochastic program like~\eqref{eq:obj_SP}~\cite{Shap_05}. Second, \eqref{eq:obj} can also be interpreted as minimizing the empirical risk incurred by any large-scale finite training dataset. Solving~\eqref{eq:obj} efficiently can greatly speed up the dictionary learning process. Last but not least, since \eqref{eq:obj} is a finite-sum minimization problem, we can employ the recently developed {\em variance reduction} techniques~\cite{Johnson_13,Reddi_16} to solve it in an efficient manner. See Section~\ref{sec:algo} for details. 
\end{remark}\vspace{-.2cm}

For algorithm and convergence analysis, we will focus on the loss function $\ell_2$, which contains $\ell_1$ as a special case (by choosing $\calR=\{\vecz\}$). 

\vspace{-.15cm}
\section{Algorithm}\label{sec:algo}
\vspace{-.1cm}

We present the pseudo-code for our stochastic matrix factorization algorithm with variance reduction techniques in Algorithm~\ref{algo:SMF_VR}.  To develop our algorithm, we first make two mild assumptions. We then describe our algorithm in details, with focus on learning the coefficient and outlier vectors. 

{\bf Assumptions}.\  The following two assumptions are satisfied by all the Problems \ref{prob:ODL} to \ref{prob:ORNMF}. 
\begin{enumerate}[label={\bf (A\arabic*)},ref=\textbf{(A\arabic*)},topsep=-.1cm,itemsep=.0cm]
\item $\psi$, $\varphi$ and $\phi$ are convex, proper and closed functions on $\bbR^{d\times k}$, $\bbR^k$ and $\bbR^d$ respectively, with proximal operators that can be evaluated efficiently.\footnote{For a function $\gamma:\bbR^d\to\bbR$, the ``efficient'' evaluation of $\prox_\gamma$ means that for any $\bx\in\bbR^d$, $\prox_\gamma(\bx)$ can be computed in $O(d)$ arithemetic operations. The same definitions apply to ``efficient projections'' in \ref{assump:convex_set}. }  \label{assump:regularizers}
\item $\calC$, $\calH$ and $\calR$ are convex and admit efficient (Euclidean) projections onto them. \label{assump:convex_set}
\end{enumerate}

\vspace{.1cm}{\bf Algorithm Description}.\ Variance-reduced stochastic optimization algorithms typically contain outer and inner loops~\cite{Johnson_13,Defazio_14}. In Algorithm~\ref{algo:SMF_VR}, we use $s$ and $t$ to denote the indices for outer and inner iterations respectively. At the beginning of each outer iteration $s$, we solve $n$ (regularized) least-square regression problems to learn the set of coefficient and outlier vectors $\{(\bh^{s,0}_j,\br^{s,0}_j)\}_{j=1}^n$ based on $\bW^{s,0}$. Based on the learned vectors, we compute $\bG_{s,0}$, which will be shown to be $\nabla g(\bW^{s,0})$ 
in Section~\ref{sec:conv_analysis}. 
In each inner iteration $t$, we solve a (random) mini-batch of regression problems, based on $\bW^{s,0}$ and $\bW^{s,t}$ respectively. The regression results, together with $\bG_{s,0}$, are used to calculate a search direction $\bV_{s,t}$, which acts as a variance-reduced gradient estimate of the loss function $g$ at $\bW^{s,t}$. 
The proximal operator of $\eta\psi+\delta_\calC$ either has a closed form~\cite{Yu_13} or can be obtained using Douglas-Rachford splitting or alternating direction method of multipliers~\cite{Parikh_14}, both in $O(dk)$ (arithmetic) operations. 
We choose the final dictionary via option I or II (in lines~\ref{line:opt_I} and \ref{line:opt_II} respectively).\footnote{In this work, the line numbers always refer to those in Algorithm~\ref{algo:SMF_VR}.} In practice, we choose the final dictionary as the last iterate $\bW^{s-1,m}$, i.e., option II. However, 
to streamline the analysis, we output the dictionary using option I. This is consistent with the many works on variance-reduced SGD algorithms, e.g., \cite{Johnson_13}. 

{\bf Learning coefficient and outlier vectors}.\ It can be seen that the computational complexity of Algorithm~\ref{algo:SMF_VR} crucially depends on solving the problem~\eqref{eq:loss_2}, which appears 
in lines~\ref{line:solve_hr1}, \ref{line:solve_hr2} and \ref{line:solve_hr3} of Algorithm~\ref{algo:SMF_VR}. To solve this problem, we leverage the {\em block successive upperbound minimization} framework~\cite{Raza_13}, by alternating between the following two steps, i.e., 
\begin{align}
\bh^+ &= \prox_{\varphi/L'+\delta_\calH}(\bh-(1/L')\bW^T(\bW\bh+\br-\by)), \;L'\defeq \norm{\bW}_2^2 \quad\mbox{and}\\
\br^+ &= \prox_{\phi+\delta_\calR}(\by-\bW\bh^+),
\end{align}
where $\bh^+$ and $\br^+$ denote the updated values of $\bh$ and $\br$ respectively (in one iteration). The proximal operators of $\varphi/L'+\delta_\calH$ and $\phi+\delta_\calR$ can be obtained in the same way as that of $\eta\psi+\delta_\calC$. 
Overall, the computational complexity of solving~\eqref{eq:loss_2} is $O(dk)$. 

\begin{remark}
In our implementation, we do not store $\{\bh^{s,0}_j,\br^{s,0}_j\}_{j=1}^n$ before computing $\bG_{s,0}$. Instead,\vspace{-.1cm} after obtaining $\bh^{s,0}_j$ and $\br^{s,0}_j$, we add them to the running sum in line~\ref{line:comput_G}, and then discard them. The same procedure applies to computing $\bV_{s,t}$. Thus the storage complexity of Algorithm~\ref{algo:SMF_VR} is $O(dk)$. This complexity is {\em linear} in the dimension of the optimization variable $\bW$, which has $dk$ entries. 
\end{remark}

\begin{algorithm}[t]
\caption{Stochastic Matrix Factorization via Variance Reduction} \label{algo:SMF_VR}
\begin{algorithmic}[1]
\State {\bf Input}: Number of inner iterations $m$, number of outer iterations $S$, mini-batch size $b$, step size $\eta$ 
\State {\bf Initialize} dictionary $\bW^{0,0}\in\calC$ 
\State {\bf For} $s=0,1,\ldots,S-1$
\State \quad Solve $(\bh^{s,0}_j,\br^{s,0}_j)\defeq \argmin_{\bh\in\calH,\br\in\calR} \tilell_2(\by_j,\bW^{s,0},\bh,\br)$, for any $j\!\in\![n]$. \label{line:solve_hr1}
\State \quad $\bG_{s,0} := \frac{1}{n}\sum_{j=1}^n(\bW^{s,0}\bh_j^{s,0}+\br_j^{s,0}-\by_j)(\bh_j^{s,0})^T$ \label{line:comput_G}
\State \quad{\bf For} $t = 0, 1, \ldots, m-1$ 
\State \quad\quad Uniformly randomly sample $\calB_{s,t}\subseteq[n]$ of size $b$ without replacement 
\State \quad\quad Solve $(\barbh^{s,t}_j,\barbr^{s,t}_j)\defeq \argmin_{\bh\in\calH,\br\in\calR} \tilell_2(\by_{j},\bW^{s,0},\bh,\br)$, for any $j\!\in\!\calB_{s,t}$.\label{line:solve_hr2}
\State \quad\quad Solve $(\bh^{s,t}_j,\br^{s,t}_j)\defeq \argmin_{\bh\in\calH,\br\in\calR} \tilell_2(\by_{j},\bW^{s,t},\bh,\br)$, for any $j\!\in\!\calB_{s,t}$. \label{line:solve_hr3}
\State \quad\quad $\bV_{s,t}:=\frac{1}{b}\sum_{j\in\calB_{s,t}}(\bW^{s,t}\bh_j^{s,t}+\br_j^{s,t}-\by_j)(\bh_j^{s,t})^T-(\bW^{s,0}\barbh_j^{s,t}+\barbr_j^{s,t}-\by_j)(\barbh_j^{s,t})^T+\bG_{s,0}$\label{line:V_st}
\State \quad\quad $\bW^{s,t+1}:=\prox_{\eta\psi+\delta_\calC}(\bW^{s,t}-\eta\bV_{s,t})$ \label{line:prox_W}
\State \quad {\bf End for}
\State \quad $\bW^{s+1,0}:= \bW^{s,m}$
\State {\bf End for} 
\State {\bf Option I}: Choose the final dictionary $\bW_{\rm final}$ uniformly randomly from $\{\bW^{s,t}\}_{s\in (S-1],t\in[m]}$\label{line:opt_I}
\State {\bf Option II}: Choose the final dictionary as $\bW^{s-1,m}$ \label{line:opt_II} 
\end{algorithmic}
\end{algorithm}\vspace{-.2cm}

\vspace{-.1cm}
\section{Convergence Analysis}\label{sec:conv_analysis}
\vspace{-.1cm}

{\bf Additional assumptions}.\ For analysis purposes, we make the following three additional assumptions:
\begin{enumerate}[label={\bf (A\arabic*)},ref=\textbf{(A\arabic*)},resume,topsep=-.0cm,itemsep=.0cm]
\item The data samples $\{\by_i\}_{i=1}^n$ lie in a compact set $\calY\subseteq\bbR^d$. \label{assump:data_bounded}
\item For any $\by\in\calY$ and $\bW\in\calC$, 
$\tilell_2(\by,\bW,\cdot,\cdot)$ is {\em jointly} strongly convex on $\calH\times \calR$. \label{assump:sc}
\item $\calC$ is compact; $\calH$ is compact {\em or} $\varphi$ is coercive on $\bbR^k$; $\calR$ is compact {\em or} $\varphi$ is coercive on $\bbR^d$. \label{assump:compact_set}
\end{enumerate}
\begin{remark}
Assumption \ref{assump:data_bounded} is natural since all real data are bounded. 
Assumption \ref{assump:sc} 
is common in the literature~\cite{Mairal_10,Zhao_17} and can be satisfied by adding Tikhonov regularizers (see~\cite[Remark~4]{Zhao_17} for a detailed discussion). 
Assumption \ref{assump:compact_set} is satisfied by all the Problems \ref{prob:ODL} to \ref{prob:ORNMF}, except in \ref{prob:OSSL}, \ref{prob:ORPCA}, \ref{prob:OMRMD} and \ref{prob:OLRR}, $\calC$ is not bounded. However, for each problem, it was proven that the sequence of basis matrices $\{\bW_i\}_{i\ge 0}$ generated by the corresponding algorithm in \cite{Feng_13}, \cite{Shen_14b} or \cite{Shen_16} belonged to a compact set almost surely. Thus the compactness of $\calC$ is a reasonable assumption. 
\end{remark}\vspace{-.15cm}

\subsection{Key Lemmas and Main Results}\label{sec:analysis_main_results}
\vspace{-.1cm}
We first present some definitions that are used in the statements of our results. We then state our key lemmas (Lemmas~\ref{lem:regularity} and \ref{lem:var_bound}) and main convergence theorems (Theorems~\ref{thm:main} and \ref{thm:inexact}), together with the derived sample and computational complexities (Corollary~\ref{cor:complexity}). The proof sketches of all the results are shown in Section~\ref{sec:proof_sketch}, with complete proofs deferred to Section~\ref{sec:proof_main_thm} to Section~\ref{sec:proof_var_bound}. 

{\bf Definitions}.\ 
Define a filtration $\{\calF_{s,t}\}_{s\in (S-1],t\in[m]}$ such that $\calF_{s,t}$ is the $\sigma$-algebra generated by the random sets $\{\calB_{s',t'}\}_{s'\in (s-1],t'\in[m]}\cup\{\calB_{s,t'}\}_{t'\in [t-1]}$. 
It is easy to see that $\{\bW^{s,t}\}_{s\in (S-1],t\in[m]}$ is adapted to to $\{\calF_{s,t}\}_{s\in (S-1],t\in[m]}$. 
Consider a composite function $h\!\defeq\! h_1\!+\!h_2$ defined on a Euclidean space $\calE$, where $h_1\!:\!\calE\!\to\!\bbR$ is differentiable and $h_2\!:\!\calE\!\to\!\barbbR_+$ is proper, closed and convex. 
For any $\eta\!>\!0$ and $\bx\!\in\!\calE$, define the (proximal) {\em gradient mapping}~\cite{Nest_04} of $h$ at $\bx$ with step size $\eta$, $\Gamma_{h,\eta}(\bx)\!\defeq\! (1/\eta)(\bx\!-\!\prox_{\eta h_2}(\bx\!-\!\eta\nabla h_1(\bx)))$. A point $\by\in\calE$ is {\em $\epsilon$-stationary} w.r.t.\ the problem $\min_{\bx\in\calE}\!h(\bx)$ and $\eta$ if 
$\normt{\Gamma_{h,\eta}(\by)}^2\le \epsilon$. Finally, define $\alpha(n,b)\defeq (n-b)/(b(n-1))$. 

\begin{lemma}[Smoothness of $\ell_2$]\label{lem:regularity}
Define $(\bh^*(\by,\bW),\br^*(\by,\bW))\defeq \argmin_{\bh\in\calH,\br\in\calR}\tilell_2(\by,\bW,\bh,\br)$. 
The loss function $\ell_2(\cdot,\cdot)$ in \eqref{eq:loss_2} is differentiable on $\bbR^d\times\bbR^{d\times k}$ and 
\begin{equation}
\nabla_\bW\ell_2(\by,\bW) = \left(\bW\bh^*(\by,\bW)+\br^*(\by,\bW)-\by\right)\bh^*(\by,\bW)^T. \label{eq:deriv_ell2}
\end{equation}
In addition, $(\by,\bW)\mapsto\nabla_\bW\ell_2(\by,\bW)$ is continuous on $\calY\times \calC$. Moreover, for any $\by\in\calY$, $\bW\mapsto\nabla_\bW\ell_2(\by,\bW)$ is Lipschitz on $\calC$ with a Lipschitz constant $L$ independent of $\by$. 
\end{lemma}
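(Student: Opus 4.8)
The loss $\ell_2(\by,\bW)$ is the value function of the inner minimization in \eqref{eq:loss_2}, so the statement is an exercise in parametric optimization, with Assumption~\ref{assump:sc} (joint strong convexity of $\tilell_2(\by,\bW,\cdot,\cdot)$) as the crucial hypothesis. First I would record that strong convexity makes the minimizer $(\bh^*(\by,\bW),\br^*(\by,\bW))$ unique, and that it exists and stays in a fixed compact set as $(\by,\bW)$ ranges over the compact set $\calY\times\calC$ (via Assumptions~\ref{assump:data_bounded} and \ref{assump:compact_set}: compactness of $\calY$, $\calC$, and either compactness or coercivity controlling $\bh$ and $\br$). Throughout, let $\mu>0$ be a uniform lower bound on the strong-convexity modulus over $\calY\times\calC$; this is legitimate because the strong convexity is produced by the fixed Tikhonov-type regularizers, so $\mu$ does not depend on $(\by,\bW)$.

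The workhorse, and the step I expect to be the main obstacle, is showing that the minimizer map $(\by,\bW)\mapsto(\bh^*,\br^*)$ is Lipschitz, uniformly in $\by\in\calY$. The plan is a two-point strong-convexity argument: writing $F_i(\bh,\br)\defeq\tilell_2(\by_i,\bW_i,\bh,\br)$ with minimizers $(\bh_i^*,\br_i^*)$, I add the strong-convexity inequality $F_1(\bh_2^*,\br_2^*)\ge F_1(\bh_1^*,\br_1^*)+\tfrac{\mu}{2}\normt{\Delta}^2$ and its symmetric counterpart, where $\Delta\defeq(\bh_1^*-\bh_2^*,\br_1^*-\br_2^*)$. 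Since $\varphi$ and $\phi$ do not depend on $(\by,\bW)$, they cancel, leaving $\mu\normt{\Delta}^2$ bounded by the difference of the quadratic data-fitting terms at the two minimizers; on the compact region where all relevant vectors live, this difference is $O\big((\normt{\by_1-\by_2}+\normt{\bW_1-\bW_2})\normt{\Delta}\big)$, giving $\normt{\Delta}\le(C/\mu)(\normt{\by_1-\by_2}+\normt{\bW_1-\bW_2})$ with $C$ uniform. The care here is entirely in checking that every constant is uniform over $\calY\times\calC$ and in handling the possibly $+\infty$-valued regularizers, which is why I let them cancel rather than differentiate them.

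Given continuity of the minimizer, differentiability of $\ell_2$ and formula \eqref{eq:deriv_ell2} follow from a self-contained squeeze argument, which I prefer to a black-box Danskin theorem precisely because $\varphi,\phi$ may be extended-valued. Fixing $(\by_0,\bW_0)$ with minimizer $(\bh_0,\br_0)$, the suboptimality bounds $\ell_2(\by,\bW)\le\tilell_2(\by,\bW,\bh_0,\br_0)$ and $\ell_2(\by_0,\bW_0)\le\tilell_2(\by_0,\bW_0,\bh^*,\br^*)$ sandwich the increment $\ell_2(\by,\bW)-\ell_2(\by_0,\bW_0)$ between two smooth quadratic differences in which the regularizer terms cancel. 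Both bounds share the first-order term $\lranglet{\nabla_{(\by,\bW)}\tilell_2(\by_0,\bW_0,\bh_0,\br_0)}{(\by-\by_0,\bW-\bW_0)}$ once I invoke $(\bh^*,\br^*)\to(\bh_0,\br_0)$ and continuity of the quadratic's gradient, so the squeeze yields joint differentiability with $\nabla_\bW\ell_2(\by,\bW)=(\bW\bh^*+\br^*-\by)(\bh^*)^T$. Continuity of $(\by,\bW)\mapsto\nabla_\bW\ell_2$ on $\calY\times\calC$ is then immediate, since this gradient is the composition of the jointly continuous map $(\by,\bW,\bh,\br)\mapsto(\bW\bh+\br-\by)\bh^T$ with the continuous minimizer map.

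Finally, the uniform Lipschitz bound on $\bW\mapsto\nabla_\bW\ell_2(\by,\bW)$ comes from \eqref{eq:deriv_ell2}: it is a product $FG^T$ with $F\defeq\bW\bh^*+\br^*-\by$ and $G\defeq\bh^*$. On $\calY\times\calC$ both factors are \emph{bounded} (minimizers, $\bW$ and $\by$ all lie in compact sets) and \emph{Lipschitz in $\bW$ uniformly in $\by$} (by the minimizer estimate above, plus the fact that $\bW\mapsto\bW\bh^*$ is a product of bounded Lipschitz maps). The identity $F_1G_1^T-F_2G_2^T=F_1(G_1-G_2)^T+(F_1-F_2)G_2^T$ then gives $\normt{\nabla_\bW\ell_2(\by,\bW_1)-\nabla_\bW\ell_2(\by,\bW_2)}\le L\normt{\bW_1-\bW_2}$ with $L$ assembled from these uniform bounds and Lipschitz constants, hence independent of $\by$. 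The only genuine work is the uniform minimizer-Lipschitz estimate; the rest is bookkeeping with products of bounded Lipschitz functions.
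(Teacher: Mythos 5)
Your proposal is correct, and its mathematical core coincides with the paper's proof: the decisive step in both is the two-point strong-convexity estimate for the minimizer map — summing $F_1(\bh_2^*,\br_2^*)\ge F_1(\bh_1^*,\br_1^*)+\tfrac{\mu}{2}\normt{\Delta}^2$ with its symmetric counterpart so that the regularizers cancel, then upper-bounding the resulting difference $D=F_1-F_2$ of quadratic data-fit terms by $O\big((\normt{\by_1-\by_2}+\normt{\bW_1-\bW_2})\normt{\Delta}\big)$ on the compact set (the paper isolates this upper bound as Lemma~\ref{lem:Lips_b}), and the final assembly of $L$ via the product decomposition of $(\bW\bh^*+\br^*-\by)(\bh^*)^T$ is the same bookkeeping. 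The one place you genuinely diverge is the differentiability step: the paper restricts to the compact sets $\calH'\times\calR'$ and then invokes Danskin's theorem together with the Maximum theorem as black boxes to obtain \eqref{eq:deriv_ell2} and continuity of the gradient, whereas you inline a sandwich argument using the two suboptimality inequalities and continuity of the minimizer. Your route is more self-contained and sidesteps the (mild) awkwardness of applying a continuity-hypothesis Danskin statement to extended-valued $\varphi,\phi$; the paper's route is shorter on the page at the cost of citing two external theorems. You are also slightly more careful than the paper in one respect: you explicitly justify that the strong-convexity modulus $\mu$ can be taken uniform over $\calY\times\calC$, which the paper's proof uses implicitly when it introduces the constant $\nu$.
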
\vspace{-.0cm}

\begin{lemma}[Variance bound of $\bV_{s,t}$]\label{lem:var_bound}
In Algorithm~\ref{algo:SMF_VR}, $\bbE_{\calB_{s,t}}[\bV_{s,t}\vert\calF_{s,t}]=\nabla g(\bW^{s,t})$ and 
\begin{align}
\bbE_{\calB_{s,t}}[\normt{\bV_{s,t}-\nabla g(\bW^{s,t})}^2\vert\calF_{s,t}]\le \alpha(n,b)L^2\normt{\bW^{s,t}-\bW^{s,0}}^2, \;\forall s\ge 0,\;\forall t\in(m]. \label{eq:var_bound}
\end{align}
\end{lemma}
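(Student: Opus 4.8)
The plan is to first rewrite the search direction $\bV_{s,t}$ in a form amenable to a standard sampling-without-replacement argument. Using the gradient formula~\eqref{eq:deriv_ell2} from Lemma~\ref{lem:regularity} together with the definitions of the minimizers $(\bh_j^{s,t},\br_j^{s,t})$, $(\barbh_j^{s,t},\barbr_j^{s,t})$ and $(\bh_j^{s,0},\br_j^{s,0})$, I would identify each rank-one term in lines~\ref{line:comput_G} and~\ref{line:V_st} as a gradient $\nabla_\bW\ell_2(\by_j,\cdot)$. In particular this gives $\bG_{s,0}=\frac{1}{n}\sum_{j=1}^n\nabla_\bW\ell_2(\by_j,\bW^{s,0})=\nabla g(\bW^{s,0})$, and writing $\bU_j\defeq\nabla_\bW\ell_2(\by_j,\bW^{s,t})-\nabla_\bW\ell_2(\by_j,\bW^{s,0})$ yields the compact form $\bV_{s,t}=\frac{1}{b}\sum_{j\in\calB_{s,t}}\bU_j+\nabla g(\bW^{s,0})$, where, conditioned on $\calF_{s,t}$, the iterates $\bW^{s,t},\bW^{s,0}$ (hence all the $\bU_j$) are deterministic and the only randomness is $\calB_{s,t}$.

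For unbiasedness I would use that, under uniform sampling of a size-$b$ subset without replacement, each index satisfies $\Pr(j\in\calB_{s,t})=b/n$, so $\bbE_{\calB_{s,t}}[\frac{1}{b}\sum_{j\in\calB_{s,t}}\bU_j\vert\calF_{s,t}]=\frac{1}{n}\sum_{j=1}^n\bU_j=\nabla g(\bW^{s,t})-\nabla g(\bW^{s,0})$. Adding $\nabla g(\bW^{s,0})$ immediately gives $\bbE_{\calB_{s,t}}[\bV_{s,t}\vert\calF_{s,t}]=\nabla g(\bW^{s,t})$.

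For the variance bound I would set $\bar\bU\defeq\frac{1}{n}\sum_{j=1}^n\bU_j$, so that $\bV_{s,t}-\nabla g(\bW^{s,t})=\frac{1}{b}\sum_{j\in\calB_{s,t}}\bU_j-\bar\bU$, reducing the claim to the variance of a sample mean under sampling without replacement. Introducing indicators $\xi_j\defeq\bbone[j\in\calB_{s,t}]$, I would expand $\bbE[\normt{\frac{1}{b}\sum_j\xi_j\bU_j-\bar\bU}^2\vert\calF_{s,t}]=\frac{1}{b^2}\sum_{j,k}\cov(\xi_j,\xi_k)\lranglet{\bU_j}{\bU_k}$ and compute the joint probabilities $\Pr(j\in\calB_{s,t})=b/n$ and $\Pr(j,k\in\calB_{s,t})=b(b-1)/(n(n-1))$, which give $\var(\xi_j)=b(n-b)/n^2$ and $\cov(\xi_j,\xi_k)=-b(n-b)/(n^2(n-1))$ for $j\neq k$. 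Substituting these, and using the identities $\sum_{j\neq k}\lranglet{\bU_j}{\bU_k}=n^2\normt{\bar\bU}^2-\sum_j\normt{\bU_j}^2$ and $\sum_j\normt{\bU_j}^2-n\normt{\bar\bU}^2=\sum_j\normt{\bU_j-\bar\bU}^2$, the dependence on $\normt{\bar\bU}^2$ cancels and the expression collapses to $\alpha(n,b)\cdot\frac{1}{n}\sum_j\normt{\bU_j-\bar\bU}^2$. Finally, bounding the population variance by the second moment via $\frac{1}{n}\sum_j\normt{\bU_j-\bar\bU}^2\le\frac{1}{n}\sum_j\normt{\bU_j}^2$ and invoking the Lipschitz estimate $\normt{\bU_j}\le L\normt{\bW^{s,t}-\bW^{s,0}}$ from Lemma~\ref{lem:regularity} yields~\eqref{eq:var_bound}.

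The main obstacle is the exact covariance computation for sampling without replacement and the algebraic cancellation producing the precise finite-population correction factor $\alpha(n,b)=(n-b)/(b(n-1))$. With replacement the analogous bound would carry only a $1/b$ factor, so care is needed to track the negative off-diagonal covariance $-1/(n-1)$ and to verify that the $\normt{\bar\bU}^2$ terms cancel \emph{exactly} rather than merely being discarded; this is what tightens the naive second-moment bound into the stated variance estimate.
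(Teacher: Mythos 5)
Your proposal is correct and follows essentially the same route as the paper: rewrite each rank-one term as $\nabla_\bW\ell_2(\by_j,\cdot)$ via Lemma~\ref{lem:regularity}, reduce to the variance of a without-replacement sample mean, and finish with the Lipschitz bound. The only difference is that you derive the finite-population variance identity (with the exact factor $\alpha(n,b)$ and the cancellation of the $\normt{\bar\bU}^2$ terms) from the indicator covariances, whereas the paper simply invokes the cited sampling lemma (Lemma~\ref{lem:uni_samp_wo}) as a black box; your computation is a correct proof of that lemma.
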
\vspace{-.1cm}

{\bf Measuring convergence rate}.\ 
Since the loss function $\ell$ in~\eqref{eq:obj} is nonconvex (due to bilinearity), obtaining global minima of $f$ is in general out-of-reach. Thus in almost all the previous works~\cite{Mairal_10,Spre_15,Guan_12,Feng_13,Shen_14b,Shen_16,Doh_16,Zhao_17,Zhao_17b}, convergence to {\em stationary points} of~\eqref{eq:obj} was studied instead. Define $f'\defeq f+\delta_\calC$. Following~\cite{Ghad_16a,Reddi_16}, for the sequence of (random) dictionaries $\{\bW^{s,t}\}_{s\in (S-1], t\in[m]}$ generated in Algorithm~\ref{algo:SMF_VR}, we propose to use $\bbE[\normt{\Gamma_{f',\eta}(\bW^{s,t})}^2]$ to measure its convergence rate (to stationary points).
To see the validity of this measure, define the directional derivative of $f$ at $\bW\in\calC$ along $\bD$, i.e., $\barnabla f(\bW;\bD)\defeq \lim_{\delta\downarrow 0} (f(\bW+\delta \bD)-f(\bW))/{\delta}$.  
In previous works~\cite{Mairal_10,Spre_15,Guan_12,Feng_13,Shen_14b,Shen_16,Doh_16,Zhao_17,Zhao_17b}, $\bW'\in\calC$ is characterized as a stationary point of~\eqref{eq:obj} if and only if for any $\bW\in\calC$, $\barnabla f(\bW';\bW-\bW')\ge 0$. 
It can be shown that this condition is equivalent to $\Gamma_{f',\eta}(\bW')=\vecz$, for any $\eta>0$. In particular, if $\psi\equiv0$, we can verify that both conditions are equivalent to the condition that $-\nabla g(\bW')$ lies in 
the {\em normal cone} of $\calC$ at $\bW'$.   

Equipped with the proper convergence measure, we now state our main convergence theorem. 

\begin{theorem}\label{thm:main}
Assume that \ref{assump:regularizers} to \ref{assump:compact_set} hold. In Algorithm~\ref{algo:SMF_VR}, if we choose $\eta=1/(\theta L)$ for some $\theta>2$ and $m$ and $b$ to satisfy $\theta(\theta-1)\ge 2m(m+1)\alpha(n,b)$, then
\begin{equation}
\bbE[\normt{\Gamma_{f',\eta}(\bW_{\rm final})}^2] \le \left(\frac{f(\bW^{0,0})-f^*}{\eta\left({1}/{2}-\eta L\right)}\right)\frac{1}{mS},\quad\mbox{where}\quad f^*\defeq\min_{\bW\in\calC} f(\bW).  \label{eq:main_thm}
\end{equation}
\end{theorem}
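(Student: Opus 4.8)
The plan is to follow the template for nonconvex variance-reduced proximal methods (cf.~\cite{Reddi_16}), adapted to the matrix-factorization setting where Lemma~\ref{lem:var_bound} bounds the variance by the \emph{snapshot distance} $\normt{\bW^{s,t}-\bW^{s,0}}$ rather than by a uniform constant. First I would derive a one-step descent inequality for the inner update $\bW^{s,t+1}=\prox_{\eta\psi+\delta_\calC}(\bW^{s,t}-\eta\bV_{s,t})$. Writing $\Delta_{s,t}\defeq\bW^{s,t+1}-\bW^{s,t}$, the $L$-smoothness of $g$ (Lemma~\ref{lem:regularity}) gives $g(\bW^{s,t+1})\le g(\bW^{s,t})+\lrangle{\nabla g(\bW^{s,t})}{\Delta_{s,t}}+\tfrac{L}{2}\normt{\Delta_{s,t}}^2$, and the optimality together with the $(1/\eta)$-strong convexity of the proximal subproblem yields $\lrangle{\bV_{s,t}}{\Delta_{s,t}}\le\psi(\bW^{s,t})-\psi(\bW^{s,t+1})-\tfrac{1}{2\eta}\normt{\Delta_{s,t}}^2$. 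Adding these and writing $\nabla g=\bV_{s,t}+(\nabla g-\bV_{s,t})$ gives, with $f=g+\psi$,
\[
f(\bW^{s,t+1})\le f(\bW^{s,t})-\Big(\tfrac{1}{2\eta}-\tfrac{L}{2}\Big)\normt{\Delta_{s,t}}^2+\lrangle{\nabla g(\bW^{s,t})-\bV_{s,t}}{\Delta_{s,t}}.
\]

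The cleanest way to handle the error term is to compare against the \emph{true} gradient-mapping step $\bD_{s,t}\defeq \eta\,\Gamma_{f',\eta}(\bW^{s,t})$, which is $\calF_{s,t}$-measurable. Splitting $\lrangle{\nabla g-\bV_{s,t}}{\Delta_{s,t}}=\lrangle{\nabla g-\bV_{s,t}}{-\bD_{s,t}}+\lrangle{\nabla g-\bV_{s,t}}{\Delta_{s,t}+\bD_{s,t}}$, the first term has zero conditional mean because $\bbE_{\calB_{s,t}}[\bV_{s,t}\mid\calF_{s,t}]=\nabla g(\bW^{s,t})$ (Lemma~\ref{lem:var_bound}), while nonexpansiveness of $\prox_{\eta\psi+\delta_\calC}$ bounds $\normt{\Delta_{s,t}+\bD_{s,t}}\le\eta\normt{\nabla g(\bW^{s,t})-\bV_{s,t}}$, so the second term is at most $\eta\normt{\nabla g-\bV_{s,t}}^2$. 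Taking $\bbE[\,\cdot\mid\calF_{s,t}]$ and invoking the variance bound converts this into the recursion
\[
\bbE[f(\bW^{s,t+1})]\le \bbE[f(\bW^{s,t})]-\Big(\tfrac{1}{2\eta}-\tfrac{L}{2}\Big)\bbE[\normt{\Delta_{s,t}}^2]+\eta\,\alpha(n,b)L^2\,\bbE[\normt{\bW^{s,t}-\bW^{s,0}}^2].
\]
The snapshot-distance term is the crux. Since $\bW^{s,t}-\bW^{s,0}=\sum_{\tau<t}\Delta_{s,\tau}$, Cauchy--Schwarz gives $\normt{\bW^{s,t}-\bW^{s,0}}^2\le t\sum_{\tau<t}\normt{\Delta_{s,\tau}}^2$, whence summing over an epoch yields $\sum_{t}\normt{\bW^{s,t}-\bW^{s,0}}^2\le\tfrac{m(m+1)}{2}\sum_{t}\normt{\Delta_{s,t}}^2$; this quadratic-in-$m$ factor is precisely what the hypothesis $\theta(\theta-1)\ge 2m(m+1)\alpha(n,b)$ is meant to absorb.

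I would then sum the recursion over $t=0,\dots,m-1$ within epoch $s$ (the $f$-terms telescope to $\bbE[f(\bW^{s,m})]-\bbE[f(\bW^{s,0})]$), substitute the Cauchy--Schwarz accumulation, and use $\eta=1/(\theta L)$ to show the net coefficient of $\sum_t\bbE[\normt{\Delta_{s,t}}^2]$ is positive and bounded below by a multiple of $\eta(\tfrac12-\eta L)/\eta^2$ exactly under $\theta(\theta-1)\ge 2m(m+1)\alpha(n,b)$. Because $\bW^{s+1,0}=\bW^{s,m}$, these epoch inequalities telescope across $s=0,\dots,S-1$, giving $\sum_{s,t}\bbE[\normt{\Delta_{s,t}}^2]$ controlled by $f(\bW^{0,0})-f^*$. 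Finally I would pass from $\Delta_{s,t}=-\eta\tilde\Gamma_{s,t}$ (the inexact mapping) to the true $\Gamma_{f',\eta}(\bW^{s,t})$ via $\normt{\Gamma_{f',\eta}(\bW^{s,t})}^2\le 2\normt{\tilde\Gamma_{s,t}}^2+2\normt{\nabla g(\bW^{s,t})-\bV_{s,t}}^2$, reusing Lemma~\ref{lem:var_bound} and the same snapshot-distance bound to keep the right-hand side proportional to $f(\bW^{0,0})-f^*$. Since Option~I samples $\bW_{\rm final}$ uniformly from the $mS$ iterates, $\bbE[\normt{\Gamma_{f',\eta}(\bW_{\rm final})}^2]=\tfrac{1}{mS}\sum_{s,t}\bbE[\normt{\Gamma_{f',\eta}(\bW^{s,t})}^2]$, and dividing by $mS\,\eta(\tfrac12-\eta L)$ delivers \eqref{eq:main_thm}.

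The main obstacle is the constant bookkeeping in the epoch-level accumulation: one must track the Cauchy--Schwarz factor, the factor-of-two from the inexact-to-exact gradient-mapping comparison, and the substitution $\eta=1/(\theta L)$ simultaneously, so that the per-epoch coefficient stays positive and the final constant collapses to exactly $\eta(\tfrac12-\eta L)$ under $\theta(\theta-1)\ge 2m(m+1)\alpha(n,b)$. The subtlety is that the snapshot-distance term appears twice---once in the descent recursion and once in the gradient-mapping conversion---so the quadratic-in-$m$ accumulation must be tamed in both places at once; everything else is supplied directly by Lemmas~\ref{lem:regularity} and~\ref{lem:var_bound}.
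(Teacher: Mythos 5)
Your proposal is essentially correct and would deliver the $O(1/(mS))$ rate, but it takes a genuinely different route from the paper's proof. The paper never sums the snapshot distances by Cauchy--Schwarz and never converts from the stochastic step $\Delta_{s,t}$ back to the true gradient mapping at the end. Instead, it (i) applies the composite descent lemma (Lemma~\ref{lem:Lips_bound}) \emph{twice} --- once to the stochastic update with reference point $\tilbW^{s,t+1}=\prox_{\eta\psi'}(\bW^{s,t}-\eta\nabla g(\bW^{s,t}))$ and once to the exact prox-gradient update with reference point $\bW^{s,t}$ --- and adds the two, so that $\normt{\tilbW^{s,t+1}-\bW^{s,t}}^2=\eta^2\normt{\Gamma_{f',\eta}(\bW^{s,t})}^2$ appears \emph{directly} in the one-step inequality with coefficient $L-1/(2\eta)<0$; and (ii) absorbs the snapshot-distance variance term through a per-iteration Lyapunov function $\hatf_{s,t}(\bW)=f'(\bW)+\zeta_t\normt{\bW-\bW^{s,0}}^2$ whose coefficients obey the backward recursion $\zeta_t=(1+1/m)\zeta_{t+1}+(\eta L^2/2)\alpha(n,b)$, $\zeta_m=0$, with $\zeta_0\le m\,\alpha(n,b)\,\eta L^2$. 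This is exactly where the hypothesis $\theta(\theta-1)\ge 2m(m+1)\alpha(n,b)$ enters in the paper, and it yields the clean telescoping step $\bbE[\hatf_{s,t+1}(\bW^{s,t+1})\vert\calF_{s,t}]\le\hatf_{s,t}(\bW^{s,t})+\eta(\eta L-1/2)\normt{\Gamma_{f',\eta}(\bW^{s,t})}^2$, hence the exact constant $\eta(1/2-\eta L)$ in \eqref{eq:main_thm}. Your epoch-level Cauchy--Schwarz accumulation plays the same role as the $\zeta_t$ recursion and is arguably more elementary, and your martingale split $\lrangle{\nabla g-\bV_{s,t}}{-\bD_{s,t}}+\lrangle{\nabla g-\bV_{s,t}}{\Delta_{s,t}+\bD_{s,t}}$ together with prox nonexpansiveness is a perfectly sound substitute for the paper's Young-inequality bound on $\lrangle{\bW^{s,t+1}-\tilbW^{s,t+1}}{\nabla g(\bW^{s,t})-\bV_{s,t}}$.

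The one caveat is the constant. Because your descent recursion controls $\sum_t\bbE[\normt{\Delta_{s,t}}^2]$ rather than $\sum_t\bbE[\normt{\Gamma_{f',\eta}(\bW^{s,t})}^2]$, the final conversion $\normt{\Gamma_{f',\eta}(\bW^{s,t})}^2\le 2\normt{\tilde\Gamma_{s,t}}^2+2\normt{\nabla g(\bW^{s,t})-\bV_{s,t}}^2$ injects a factor of $2$ and a second copy of the accumulated variance term; under the stated hypotheses these are of the right order, but they will not cancel to give precisely $1/(\eta(1/2-\eta L))$. So as written your argument proves $\bbE[\normt{\Gamma_{f',\eta}(\bW_{\rm final})}^2]=O\bigl((f(\bW^{0,0})-f^*)/(mS)\bigr)$ with a larger (though explicit) constant, not the exact inequality \eqref{eq:main_thm}. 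If you want the paper's constant, the fix is to make the gradient mapping the primary quantity in the one-step inequality --- i.e., compare against $\tilbW^{s,t+1}$ \emph{inside} the descent lemma as the paper does --- rather than recovering it a posteriori from $\Delta_{s,t}$.
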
\vspace{-.2cm}
Note that $mS$ denotes the total number of inner iterations. Thus \eqref{eq:main_thm} states a sublinear convergence rate of $\{\bW^{s,t}\}_{s\ge 0, t\in[m]}$ towards the set of stationary points of~\eqref{eq:obj}. In addition, since $\alpha(n,b)\le 1/b$, we can choose $\theta$, $m$ and $b$ such that $m\!\le\!\! \sqrt{b\theta(\theta\!-\!1)/2}\!-\!1\,(\theta\!>\!2)$ to satisfy the conditions in Theorem~\ref{thm:main}.\vspace{.0cm} 

In practice, for efficiency considerations, it is important that we learn the coefficient and outlier vectors (in lines~\ref{line:solve_hr1}, \ref{line:solve_hr2} and \ref{line:solve_hr3}) {\em approximately}, by only finding {\em inexact} solutions to the subproblem~\eqref{eq:loss_2}. The errors in these inexact solutions will be propagated to the variance-reduced gradient $\bV_{s,t}$ as a whole. In the next theorem, we show that if the (additive) error $\bE_{s,t}$ in $\bV_{s,t}$ is properly controlled (i.e., $\normt{\bE_{s,t}}$ decreases at a certain rate), then  we can achieve the same convergence rate as in Theorem~\ref{thm:main}. Note that although inexact analyses have been done for {\em batch}~\cite{Schmidt_11} and {\em incremental}~\cite{Sra_12} proximal gradient (PG) algorithms, our result is the first one for a {\em stochastic} (variance-reduced) PG algorithm. 

\begin{theorem}[Inexact Analysis]\label{thm:inexact}
Assume that \ref{assump:regularizers} to \ref{assump:compact_set} hold. In Algorithm~\ref{algo:SMF_VR}, if $\eta=1/(\theta L)$ $(\theta\!>\!2)$, $\theta(\theta-1)\!\ge\! 4m(m+1)\alpha(n,b)$ and $\normt{\bE_{s,t}}\!=(ms+t)^{-(1/2+\tau)}$ for any $\tau>0$, then $\bbE[\normt{\Gamma_{f',\eta}(\bW_{\rm final})}^2] \!=\!O(1/(mS))$. 
\end{theorem}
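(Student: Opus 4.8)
The plan is to mirror the proof of Theorem~\ref{thm:main}, carrying the additive error $\bE_{s,t}$ through every step and showing that its accumulated contribution is summable and hence negligible after dividing by $mS$. Write the inexact update as $\bW^{s,t+1}=\prox_{\eta\psi+\delta_\calC}(\bW^{s,t}-\eta(\bV_{s,t}+\bE_{s,t}))$, and introduce the realized (inexact) gradient mapping $\bar\bG_{s,t}\defeq(1/\eta)(\bW^{s,t}-\bW^{s,t+1})$. First I would combine the $L$-smoothness of $g$ (Lemma~\ref{lem:regularity}) with the optimality inequality for the proximal step to obtain a one-step descent bound of the form $f(\bW^{s,t+1})\le f(\bW^{s,t})+\langle\nabla g(\bW^{s,t})-\bV_{s,t}-\bE_{s,t},\bW^{s,t+1}-\bW^{s,t}\rangle+(L/2-1/\eta)\normt{\bW^{s,t+1}-\bW^{s,t}}^2$, which is identical to the exact analysis except for the extra error term $-\langle\bE_{s,t},\bW^{s,t+1}-\bW^{s,t}\rangle$.

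Next I would split the inner product into the stochastic part $\langle\nabla g(\bW^{s,t})-\bV_{s,t},\,\cdot\,\rangle$ and the error part $-\langle\bE_{s,t},\,\cdot\,\rangle$. The stochastic part is handled exactly as in Theorem~\ref{thm:main}: taking $\bbE[\,\cdot\,\vert\calF_{s,t}]$ and invoking the unbiasedness and variance bound of Lemma~\ref{lem:var_bound} produces a term proportional to $\alpha(n,b)L^2\normt{\bW^{s,t}-\bW^{s,0}}^2$. The error part is controlled purely by its norm via Young's inequality, $\abst{\langle\bE_{s,t},\bW^{s,t+1}-\bW^{s,t}\rangle}\le(\eta/2)\normt{\bE_{s,t}}^2+(1/(2\eta))\normt{\bW^{s,t+1}-\bW^{s,t}}^2$, contributing one extra $\normt{\bE_{s,t}}^2$ term and one extra multiple of $\normt{\bar\bG_{s,t}}^2$. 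Absorbing this additional $\normt{\bar\bG_{s,t}}^2$ into the net descent coefficient is exactly why the parameter condition must be tightened from $\theta(\theta-1)\ge 2m(m+1)\alpha(n,b)$ to $\theta(\theta-1)\ge 4m(m+1)\alpha(n,b)$.

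Then I would deploy the same Lyapunov function $\bbE[f(\bW^{s,t})+c_t\normt{\bW^{s,t}-\bW^{s,0}}^2]$ as for Theorem~\ref{thm:main}, with a coefficient sequence $\{c_t\}$ satisfying $c_m=0$ and the recursion that cancels the variance term, and telescope over $t\in(m]$ and $s\in(S-1]$. The accumulated error enters the numerator as a constant multiple of $\sum_{s,t}\normt{\bE_{s,t}}^2=\sum_{s,t}(ms+t)^{-(1+2\tau)}$; since $1+2\tau>1$, this series is bounded by a finite constant independent of $S$, so the numerator stays $f(\bW^{0,0})-f^*+O(1)=O(1)$. Dividing by $mS$, and passing from the realized mapping back to the true gradient mapping through the prox nonexpansiveness bound $\normt{\Gamma_{f',\eta}(\bW^{s,t})}^2\le 2\normt{\bar\bG_{s,t}}^2+2\normt{\nabla g(\bW^{s,t})-\bV_{s,t}-\bE_{s,t}}^2$ (whose residual again splits into a variance term and a controlled $\normt{\bE_{s,t}}^2$ term), yields $\bbE[\normt{\Gamma_{f',\eta}(\bW_{\rm final})}^2]=O(1/(mS))$.

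The main obstacle is the bookkeeping of the additional $\normt{\bar\bG_{s,t}}^2$ generated by the error coupling, appearing both in the per-step descent and in the conversion to the true gradient mapping: I must verify that every such term can be absorbed with room to spare, which is what forces the factor-of-two strengthening of the step-size/batch condition. The delicate point is that $\bE_{s,t}$ is neither assumed unbiased nor $\calF_{s,t}$-measurable, so it cannot be folded into the conditional-expectation machinery of Lemma~\ref{lem:var_bound}; it must be dominated solely through $\normt{\bE_{s,t}}$, and the prescribed $(1/2+\tau)$ decay is precisely the threshold that makes the squared-error series $\sum_{s,t}(ms+t)^{-(1+2\tau)}$ converge and thus keeps the numerator bounded uniformly in $S$.
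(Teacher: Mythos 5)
Your proposal is correct in substance and reaches the stated bound, but it takes a genuinely different route from the paper's at one structural point. The paper never introduces the realized mapping $\barbG_{s,t}=(1/\eta)(\bW^{s,t}-\bW^{s,t+1})$: instead it applies the descent lemma (Lemma~\ref{lem:Lips_bound}) to the inexact step with the comparison point $\bZ_3=\tilbW^{s,t+1}=\prox_{\eta\psi'}(\bW^{s,t}-\eta\nabla g(\bW^{s,t}))$ and adds the analogous inequality for the exact step, so that the inner product is $\lrangle{\bW^{s,t+1}-\tilbW^{s,t+1}}{\nabla g(\bW^{s,t})-\hatbV_{s,t}}$ and the true gradient mapping $\Gamma_{f',\eta}(\bW^{s,t})=(1/\eta)(\bW^{s,t}-\tilbW^{s,t+1})$ appears natively in the one-step bound with the negative coefficient $L-1/(2\eta)$; no end-of-proof conversion is needed. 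Your route — descent against $\bW^{s,t+1}-\bW^{s,t}$, then converting via $\normt{\Gamma_{f',\eta}(\bW^{s,t})}^2\le 2\normt{\barbG_{s,t}}^2+2\normt{\nabla g(\bW^{s,t})-\bV_{s,t}-\bE_{s,t}}^2$ — does work, but it forces you to additionally control $\sum_{s,t}\alpha(n,b)L^2\,\bbE[\normt{\bW^{s,t}-\bW^{s,0}}^2]$ coming from the conversion's variance residual; this requires the extra (standard but unstated) step $\normt{\bW^{s,t}-\bW^{s,0}}^2\le m\eta^2\sum_{t'<m}\normt{\barbG_{s,t'}}^2$, after which the condition $\theta(\theta-1)\ge 4m(m+1)\alpha(n,b)$ makes the resulting coefficient at most $1/4$ and the term absorbable. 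Everything else matches the paper: the Young's-inequality decoupling of $\bE_{s,t}$ (with the doubled coefficient $\eta\normt{\nabla g(\bW^{s,t})-\bV_{s,t}}^2+\eta\normt{\bE_{s,t}}^2$, which is exactly what tightens the parameter condition and doubles the $\zeta_t$ recursion), the same Lyapunov telescoping, and the same observation that $\sum_{s,t}\normt{\bE_{s,t}}^2=\sum_{s,t}(ms+t)^{-(1+2\tau)}$ is bounded uniformly in $S$. Your closing remark that $\bE_{s,t}$ cannot be folded into the conditional-expectation machinery and must be dominated through its norm alone is precisely the paper's treatment. The paper's choice of comparison point buys a shorter proof for this particular stationarity measure; yours is the more generic template and generalizes more readily to other measures, at the cost of the extra bookkeeping above.
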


\begin{remark}[Bound $\normt{\bE_{s,t}}$]
Denote an approximate solution of $\argmin_{\bh\in\calH,\br\in\calR} \tilell_2(\by_j,\bW^{s,t},\bh,\br)$\vspace{-.08cm} as $(\hatbh^{s,t}_j,\hatbr^{s,t}_j)$. From the definition of $\bV_{s,t}$ and the compactness of $\calH$ and $\calR$, we see that $\normt{\bV_{s,t}}$ can\vspace{-.08cm} be bounded by the learning errors in coefficient and outlier vectors, i.e.,\vspace{-.05cm} $\{(\normt{\bh^{s,t}_j-\hatbh^{s,t}_j},\normt{\br^{s,t}_j-\hatbr^{s,t}_j})\}_{j\in\calB_{s,t}}$.\vspace{-.05cm} For each $j\!\in\![n]$, these learning errors can 
 be further bounded by the (infimum of) norm of (sub-)gradient of $\tilell_2(\by_j,\bW^{s,t},\cdot,\cdot)$ at $(\hatbh^{s,t}_j,\hatbr^{s,t}_j)$, due to the first-order optimality conditions.\vspace{-.05cm} Therefore, we are able to bound $\normt{\bE_{s,t}}$ in terms of the approximate solutions $\{(\hatbh^{s,t}_j,\hatbr^{s,t}_j)\}_{j=1}^n$. 
\end{remark}

Finally, based on the convergence rates in Theorems~\ref{thm:main} and \ref{thm:inexact}, we are able to derive the sample and computational complexities for Algorithm~\ref{algo:SMF_VR} to attain an $\epsilon$-stationary point (in expectation).

\begin{corollary}[Sample and Computational Complexities]\label{cor:complexity}
For any $\epsilon\!>\!0$ and $\theta\!>\!2$, 
the sample and computational complexities for $\bW_{\rm final}$ in Algorithm~\ref{algo:SMF_VR} to be an $\epsilon$-stationary point of~\eqref{eq:obj} in expectation (i.e., $\bbE[\normt{\Gamma_{f',\eta}(\bW_{\rm final})}^2]\le \epsilon$) are $O(n^{2/3}/\epsilon)$ and $O(n^{2/3}dk/\epsilon)$ respectively.
\end{corollary}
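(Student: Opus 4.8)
The plan is to feed the non-asymptotic rate of Theorem~\ref{thm:main} into a cost count and then optimize the free parameters $m$, $b$, and $S$, subject to the admissibility constraint $\theta(\theta-1)\ge 2m(m+1)\alpha(n,b)$.

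\textbf{Iteration budget.} First I would note that the right-hand side of~\eqref{eq:main_thm} has the form $C/(mS)$, where $C\defeq (f(\bW^{0,0})-f^*)/(\eta(1/2-\eta L))$ depends only on $\theta$ once $\eta=1/(\theta L)$ is fixed. Hence $\bbE[\normt{\Gamma_{f',\eta}(\bW_{\rm final})}^2]\le\epsilon$ holds as soon as $mS\ge C/\epsilon$, so it suffices to enforce $mS=\Theta(1/\epsilon)$, i.e.\ $S=\Theta(1/(m\epsilon))$.

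\textbf{Sample count and parameter optimization.} I would count one regression subproblem (each involving a single data sample) as one sample access. Each outer iteration $s$ first solves $n$ subproblems in line~\ref{line:solve_hr1} to form $\bG_{s,0}$, and each of its $m$ inner iterations solves $2b$ subproblems (lines~\ref{line:solve_hr2} and~\ref{line:solve_hr3}); thus one outer iteration processes $n+2mb$ samples and the total sample complexity is $S(n+2mb)$. Because $\alpha(n,b)\le 1/b$, the admissibility condition is met whenever $b\ge 2m(m+1)/(\theta(\theta-1))$, so I would take $b=\Theta(m^2)$, the smallest mini-batch the analysis permits. Substituting $S=\Theta(1/(m\epsilon))$ and $b=\Theta(m^2)$ yields a total sample cost of order $n/(m\epsilon)+m^2/\epsilon$, a two-term expression trading the snapshot (full-pass) cost against the inner-loop cost. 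Balancing the two terms gives $m=\Theta(n^{1/3})$, hence $b=\Theta(n^{2/3})$ and $S=\Theta(n^{-1/3}/\epsilon)$, and the minimized sample complexity is $O(n^{2/3}/\epsilon)$.

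\textbf{Computational count.} Each regression costs $O(dk)$ by the discussion in Section~\ref{sec:algo}, so the regression work alone is $O(n^{2/3}dk/\epsilon)$. I would then check that the remaining per-iteration operations do not dominate: forming $\bG_{s,0}$ (line~\ref{line:comput_G}) costs $O(ndk)$ per outer iteration, i.e.\ $O(Sndk)=O(n^{2/3}dk/\epsilon)$ in total; forming each $\bV_{s,t}$ (line~\ref{line:V_st}) costs $O(bdk)$, i.e.\ $O(mSbdk)=O(n^{2/3}dk/\epsilon)$ in total; and each proximal step (line~\ref{line:prox_W}) costs $O(dk)$, i.e.\ $O(mSdk)=O(dk/\epsilon)$, which is dominated. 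Collecting these gives the computational complexity $O(n^{2/3}dk/\epsilon)$.

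The derivation is essentially a clean balancing of two cost terms, so the hard part will be the bookkeeping rather than any deep argument: correctly charging the per-outer full pass of $n$ subproblems (typically the bottleneck) and the factor $2b$ of the inner loop, and verifying that the mini-batch constraint can be satisfied with $b=\Theta(m^2)$ without enlarging the rate constant $C$. Integrality of $m$, $b$, $S$ I would handle by rounding, absorbed into the $O(\cdot)$ notation.
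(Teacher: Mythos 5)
Your proposal is correct and follows essentially the same route as the paper: both feed the $O(1/(mS))$ rate of Theorem~\ref{thm:main} into a per-iteration sample count of $O(n/m+b)$, use the admissibility condition $\theta(\theta-1)\ge 2m(m+1)\alpha(n,b)$ to tie $b\asymp m^2$ (the paper writes this as $m=O(\sqrt{b})$ and optimizes over $b$; you fix $b=\Theta(m^2)$ and optimize over $m$, which is the same balancing), and conclude $b=\Theta(n^{2/3})$, $m=\Theta(n^{1/3})$. Your explicit accounting of the costs of lines~\ref{line:comput_G}, \ref{line:V_st} and \ref{line:prox_W} is slightly more detailed than the paper's appeal to the $O(dk)$ cost of each subproblem, but the argument is the same.
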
\vspace{-.2cm}

\subsection{Proof Sketches}\label{sec:proof_sketch}


{\bf Proof Sketch of Lemma~\ref{lem:regularity}}.\
 By Assumption~\ref{assump:compact_set}, if $\calH$ is not compact, then $\varphi$ is coercive. This implies the boundedness of $\bh^*(\by,\bW)$. A similar argument also applies to $\br^*(\by,\!\bW)$.\vspace{-.05cm} Thus it is equivalent to minimizing $\tilell_2$ over a compact set $\calH'\!\times\!\calR'\subseteq\calH\!\times\!\calR$. 
In addition, Assumptions~\ref{assump:data_bounded} and \ref{assump:compact_set} ensure the compactness of $\calY$ and $\calC$ respectively.  
Since Assumptions~\ref{assump:convex_set} and \ref{assump:sc} ensure the uniqueness of $(\bh^*(\by,\bW),\br^*(\by,\bW))$, for any $(\by,\bW)\in\bbR^d\times\bbR^{d\times k}$, we can invoke Danskin's theorem (see Lemma~\ref{lem:Danskin}) to guarantee the differentiability of $\ell_2$, and compute 
$\nabla_\bW\ell_2(\by,\bW)$ as in~\eqref{eq:deriv_ell2}. 
Additionally, we can invoke the Maximum theorem (see Lemma~\ref{lem:maximum}) to ensure the continuity of  $(\by,\bW)\!\mapsto\!(\bh^*(\by,\bW),\br^*(\by,\bW))$ on $\calY\!\times\!\calC$. This implies the continuity of $(\by,\bW)\!\mapsto\!\nabla_\bW\ell_2(\by,\bW)$. Based on this, we can assert the Lipschitz continuity of both $\bh^*(\cdot,\cdot)$ and $\br^*(\cdot,\cdot)$ on $\calY\!\times\!\calC$. The proceeding arguments, together with the compactness of $\calY$, $\calC$, $\calH'$ and $\calR'$, allow us to conclude the Lipschitz continuity of $\bW\!\mapsto\!\nabla_\bW\ell_2(\by,\bW)$. \qed

{\bf Proof Sketch of Lemma~\ref{lem:var_bound}}.\
To show~\eqref{eq:var_bound}, we first apply Lemma~\ref{lem:uni_samp_wo} to $\normt{\bV_{s,t}-\nabla g(\bW^{s,t})}^2$ and then make use  of the $L$-Lipschitz continuity of $\bW\!\mapsto\!\nabla_\bW\ell_2(\by,\bW)$ in Lemma~\ref{lem:regularity}.  
\qed




{\bf Proof Sketch of Theorem~\ref{thm:main}}.\ From Lemma~\ref{lem:regularity}, we know the loss function $g$ is differentiable. 
Define $\psi'\!\defeq\! \psi+\delta_\calC$ and $\tilbW^{s,t+1}\!\defeq\!\prox_{\eta\psi'}(\bW^{s,t}\!-\!\eta\nabla g(\bW^{s,t}))$, then $\Gamma_{f',\eta}(\bW^{s,t})\!=\!({1}/{\eta})(\bW^{s,t}\!-\!\tilbW^{s,t+1})$. Using Lemma~\ref{lem:Lips_bound} and the Lipschitz continuity of $\nabla g$ (see Lemma~\ref{lem:regularity}), we have the recursion
\begin{align}
f'(\bW^{s,t+1}) &\le f'(\bW^{s,t}) + \langle\bW^{s,t+1}-\tilbW^{s,t+1},\nabla g(\bW^{s,t})-\bV_{s,t}\rangle\!-\!{1}/(2\eta) \normt{\bW^{s,t+1}\!-\!\tilbW^{s,t+1}}^2\nn\\
 &\hspace{.0cm}+ \left({L}/{2}-{1}/(2\eta)\right)\normt{\bW^{s,t+1}-\bW^{s,t}}^2 +\left(L-{1}/(2\eta)\right)\normt{\tilbW^{s,t+1}\!-\!\bW^{s,t}}^2 . \label{eq:succ_inner_main}
\end{align}
Conditioning on $\calF_{s,t}$, we then take expectations w.r.t.\ $\calB_{s,t}$ on both sides of~\eqref{eq:succ_inner_main}. Using Lemma~\ref{lem:var_bound}, we can bound $\bbE_{\calB_{s,t}}[\langle\bW^{s,t+1}\!-\!\tilbW^{s,t+1},\!\nabla g(\bW^{s,t})\!-\!\bV_{s,t}\rangle\vert\calF_{s,t}]$ in terms of $\normt{\bW^{s,t}-\bW^{s,0}}^2$ and $\bbE_{\calB_{s,t}}[\normt{\bW^{s,t+1}\!-\!\tilbW^{s,t+1}}^2\vert\calF_{s,t}]$. Then, instead of directly telescoping~\eqref{eq:succ_inner_main}, we construct a surrogate function $\hatf_{s,t}(\bW)\defeq f'(\bW)+\zeta_t\normt{\bW-\bW^{s,0}}^2$, where the sequence $\{\zeta_t\}_{t=0}^m$ is given by the recursion $\zeta_t = (1+1/m)\zeta_{t+1} +(\eta L^2/2)\alpha(n,b)$, where $\zeta_m = 0$. Using this recursive relation, together with the conditions $\eta = 1/(\theta L)$ and $\theta(\theta-1)\ge 2(m+1)\alpha(n,b)$ in Theorem~\ref{thm:main}, we manage to arrive at 
\begin{align}
\bbE_{\calB_{s,t}}[\hatf_{s,t+1}(\bW^{s,t+1})\vert\calF_{s,t}] \le \hatf_{s,t}(\bW^{s,t}) +\eta\left(\eta L-{1}/{2}\right)\normt{\Gamma_{f',\eta}(\bW^{s,t})}^2. \label{eq:toTel_inner_main}
\end{align}
Define $\calB_{s,(t]}\defeq \{\calB_{s,j}\}_{j=0}^t$. We now telescope~\eqref{eq:toTel_inner_main} over $t=0,1,\ldots,m-1$ to obtain
\begin{align}
\hspace{-.25cm}\bbE_{\calB_{s,(m]}}[f'(\bW^{s+1,0})\vert\calF_{s,0}]\! \le\! f'(\bW^{s,0}) + \eta\left(\eta L\!-\!{1}/{2}\right)\sum\nolimits_{t=0}^{m-1}\bbE_{\calB_{s,(t]}}[\normt{\Gamma_{f',\eta}(\bW^{s,t})}^2\vert\calF_{s,0}]. \label{eq:toTel_outer_main}
\end{align}
Finally, we telescope~\eqref{eq:toTel_outer_main} over $s\!\in\!(S\!-\!1]$ 
and use option I to choose the final dictionary $\bW_{\rm final}$.\qed 

{{\bf Proof Sketch of Theorem~\ref{thm:inexact}}}.\ The proof of Theorem~\ref{thm:inexact} is modified from that of Theorem~\ref{thm:main}. Due to the error $\bE_{s,t}$,\!  $\bV_{s,t}$ in~\eqref{eq:succ_inner_main} is replaced by $\hatbV_{s,t}\!\defeq\! \bV_{s,t}\!+\!\bE_{s,t}$. We decouple $\bE_{s,t}$ from $\bV_{s,t}$ through 
$\langle\bW^{s,t+1}-\tilbW^{s,t+1},\nabla g(\bW^{s,t})-\hatbV_{s,t}\rangle\! \le\! \frac{1}{2\eta} \normt{\bW^{s,t+1}-\tilbW^{s,t+1}}^2 + {\eta} \normt{{\nabla g(\bW^{s,t})-\bV_{s,t}}}^2 + \eta\normt{\bE_{s,t}}^2$. We then take expectation on both sides and carefully follow the telescoping procedure (over $t$ and $s$) used in proving Theorem~\ref{thm:main}. After some algebraic manipulations, we arrive at 
\begin{equation}
f^*\le f(\bW^{0,0}) + \eta\left(\eta L-{1}/{2}\right)\sum\nolimits_{s=0}^{S-1}\sum\nolimits_{t=0}^{m-1}\bbE[\normt{\Gamma_{f',\eta}(\bW^{s,t})}^2] + \eta\sum\nolimits_{s=0}^{S-1}\sum\nolimits_{t=0}^{m-1}\norm{\bE_{s,t}}^2.\nn
\end{equation}
Since $\normt{\bE_{s,t}}=(ms+t)^{-(1/2+\tau)}\,(\tau\!>\!0)$, for any $S\ge 1$, $\sum_{s=0}^{S-1}\sum_{t=0}^{m-1}\normt{\bE_{s,t}}^2 < \infty$. Therefore by using option I to choose $\bW_{\rm final}$, we complete the proof. \qed 

{\bf Proof of Corollary~\ref{cor:complexity}}.\ 
By Theorem~\ref{thm:main}, we know that $O(1/\epsilon)$ {\em inner iterations} are needed for $\bbE[\normt{\Gamma_{\!f'\!,\eta}(\bW_{\rm final})}^2]\!\le\! \epsilon$. 
By evenly distributing the $n$ data samples drawn at the start of each outer iteration into $m$ inner iterations, we see that each inner iteration takes in $O(n/m+b)$ samples. Thus the sample complexity is $O((n/m+b)/\epsilon)$. Based on it, we can also derive the computational complexity to be $O((n/m+b)dk/\epsilon)$. 
This is because both subproblem~\eqref{eq:loss_2} (in lines~\ref{line:solve_hr1}, \ref{line:solve_hr2} and~\ref{line:solve_hr3}) and the proximal operator of $\eta\psi+\delta_\calC$ (in line~\ref{line:prox_W}) can be solved or evaluated in $O(dk)$  arithmetic operations (see Section~\ref{sec:algo}).  
From the condition $\theta(\theta-1)\ge 2m(m+1)\alpha(n,b)$ in Theorem~\ref{thm:main}, we see that $m=O(\sqrt{nb/(n-b)})=O(\sqrt{b})$.
Thus the sample and computational complexities become $O((n/\sqrt{b}+b)/\epsilon)$ and $O((n/\sqrt{b}+b)dk/\epsilon)$ respectively. To optimize both costs, it is necessary that $b=\Theta(n/\sqrt{b})$, which amounts to $b=\Theta(n^{2/3})$. (Note that this also implies $m=\Theta(n^{1/3})$.) 
\qed

\section{Numerical Experiments}\label{sec:numerical}

\subsection{Tested SMF Formulations and Datasets}
We tested the performance of our proposed framework (Algorithm~\ref{algo:SMF_VR}) on four representative SMF formulations in Section~\ref{sec:prob_form}, including ODL, ONMF, ORPCA and ORNMF. 
Among them, ODL and ONMF are non-robust SMF formulations whereas ORPCA and ORNMF are robust ones, i.e., they explicitly model outliers. Therefore, for ODL and ONMF, we tested their algorithms on the {\tt CBCL}~\cite{Lee_99} and {\tt MNIST}~\cite{LeCun_98} datasets; while for ORPCA and ORNMF, we tested their algorithms on the {\tt Synth} and {\tt YaleB}~\cite{YaleB_01} datasets. The {\tt CBCL} and {\tt MNIST} datasets are commonly used in testing (non-robust) matrix factorization algorithms, e.g., \cite{Lee_99}. 
The {\tt YaleB} dataset consists of face images taken under various lighting conditions. The shadow and gloss in these images caused by imperfect illumination can be regarded as outliers. 
The {\tt Synth} dataset was generated synthetically in a similar fashion to those in~\cite{Feng_13,Shen_14b,Zhao_17}. Specifically, we first generated a $d\!\times\! k'$ matrix $\barbW$ and a $k'\!\times\! n$ matrix $\barbH$, where $d\!=\!400$, $n\!=\!1\!\times\!10^5$ and $k'\!=\!10$. The entries of $\barbW$ and $\barbH$ were drawn \iid from a normal distribution with  mean 0.5 and variance $1/\!\sqrt{k'}$. We then generated a $d\!\times\! n$ outlier matrix $\barbR$ by first uniformly randomly selecting $\floor{(1\!-\!\rho_s) dn}$ of its  entries to be zero, where $\rho_s$ denotes the outlier density. The remaining entries were then generated \iid from a uniform distribution with support $[-M,\!M]$. We set $\rho_s\!=\!0.1$ and $M\!=\!1000$. 
Finally $\bV\!=\!\barbW\,\barbH\!+\!\barbR$. 

\subsection{Benchmarking Frameworks and Choice of Parameters}
 For each tested SMF formulation, we compared our variance-reduced SMF framework (denoted as VR) against another two optimization frameworks commonly used in previous works on SMF, namely SMM and SGD. For completeness, the pseudo-codes for these two methods are shown in Algorithms~\ref{algo:SMF_SMM} and \ref{algo:SMF_SGD}  respectively. We next describe the parameter setting in our method. 
From our analysis in Corollary~\ref{cor:complexity}, we set the mini-batch size $b = c_1n^{2/3}$ and the number of inner iterations $m=c_2n^{1/3}$, where $c_1=0.2$ and $c_2=0.5$. We set the number of outer iterations $S\!=\!10$ and the parameter $\tau$ in Theorem~\ref{thm:inexact} to $1\!\times\!10^{-3}$. For each tested SMF formulation on each dataset, we chose the step size $\eta$ such that our method yielded the best (empirical) convergence rate. The plots of objective values 
resulting from other step sizes are shown in Figure~\ref{fig:etas}. For the latent dimension $k$, following the convention in~\cite{Lee_99,Mairal_10,Zhao_17}, we set it to 49.  This parameter can also be chosen from domain knowledge or a Bayesian approach, e.g., \cite{Bishop_98}. Since in practice we found the performance of all the three frameworks (VR, SMM and SGD) was not sensitive to $k$, we fixed it for simplicity. For both SMM and SGD, we used the same mini-batch size $b$ as in VR. The step sizes $\{\gamma_t\}_{t\ge 0}$ in SGD are chosen in the classical way~\cite{Borkar_08}, i.e., $\gamma_t=\beta/(bt+\beta')$. Similar to VR, we chose $\beta,\beta'>0$ such that SGD achieved the best empirical convergence rate. For simplicity, we initialized the dictionary $\bW$ such that all its entries are equal to one. Finally, following~\cite{Mairal_10,Guan_12,Feng_13,Zhao_17}, we set the regularization weight $\lambda$ in ODL, ONMF and ORNMF to $1/\sqrt{d}$. In addition, we set $\lambda_1=\lambda_2=1/\sqrt{d}$ in ORPCA.

\subsection{Plots of Objective Values}
For each SMF formulation and each dataset, since we focus on convergence to a stationary point, \vspace{-.05cm}we first run a {\em batch} (deterministic) gradient-based matrix factorization  algorithm (e.g.,~\cite{Lin_07a}) to  estimate a stationary point $\hatbW$. \vspace{-.05cm}This resembles the way to use batch methods to estimate a global optimum in the literature of {\em stochastic convex optimization}~\cite{Johnson_13,Defazio_14}. \vspace{-.05cm} Based on $\hatbW$, we plot the log-suboptimality of the objective value, i.e., $\log(f(\bW)-f(\hatbW))$ versus both running time  and number of data passes, for  VR, SMM and SGD respectively (see Figure~\ref{fig:obj}).\footnote{All the algorithms are implemented in Matlab\textsuperscript{\textregistered} 8.6 and run on a machine with a 3.6-GHz processor.} 
We plot objective values versus number of data passes so that the results in Figure~\ref{fig:obj} are agnostic to the actual implementation of the algorithms. 
From Figure~\ref{fig:obj}, in terms of both time and number of data passes, we observe that for all the SMF algorithms and datasets, our variance-reduced framework (VR) not only converges faster than SMM and SGD, \vspace{-.05cm}but also find a more accurate approximate of the stationary point $\hatbW$. Explanations and discussions of these observations are deferred to Section~\ref{sec:exp_discussion}. 

\subsection{Subspace Recovery by ORPCA}
 We also considered the subspace recovery task using ORPCA~\cite{Feng_13} on the {\tt Synth} dataset.  
The ground-truth subspace $\calU$ is given by the column space of the (ground-truth) dictionary $\barbW$ in generating the {\tt Synth} dataset (see Section~5.1). Accordingly, the estimated subspace $\calU'$ at any time instant is given by the column space of our learned dictionary. The similarity between $\calU'$ and $\calU$ is measured by the expressed variance (EV) (see the definition in~\cite{Xu_13}). A larger value of EV indicates a higher similarity, hence a better subspace recovery result. (A unit EV indicates perfect recovery, i.e., $\calU'\!=\!\calU$.) We employed VR, SMM and SGD on the ORPCA problem to recover the subspace $\calU$. In addition to the original {\tt Synth} dataset with outlier density $\rho_s\!=\!0.1$, we generated another one with $\rho_s\!=\!0.3$. From the results in Figure~\ref{fig:subspace_rec}, we observe that VR 
consistently outperforms the other two frameworks. Specifically, compared to SMM and SGD, VR not only recovers $\calU$ faster, but also recovers a subspace that is closer to $\calU$ (in terms of EV). These observations are indeed consistent with those in Section~5.3, and are explained in Section~\ref{sec:exp_discussion}. 

\begin{figure}[t]
\subfloat{\includegraphics[width=.26\columnwidth,height=.185\columnwidth]{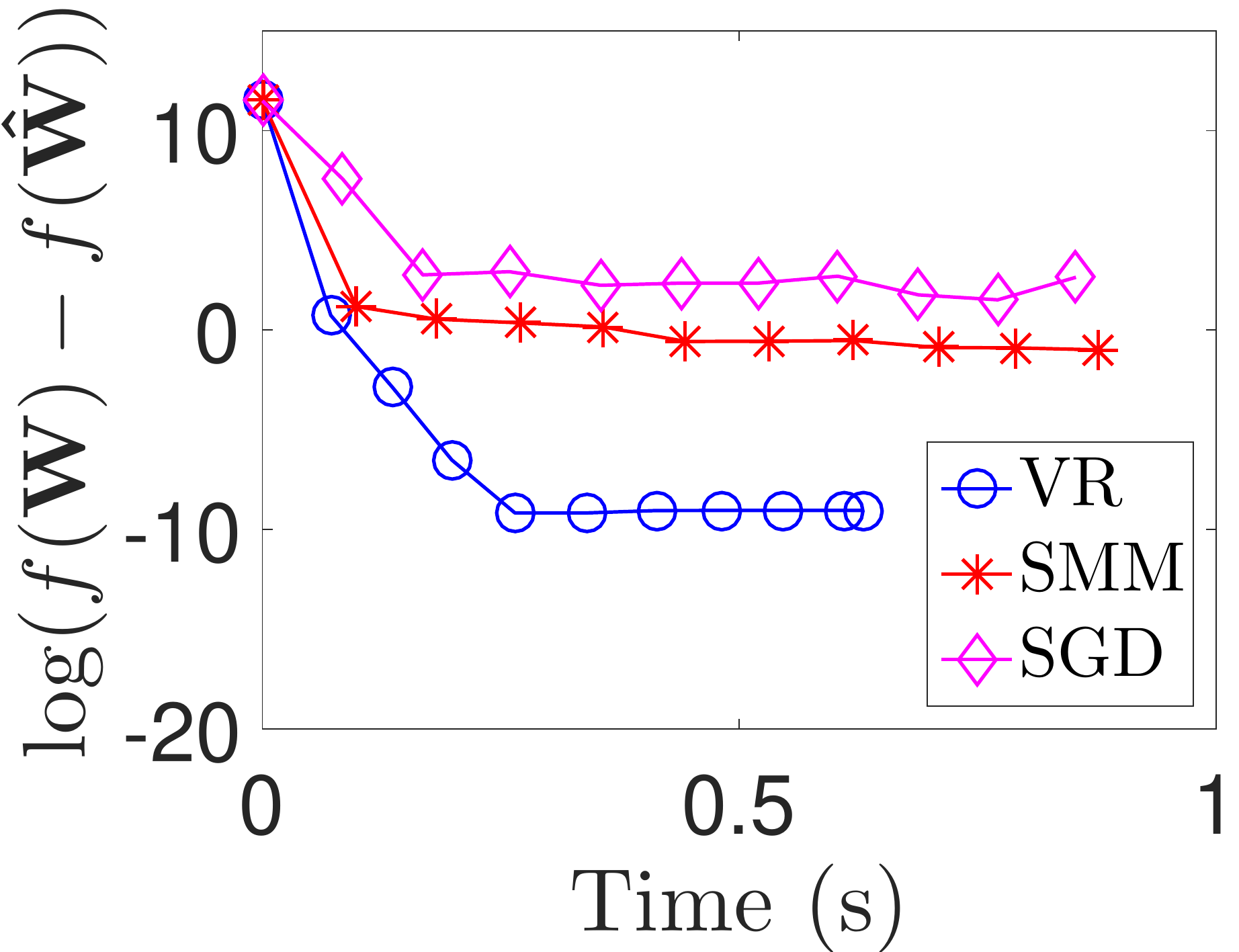}}\hfill
\subfloat{\includegraphics[width=.24\columnwidth,height=.18\columnwidth]{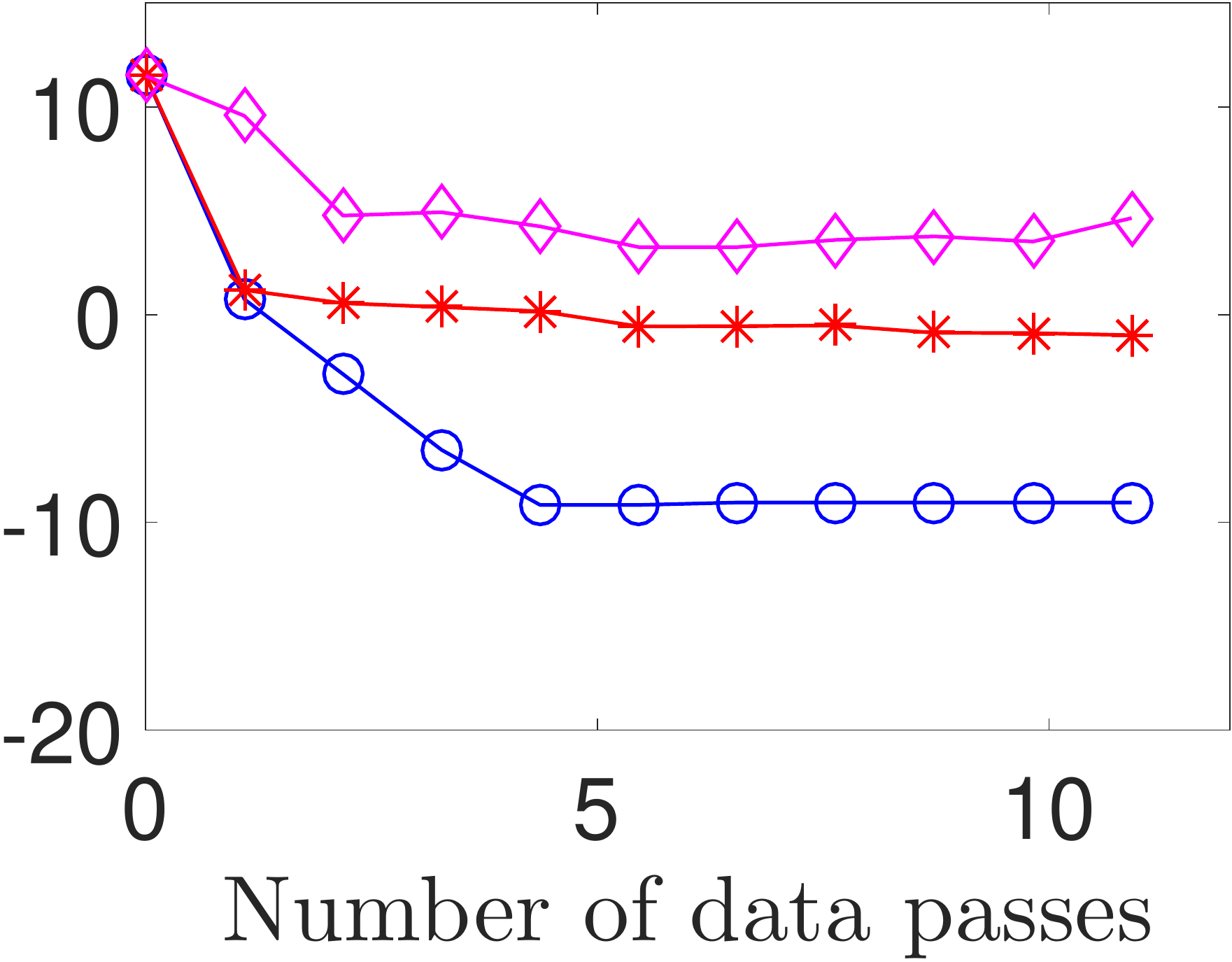}}\hfill
\subfloat{\includegraphics[width=.24\columnwidth,height=.18\columnwidth]{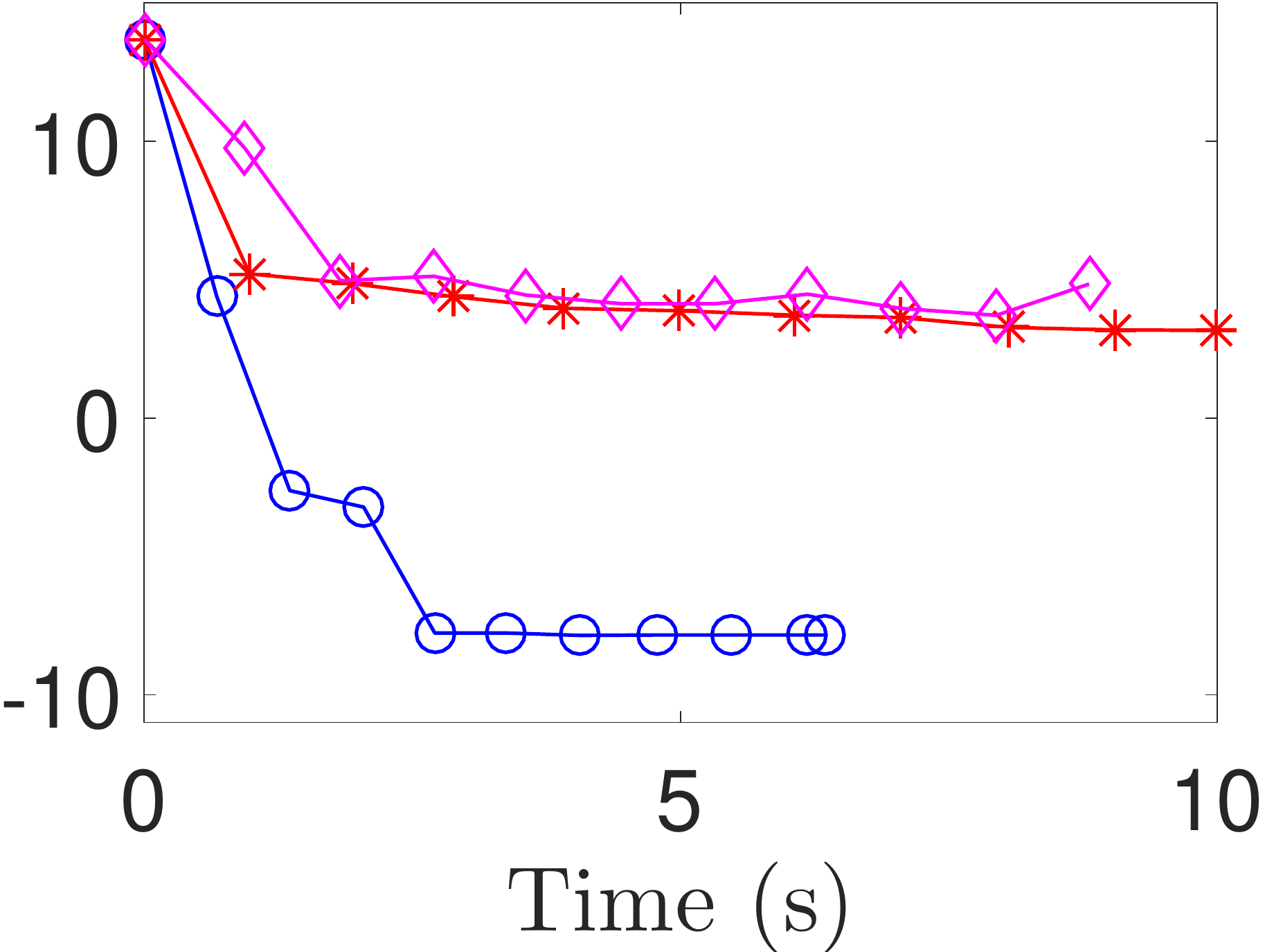}}\hfill
\subfloat{\includegraphics[width=.24\columnwidth,height=.18\columnwidth]{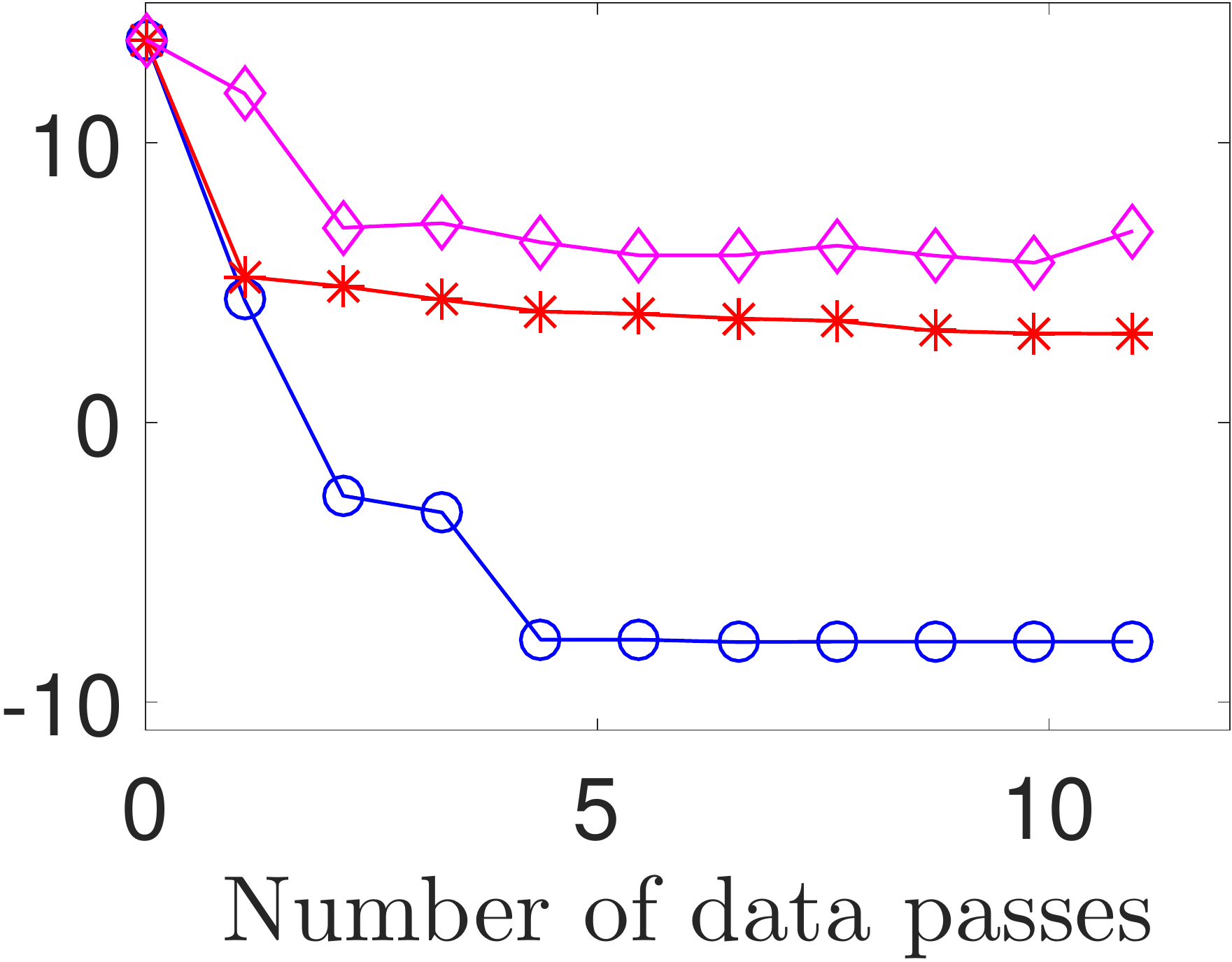}}\\[-.1cm]
\hspace*{3cm} (a) ODL ({\tt CBCL}) \hspace*{4cm} (b) ODL ({\tt MNIST})\\[-.25cm]
\subfloat{\includegraphics[width=.26\columnwidth,height=.185\columnwidth]{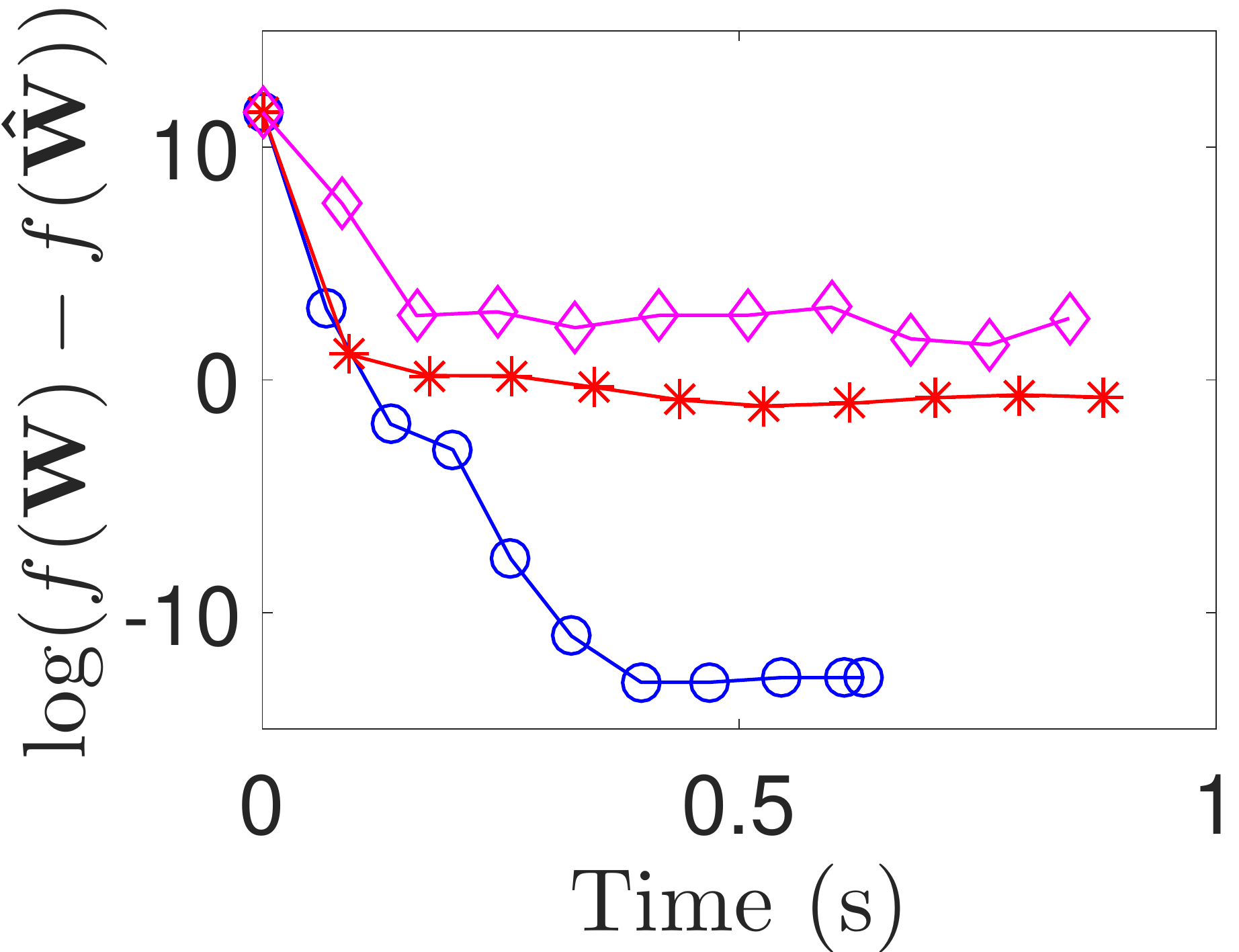}}\hfill
\subfloat{\includegraphics[width=.24\columnwidth,height=.18\columnwidth]{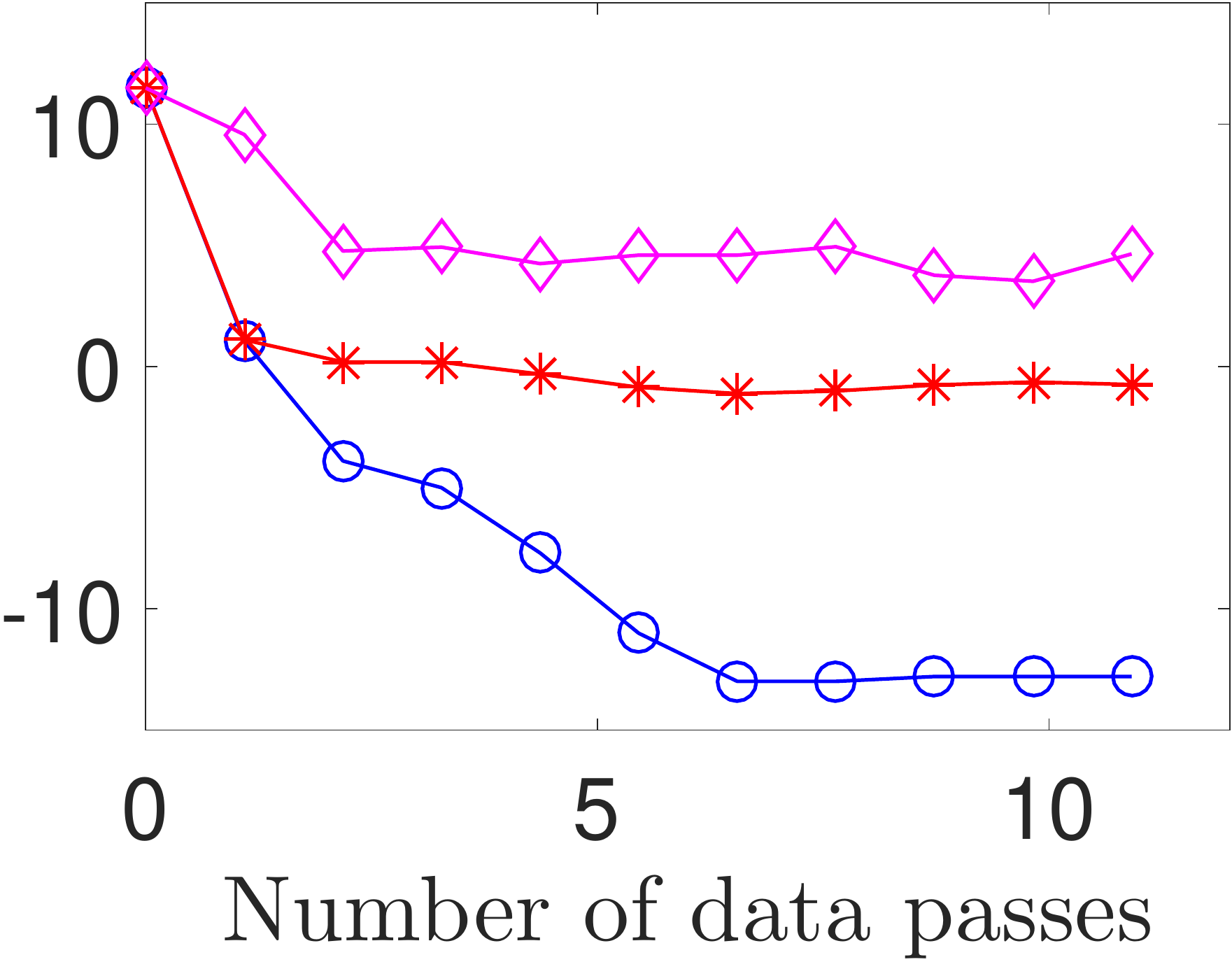}}\hfill
\subfloat{\includegraphics[width=.24\columnwidth,height=.18\columnwidth]{ODL_mnist_diffAlgoTime.eps}}\hfill
\subfloat{\includegraphics[width=.24\columnwidth,height=.18\columnwidth]{ODL_mnist_diffAlgo2.eps}}\\[-.1cm]
\hspace*{2.8cm} (c) ONMF ({\tt CBCL}) \hspace*{4cm} (d) ONMF ({\tt MNIST})\\[-.25cm]
\subfloat{\includegraphics[width=.28\columnwidth,height=.185\columnwidth]{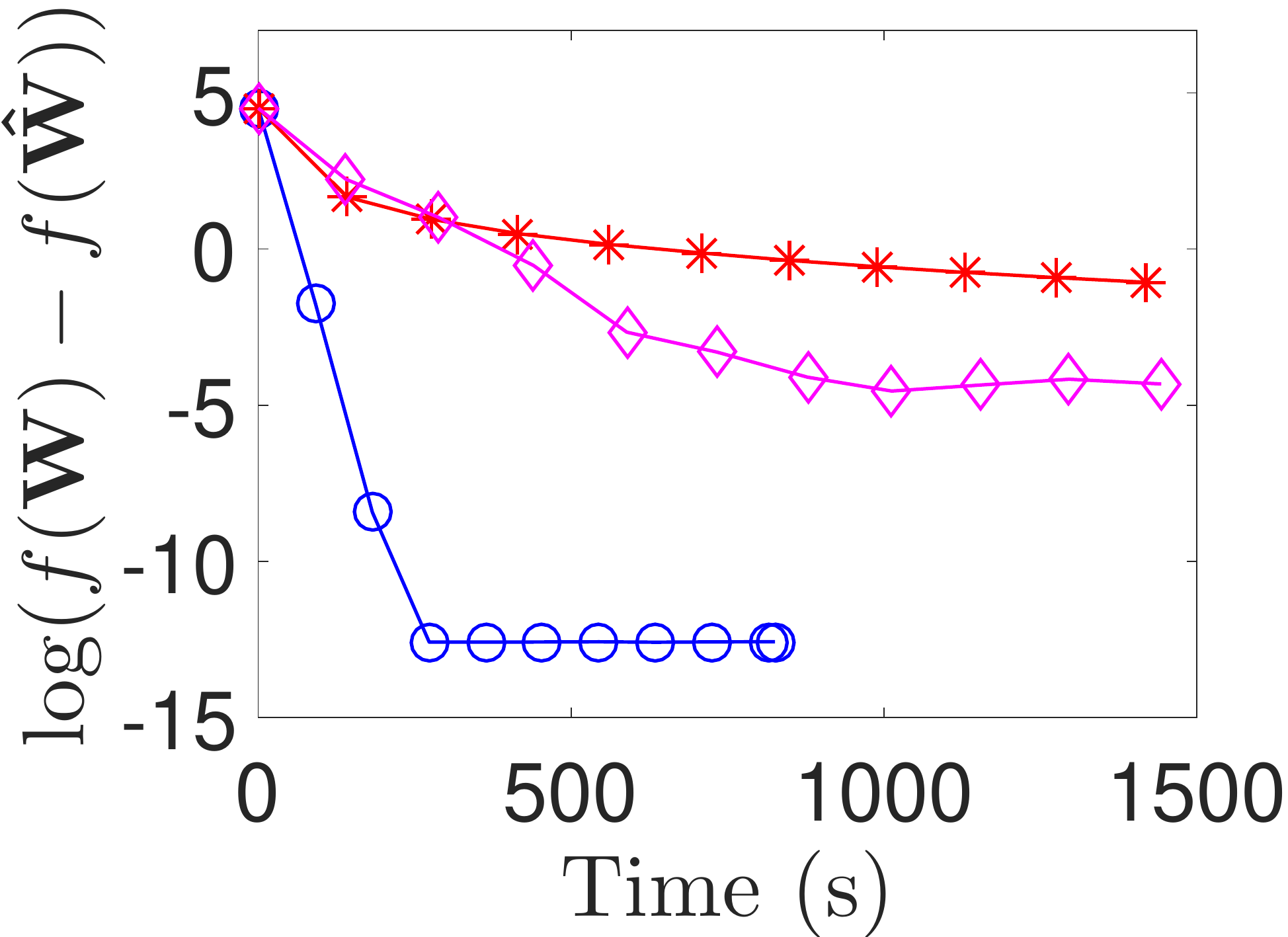}}\hfill
\subfloat{\includegraphics[width=.22\columnwidth,height=.18\columnwidth]{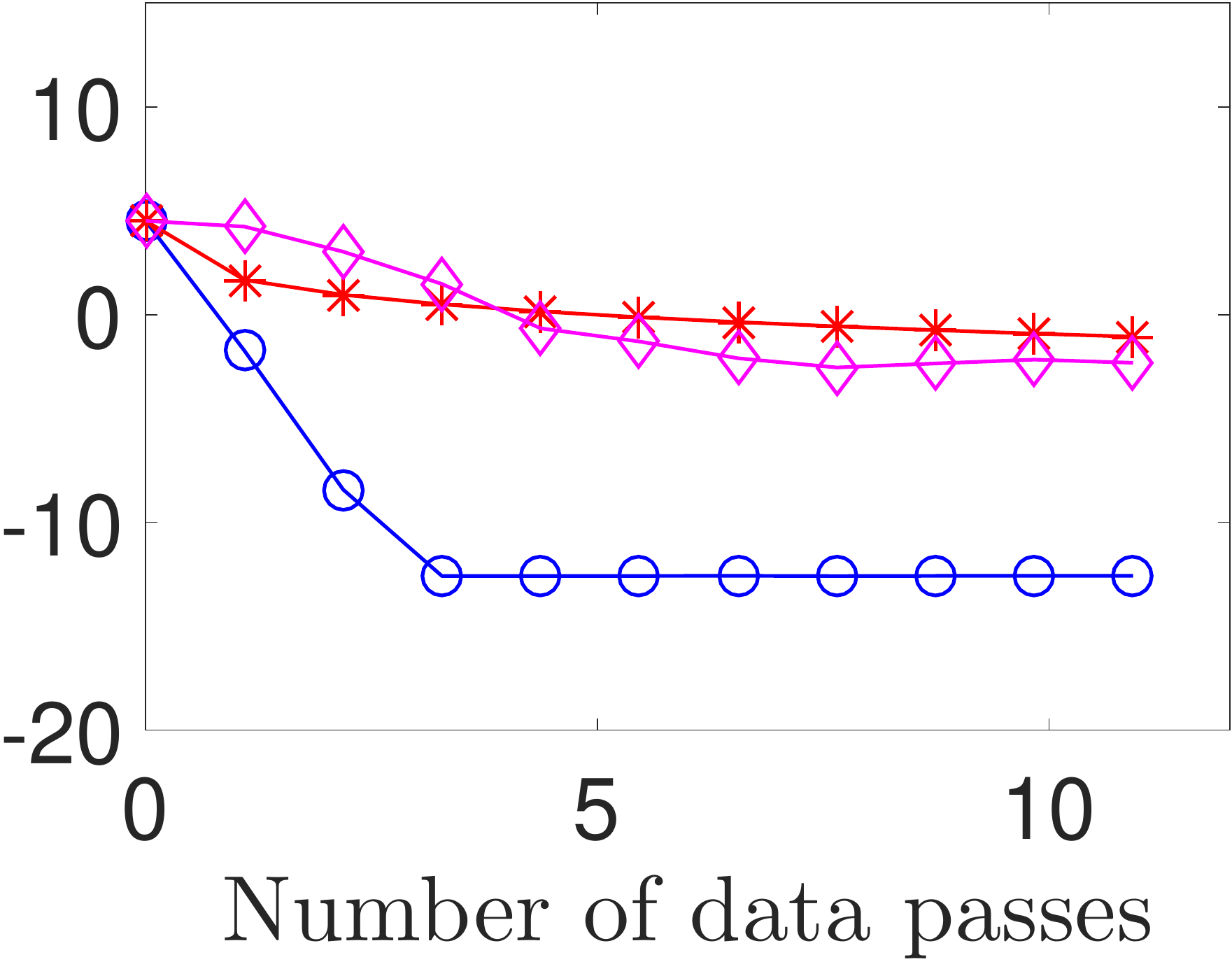}}\hfill
\subfloat{\includegraphics[width=.24\columnwidth,height=.188\columnwidth]{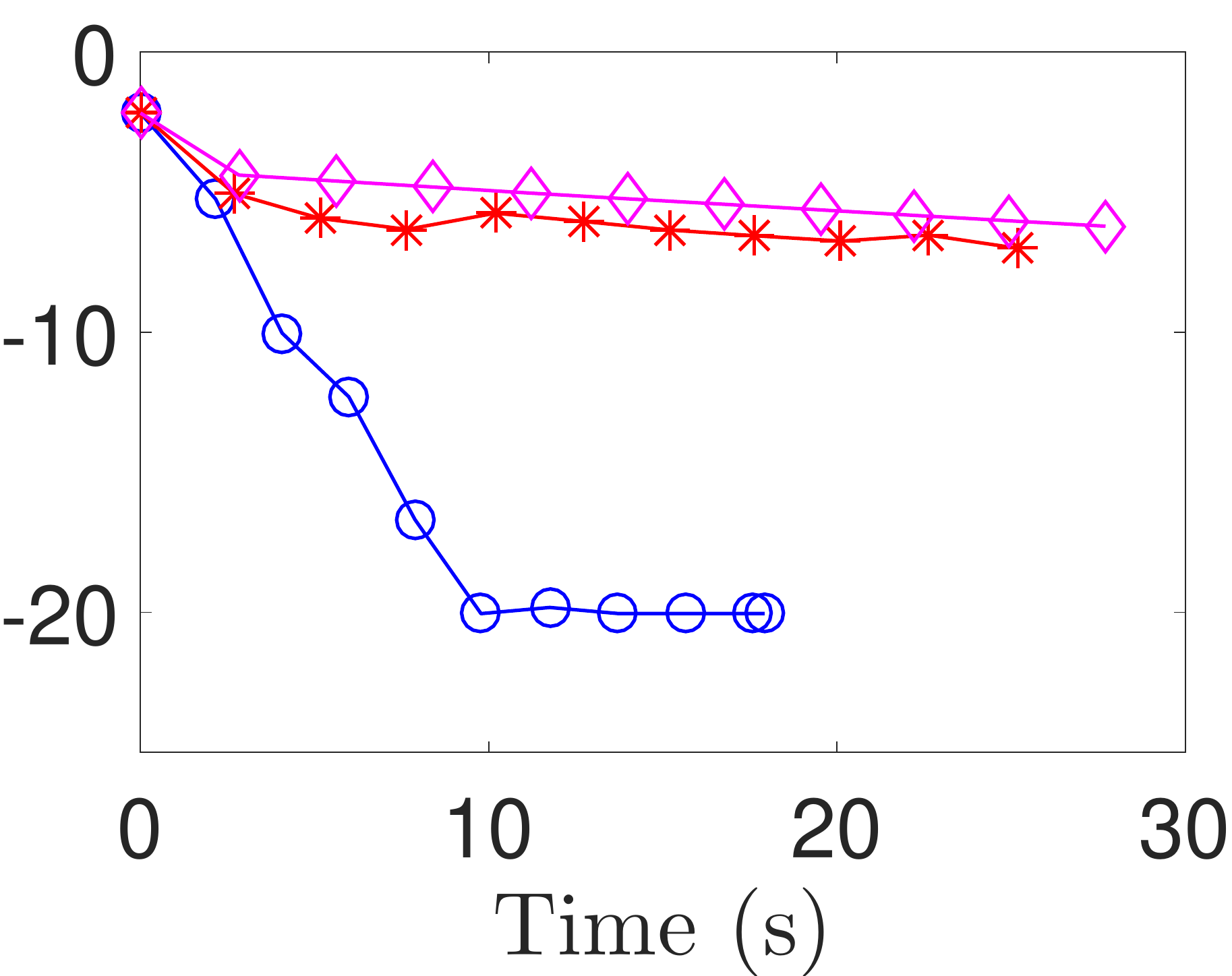}}\hfill
\subfloat{\includegraphics[width=.24\columnwidth,height=.188\columnwidth]{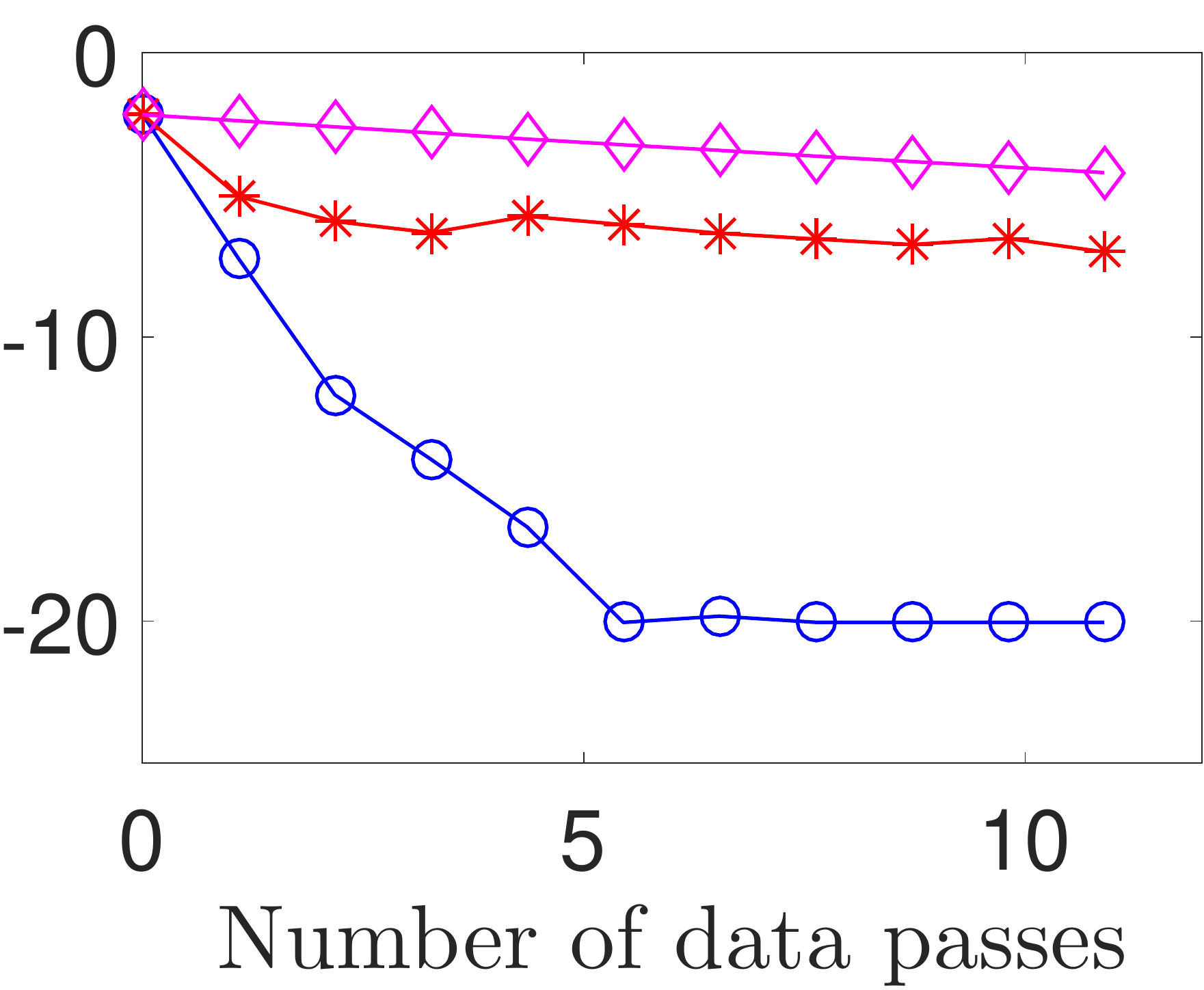}}\\[-.1cm]
\hspace*{2.8cm} (e) ORPCA ({\tt Synth}) \hspace*{3.6cm} (f) ORPCA ({\tt YaleB})\\[-.25cm]
\subfloat{\includegraphics[width=.26\columnwidth,height=.185\columnwidth]{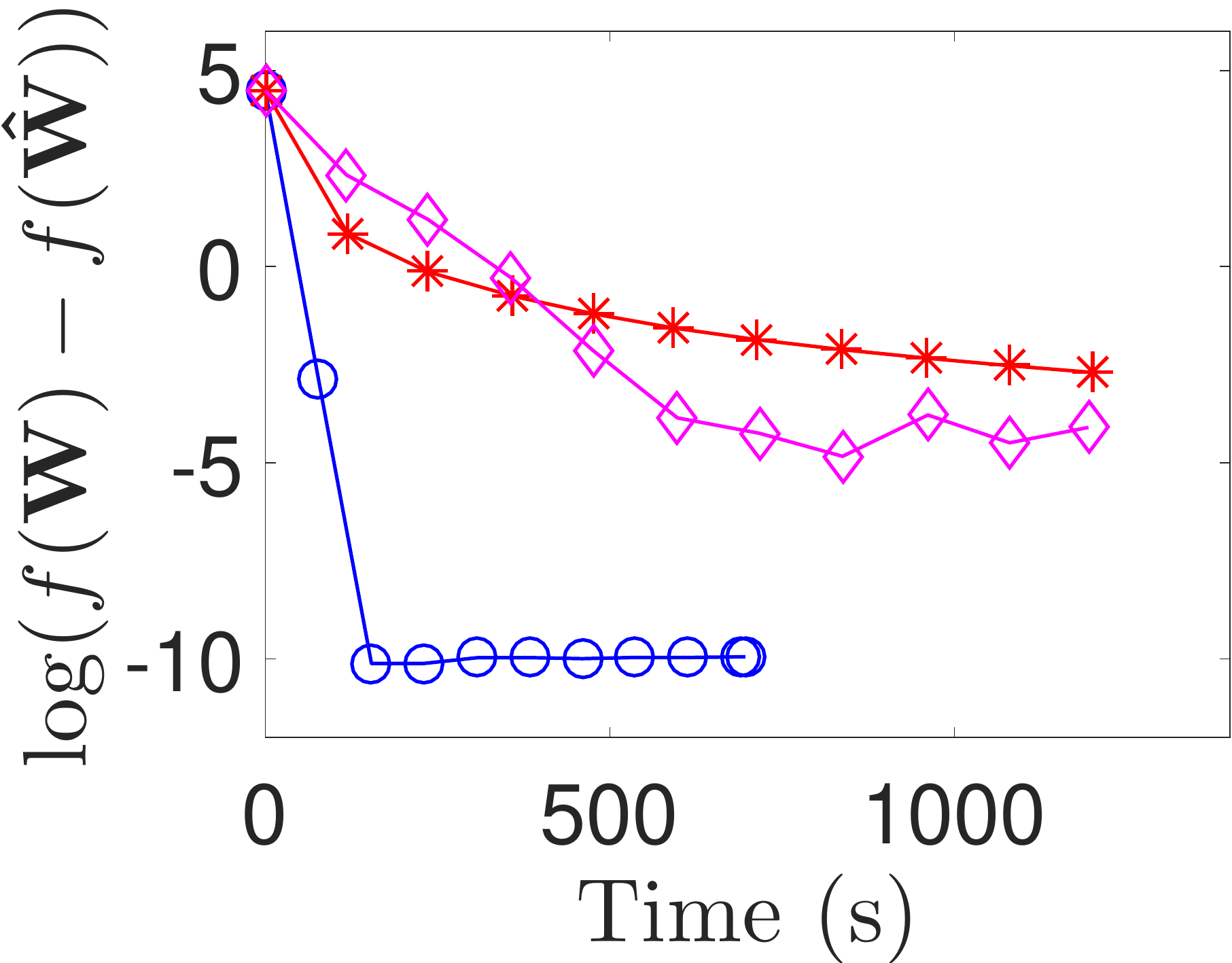}}\hfill
\subfloat{\includegraphics[width=.24\columnwidth,height=.18\columnwidth]{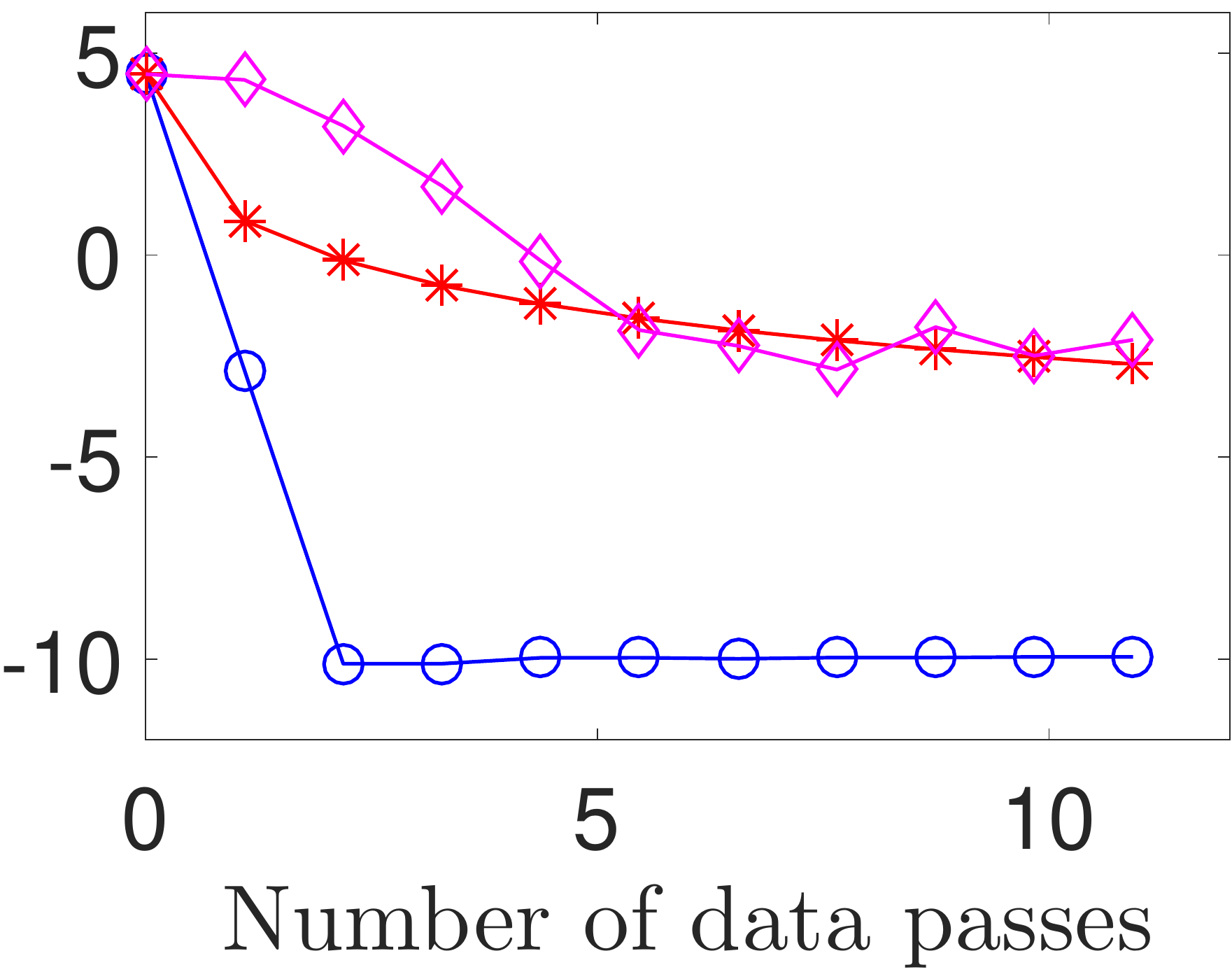}}\hfill
\subfloat{\includegraphics[width=.24\columnwidth,height=.188\columnwidth]{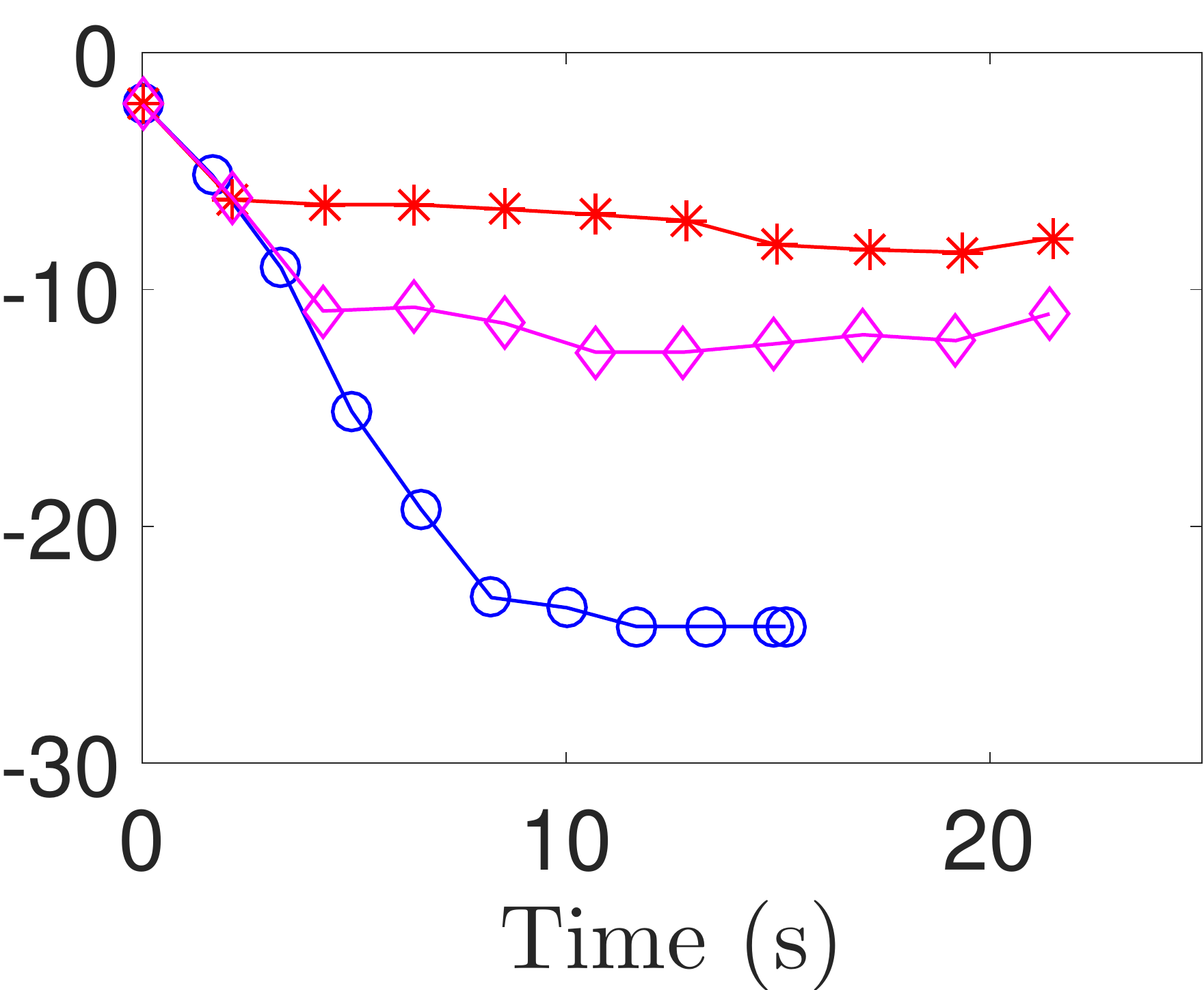}}\hfill
\subfloat{\includegraphics[width=.24\columnwidth,height=.188\columnwidth]{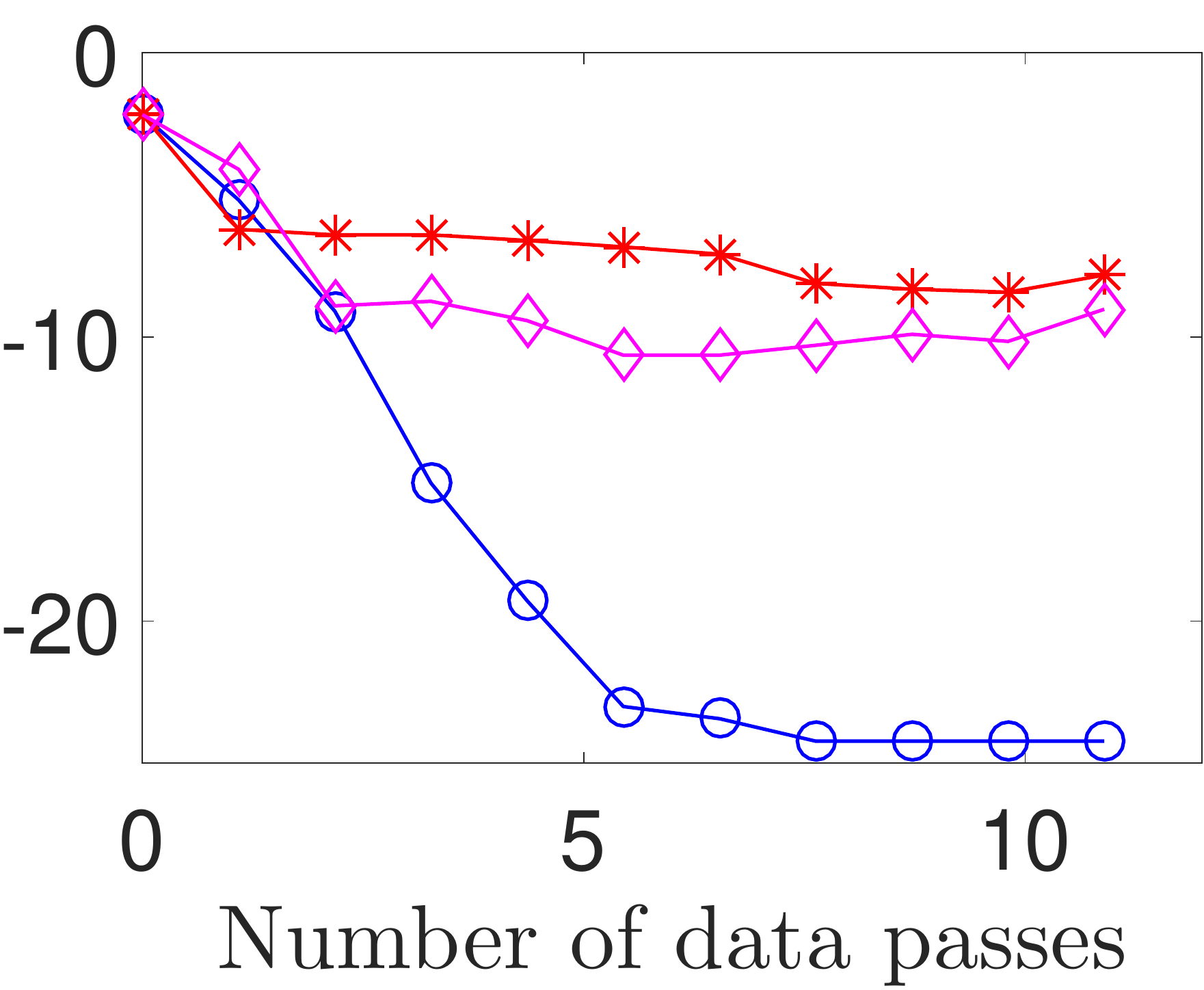}}\\[-.1cm]
\hspace*{2.8cm} (g) ORPCA ({\tt Synth}) \hspace*{3.6cm} (h) ORPCA ({\tt YaleB})
\caption{Plots of objective values produced by the three optimization framworks (VR, SMM and SGD) versus {\em number of data passes} and {\em time} on the four SMF algorithms (ODL, ONMF, ORPCA and ORNMF)
 and four datasets ({\tt CBCL}, {\tt MNIST}, {\tt Synth} and {\tt YaleB}).
 \label{fig:obj}}
\end{figure}

\begin{figure}[h]
\subfloat{\includegraphics[width=.26\columnwidth,height=.19\columnwidth]{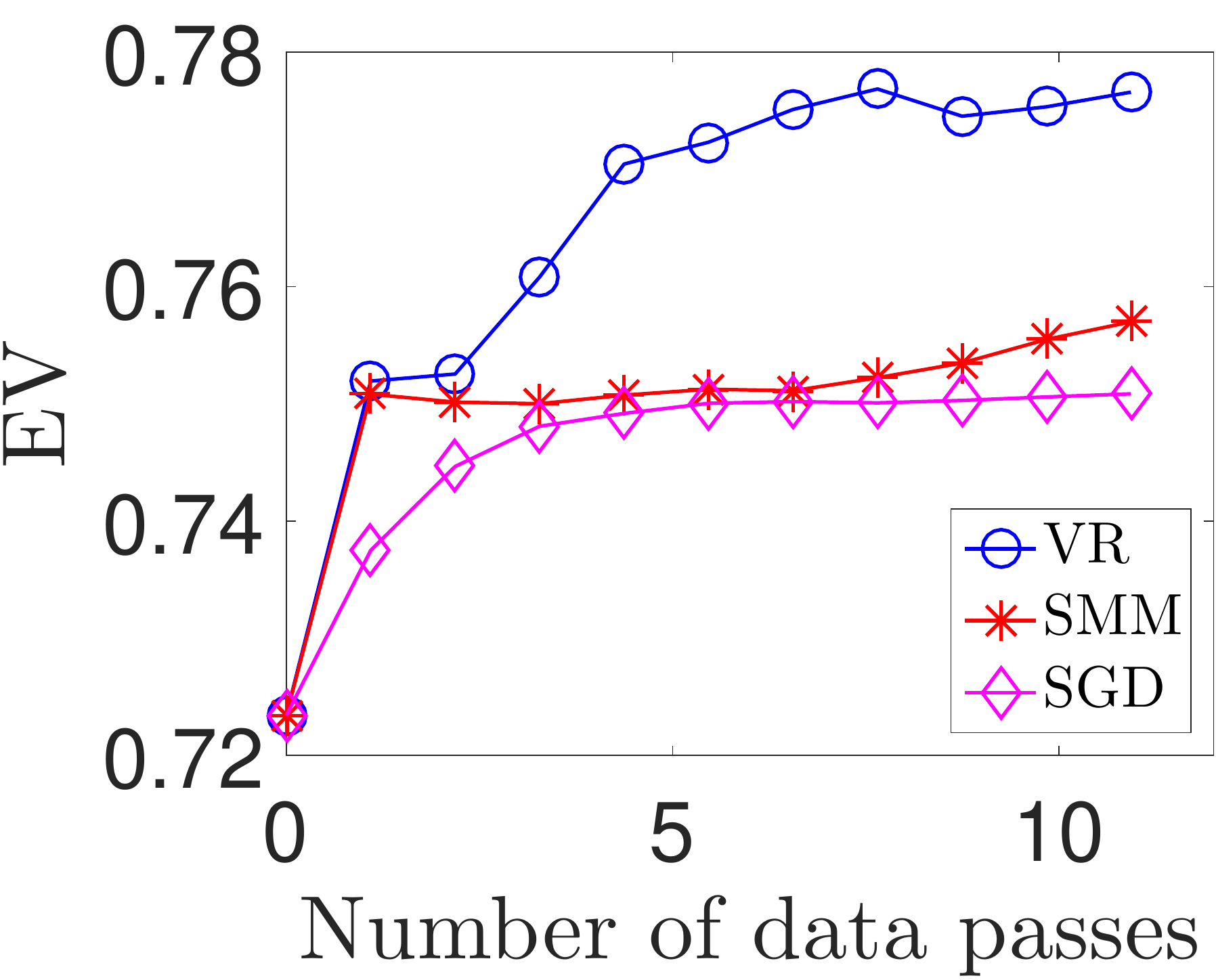}}\hfill
\subfloat{\includegraphics[width=.25\columnwidth,height=.19\columnwidth]{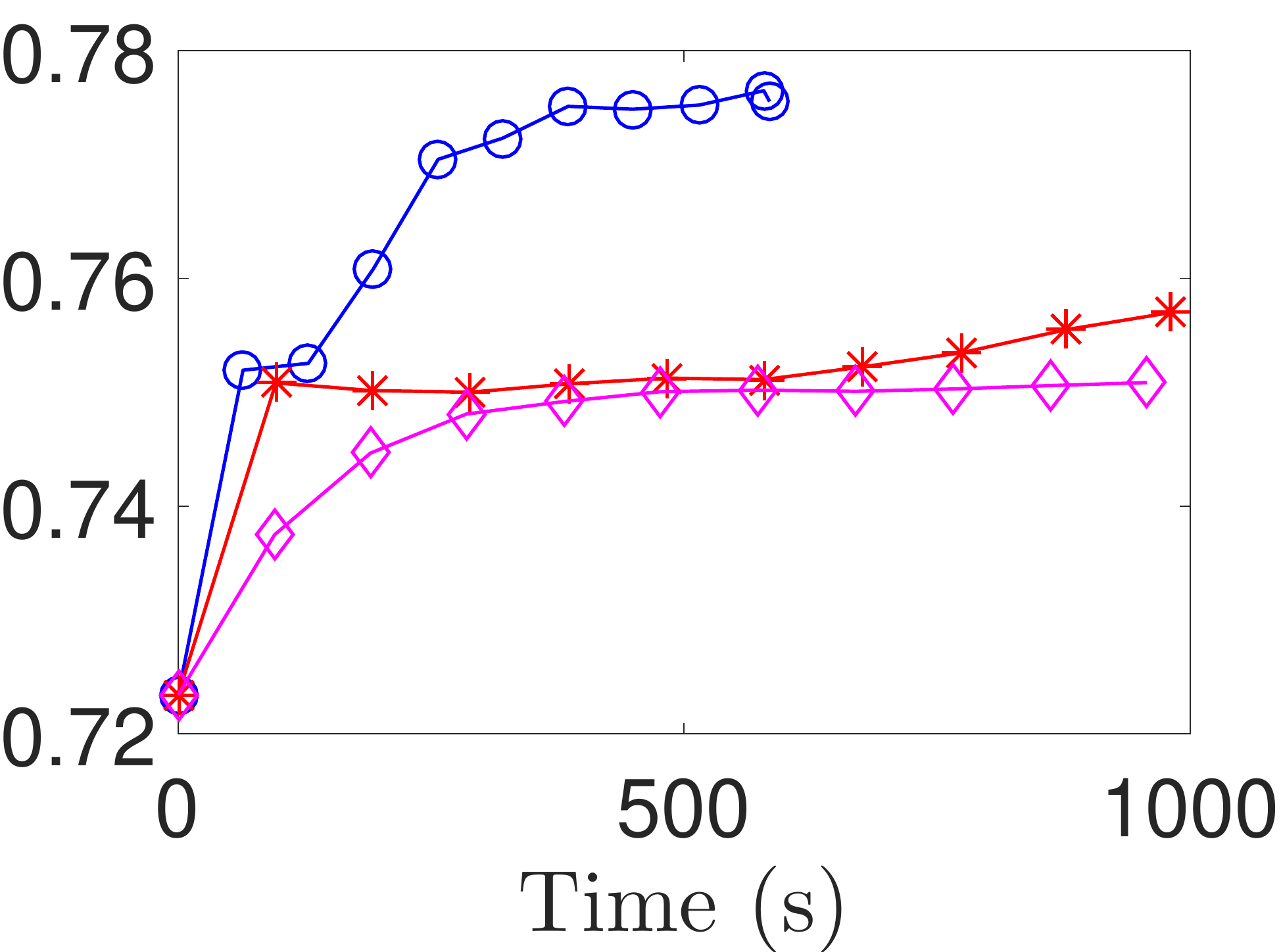}}\hfill
\subfloat{\includegraphics[width=.24\columnwidth,height=.19\columnwidth]{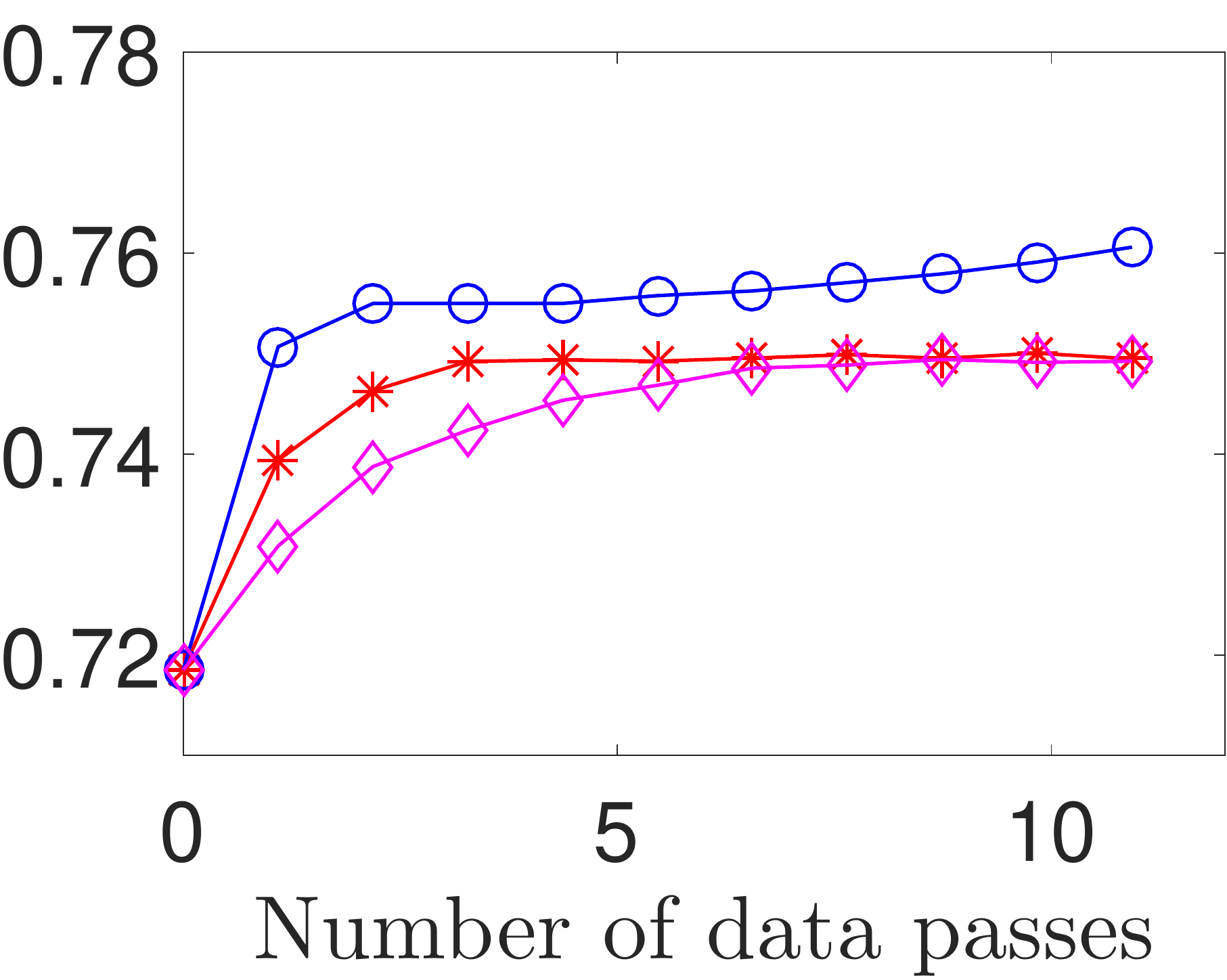}}\hfill
\subfloat{\includegraphics[width=.24\columnwidth,height=.19\columnwidth]{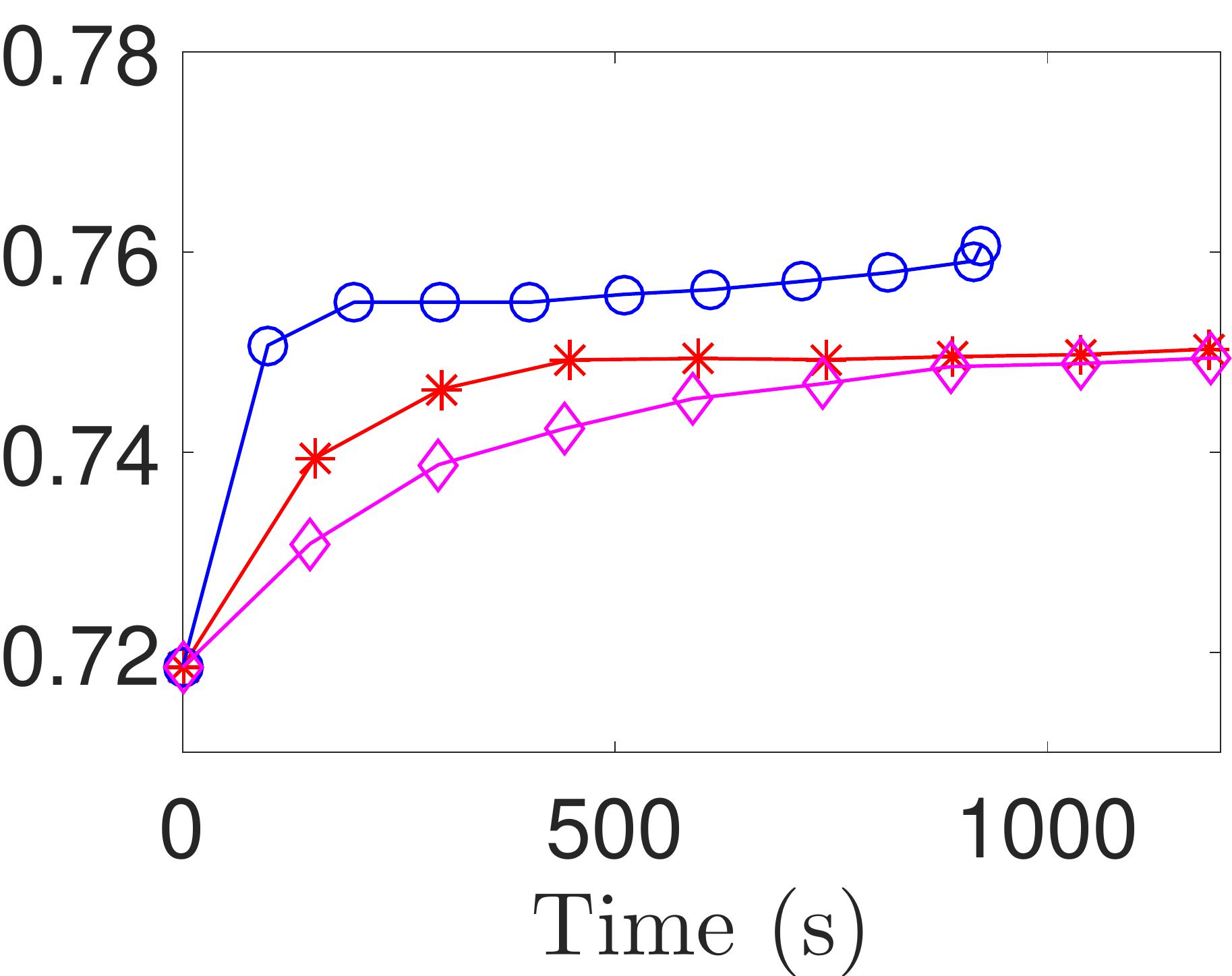}}\\
\hspace*{3cm}(a) $\rho_s=0.1$ \hspace*{5cm} (b) $\rho_s = 0.3$
\caption{ Plots of EV produced by VR, SMM and SGD versus {number of data passes} and {time}.\label{fig:subspace_rec}}
\end{figure}

\newcommand{\BIBdecl}{\setlength{\itemsep}{-.0cm}}
\bibliographystyle{IEEEtran}
{{\footnotesize\bibliography{dataset,mach_learn,math_opt,ORNMF_ref,RNMF_ref,stat_ref,stoc_ref}}

\clearpage
\center{\Large\bf Supplemental Material for ``A Unified Framework for\\[.1cm] Stochastic Matrix Factorization via Variance Reduction''}

\renewcommand{\thedefinition}{S-\arabic{definition}}
\renewcommand{\thelemma}{S-\arabic{lemma}}
\renewcommand{\thecorollary}{S-\arabic{corollary}}
\renewcommand{\theequation}{S-\arabic{equation}}
\renewcommand{\thesection}{S-\arabic{section}}
\renewcommand{\theremark}{S-\arabic{remark}}
\renewcommand{\thefigure}{S-\arabic{figure}}
\renewcommand{\thetable}{S-\arabic{table}}
\renewcommand{\thealgorithm}{S-\arabic{algorithm}}

\setcounter{lemma}{0}
\setcounter{equation}{0}
\setcounter{theorem}{0}
\setcounter{section}{0}
\setcounter{remark}{0}

\flushleft
\section{Proof of Theorem~\ref{thm:main} }\label{sec:proof_main_thm}
For notational convenience, define $\psi'\defeq \psi+\delta_\calC$, then 
\begin{equation}
\bW^{s,t+1}=\prox_{\eta\psi'}(\bW^{s,t}-\eta\bV_{s,t}). \label{eq:stoc_proxStep}
\end{equation}
 In addition, define 
\begin{equation}
\tilbW^{s,t+1}\defeq\prox_{\eta\psi'}(\bW^{s,t}-\eta\nabla g(\bW^{s,t})),\label{eq:batch_proxStep}
\end{equation} 
then 
\begin{equation}
\Gamma_{f',\eta}(\bW^{s,t}) = \frac{1}{\eta}(\bW^{s,t}-\tilbW^{s,t+1}). \label{eq:newDef_Gamma}
\end{equation}
Applying Lemma~\ref{lem:Lips_bound} to both \eqref{eq:stoc_proxStep} and \eqref{eq:batch_proxStep}, we have that for any $\bZ_1,\bZ_2\in\bbR^{d\times k}$,
\begin{align}
f'(\bW^{s,t+1}) &\le f'(\bZ_1) + \lrangle{\bW^{s,t+1}-\bZ_1}{\nabla g(\bW^{s,t})-\bV_{s,t}} + \left(\frac{L}{2}-\frac{1}{2\eta}\right)\normt{\bW^{s,t+1}-\bW^{s,t}}^2\nn\\
&\hspace{4cm}+\left(\frac{L}{2}+\frac{1}{2\eta}\right)\normt{\bZ_1-\bW^{s,t}}^2 -\frac{1}{2\eta} \normt{\bW^{s,t+1}-\bZ_1}^2,\label{eq:ub_stoc}\\
f'(\tilbW^{s,t+1}) &\le f'(\bZ_2) + \left(\frac{L}{2}-\frac{1}{2\eta}\right)\normt{\tilbW^{s,t+1}-\bW^{s,t}}^2 + \left(\frac{L}{2}+\frac{1}{2\eta}\right)\normt{\bZ_2-\bW^{s,t}}^2 \nn\\
&\hspace{8.5cm}- \frac{1}{2\eta} \normt{\tilbW^{s,t+1}-\bZ_2}^2. \label{eq:ub_batch}
\end{align}
By first setting $\bZ_1=\tilbW^{s,t+1}$ in~\eqref{eq:ub_stoc} and $\bZ_2=\bW^{s,t}$ in \eqref{eq:ub_batch} and then summing both inequalities, we have 
\begin{align}
f'(\bW^{s,t+1}) &\le f'(\bW^{s,t}) + \lrangle{\bW^{s,t+1}-\tilbW^{s,t+1}}{\nabla g(\bW^{s,t})-\bV_{s,t}} + \left(\frac{L}{2}-\frac{1}{2\eta}\right)\normt{\bW^{s,t+1}-\bW^{s,t}}^2\nn\\
&\hspace{4cm} +\left(L-\frac{1}{2\eta}\right)\normt{\tilbW^{s,t+1}-\bW^{s,t}}^2 -\frac{1}{2\eta} \normt{\bW^{s,t+1}-\tilbW^{s,t+1}}^2. \label{eq:succ_inner}
\end{align}
Using Lemma~\ref{lem:inner_prod_bound}, we have 
\begin{align}
\lrangle{\bW^{s,t+1}-\tilbW^{s,t+1}}{\nabla g(\bW^{s,t})-\bV_{s,t}} &\le \frac{1}{2\eta} \normt{\bW^{s,t+1}-\tilbW^{s,t+1}}^2 + \frac{\eta}{2} \normt{{\nabla g(\bW^{s,t})-\bV_{s,t}}}^2. 
\end{align}
Therefore, by Lemma~\ref{lem:var_bound}, we have
\begin{align}
\bbE_{\calB_{s,t}}\left[\lrangle{\bW^{s,t+1}-\tilbW^{s,t+1}}{\nabla g(\bW^{s,t})-\bV_{s,t}}\vert\calF_{s,t}\right]&\le \frac{1}{2\eta} \bbE_{\calB_{s,t}}[\normt{\bW^{s,t+1}-\tilbW^{s,t+1}}^2\vert\calF_{s,t}]\nn\\
&\hspace{2.5cm}+ \frac{\eta L^2}{2} \alpha(n,b) \normt{\bW^{s,t}-\bW^{s,0}}^2.\label{eq:exp_inner_prod}
\end{align}
Taking expectations on both sides of~\eqref{eq:succ_inner} and  making use of \eqref{eq:exp_inner_prod}, we have
\begin{align}
\bbE_{\calB_{s,t}}[f'(\bW^{s,t+1})\vert\calF_{s,t}]&\le f'(\bW^{s,t}) + \frac{\eta L^2}{2} \alpha(n,b) \normt{\bW^{s,t}-\bW^{s,0}}^2\nn\\
 &+ \left(\frac{L}{2}-\frac{1}{2\eta}\right)\bbE_{\calB_{s,t}}[\normt{\bW^{s,t+1}-\bW^{s,t}}^2\vert\calF_{s,t}]+\left(L-\frac{1}{2\eta}\right)\normt{\tilbW^{s,t+1}-\bW^{s,t}}^2.  \label{eq:bound_exp_f}
\end{align}
Now, define a surrogate function $\hatf_{s,t}(\bW)\defeq f'(\bW)+\zeta_t\normt{\bW-\bW^{s,0}}^2$, where the sequence $\{\zeta_t\}_{t=0}^m$ is given by the recursion
\begin{equation}
\zeta_t = (1+1/m)\zeta_{t+1} +\frac{\eta L^2}{2}\alpha(n,b), \quad \zeta_m = 0. \label{eq:recursion}
\end{equation}
In particular, we have \begin{equation}
\hatf_{s,0}(\bW^{s,0}) = f'(\bW^{s,0}) \quad \mbox{and}\quad \hatf_{s,m}(\bW^{s,m}) = f'(\bW^{s,m}).\label{eq:def_t0_m}
\end{equation} 
From the recursion \eqref{eq:recursion}, we also observe that $\{\zeta_t\}_{t=0}^m$ is deceasing and 
\begin{equation}
\zeta_0 = \frac{\left((1+1/m)^m-1\right)m}{2}\alpha(n,b)\eta L^2 \le \frac{\left(e-1\right)m}{2}\alpha(n,b)\eta L^2 \le m\alpha(n,b)\eta L^2. \label{eq:bound_zeta0}
\end{equation}
Again by using Lemma~\ref{lem:inner_prod_bound}, we bound 
\begin{equation}
\hatf_{s,t+1}(\bW^{s,t+1}) \le f'(\bW^{s,t+1})+(1+m)\zeta_{t+1}\normt{\bW^{s,t+1}-\bW^{s,t}}^2+(1+1/m)\zeta_{t+1}\normt{\bW^{s,0}-\bW^{s,t}}^2. \label{eq:bound_hatf}
\end{equation}
Combining \eqref{eq:bound_exp_f} and \eqref{eq:bound_hatf}, we have
\begin{align}
\bbE_{\calB_{s,t}}[\hatf_{s,t+1}(\bW^{s,t+1})\vert\calF_{s,t}]&\le f'(\bW^{s,t}) + \left((1+1/m)\zeta_{t+1}+\frac{\eta L^2}{2}\alpha(n,b)\right)  \normt{\bW^{s,t}-\bW^{s,0}}^2\nn\\
 &\hspace{-3cm}+ \left(\frac{L}{2}-\frac{1}{2\eta}+(1+m)\zeta_{t+1}\right)\bbE_{\calB_{s,t}}[\normt{\bW^{s,t+1}-\bW^{s,t}}^2\vert\calF_{s,t}]+\left(L-\frac{1}{2\eta}\right)\normt{\tilbW^{s,t+1}-\bW^{s,t}}^2\\
 &= \hatf_{s,t}(\bW^{s,t}) + \left(\frac{L}{2}-\frac{1}{2\eta}+(1+m)\zeta_{t+1}\right)\bbE_{\calB_{s,t}}[\normt{\bW^{s,t+1}-\bW^{s,t}}^2\vert\calF_{s,t}]\nn\\
 &\hspace{6cm}+\left(L-\frac{1}{2\eta}\right)\normt{\tilbW^{s,t+1}-\bW^{s,t}}^2. 
\end{align}
Using the conditions $\eta = 1/(\theta L)$ and $\theta(\theta-1)\ge 2m(m+1)\alpha(n,b)$ and \eqref{eq:bound_zeta0}, we have for any $t\in[m]$,
\begin{align}
\frac{L}{2}-\frac{1}{2\eta}+(1+m)\zeta_{t} &\le \frac{1}{2}(1-\theta)L+(1+m)\zeta_0\\
&\le \frac{1}{2}(1-\theta)L+m(m+1)\alpha(n,b)(L/\theta)\\
&\le \frac{L}{2\theta }\left(\theta(1-\theta)+2m(m+1)\alpha(n,b)\right)\\
&\le 0. 
\end{align}
Therefore, by~\eqref{eq:newDef_Gamma}, we have
\begin{align}
\bbE_{\calB_{s,t}}[\hatf_{s,t+1}(\bW^{s,t+1})\vert\calF_{s,t}] \le \hatf_{s,t}(\bW^{s,t}) +\eta\left(\eta L-\frac{1}{2}\right)\normt{\Gamma_{f',\eta}(\bW^{s,t})}^2. \label{eq:toTel_inner}
\end{align}
Telescoping~\eqref{eq:toTel_inner} over $t=0,\ldots,m-1$ and noting~\eqref{eq:def_t0_m}, we have
\begin{align}
\bbE_{\calB_{s,(m]}}[f'(\bW^{s+1,0})\vert\calF_{s,0}] = \bbE_{\calB_{s,(m]}}[f'(\bW^{s,m})\vert\calF_{s,0}] \le f'(\bW^{s,0}) + \eta\left(\eta L-\frac{1}{2}\right)\sum_{t=0}^{m-1}\bbE_{\calB_{s,(t]}}[\normt{\Gamma_{f',\eta}(\bW^{s,t})}^2\vert\calF_{s,0}]. \label{eq:toTel_outer}
\end{align}
For any $S\ge 0$, telescope~\eqref{eq:toTel_outer} over $s=0,1,\ldots,S-1$ and we have
\begin{align}
f^*\le \bbE[f'(\bW^{S,0})] &\le f'(\bW^{0,0}) + \eta\left(\eta L-\frac{1}{2}\right)\sum_{s=0}^{S-1}\sum_{t=0}^{m-1}\bbE[\normt{\Gamma_{f',\eta}(\bW^{s,t})}^2].
\end{align}
If we choose the final basis matrix $\bW_{\rm final}$ uniformly randomly from $\{\bW^{s,t}\}_{s\in (S-1],t\in(m]}$, then 
\begin{equation}
\bbE[\normt{\Gamma_{f',\eta}(\bW_{\rm final})}^2] \le \left(\frac{f'(\bW^{0,0})-f^*}{\eta\left({1}/{2}-\eta L\right)}\right)\frac{1}{mS}\eqa \left(\frac{f(\bW^{0,0})-f^*}{\eta\left({1}/{2}-\eta L\right)}\right)\frac{1}{mS},
\end{equation}
where (a) holds because $\bW^{0,0}\in\calC$. 

\section{Proof of Theorem~\ref{thm:inexact} }\label{sec:proof_inexact}
Define $\hatbV_{s,t}\defeq \bE_{s,t}+\bV_{s,t}$, then 
\begin{equation}
\bW^{s,t+1}=\prox_{\eta\psi'}(\bW^{s,t}-\eta\hatbV_{s,t}). \label{eq:stoc_proxStep_hat}
\end{equation}
Applying Lemma~\ref{lem:Lips_bound} to \eqref{eq:stoc_proxStep_hat}, we have that for any $\bZ_3\in\bbR^{d\times k}$,
\begin{align}
f'(\bW^{s,t+1}) &\le f'(\bZ_3) + \lrangle{\bW^{s,t+1}-\bZ_3}{\nabla g(\bW^{s,t})-\hatbV_{s,t}} + \left(\frac{L}{2}-\frac{1}{2\eta}\right)\normt{\bW^{s,t+1}-\bW^{s,t}}^2\nn\\
&\hspace{4cm}+\left(\frac{L}{2}+\frac{1}{2\eta}\right)\normt{\bZ_3-\bW^{s,t}}^2 -\frac{1}{2\eta} \normt{\bW^{s,t+1}-\bZ_3}^2.\label{eq:ub_stoc_hat}
\end{align}
By setting $\bZ_3=\tilbW^{s,t+1}$ in~\eqref{eq:ub_stoc_hat} and then summing \eqref{eq:ub_stoc_hat} and \eqref{eq:ub_batch}, we have 
\begin{align}
f'(\bW^{s,t+1}) &\le f'(\bW^{s,t}) + \lrangle{\bW^{s,t+1}-\tilbW^{s,t+1}}{\nabla g(\bW^{s,t})-\hatbV_{s,t}} + \left(\frac{L}{2}-\frac{1}{2\eta}\right)\normt{\bW^{s,t+1}-\bW^{s,t}}^2\nn\\
&\hspace{4cm} +\left(L-\frac{1}{2\eta}\right)\normt{\tilbW^{s,t+1}-\bW^{s,t}}^2 -\frac{1}{2\eta} \normt{\bW^{s,t+1}-\tilbW^{s,t+1}}^2. \label{eq:succ_inner_hat}
\end{align}
By repeatedly applying Lemma~\ref{lem:inner_prod_bound}, we have 
\begin{align*}
\lrangle{\bW^{s,t+1}-\tilbW^{s,t+1}}{\nabla g(\bW^{s,t})-\hatbV_{s,t}} &\le \frac{1}{2\eta} \normt{\bW^{s,t+1}-\tilbW^{s,t+1}}^2 + {\eta} \normt{{\nabla g(\bW^{s,t})-\bV_{s,t}}}^2 + \eta\normt{\bE_{s,t}}^2. 
\end{align*}
Therefore, by Lemma~\ref{lem:var_bound}, we have
\begin{align}
\bbE_{\calB_{s,t}}[\langle\bW^{s,t+1}-\tilbW^{s,t+1},\nabla g(\bW^{s,t})-\hatbV_{s,t}\rangle\vert\calF_{s,t}]&\le \frac{1}{2\eta} \bbE_{\calB_{s,t}}[\normt{\bW^{s,t+1}-\tilbW^{s,t+1}}^2\vert\calF_{s,t}]+ \eta\normt{\bE_{s,t}}^2\nn\\
&\hspace{3cm}+ {\eta L^2} \alpha(n,b) \normt{\bW^{s,t}-\bW^{s,0}}^2.\label{eq:exp_inner_prod_hat}
\end{align}
Taking expectations on both sides of~\eqref{eq:succ_inner_hat} and  making use of \eqref{eq:exp_inner_prod_hat}, we have
\begin{align}
\bbE_{\calB_{s,t}}[f'(\bW^{s,t+1})\vert\calF_{s,t}]&\le f'(\bW^{s,t}) + {\eta L^2} \alpha(n,b) \normt{\bW^{s,t}-\bW^{s,0}}^2+\eta\normt{\bE_{s,t}}^2\nn\\
 &+ \left(\frac{L}{2}-\frac{1}{2\eta}\right)\bbE_{\calB_{s,t}}[\normt{\bW^{s,t+1}-\bW^{s,t}}^2\vert\calF_{s,t}]+\left(L-\frac{1}{2\eta}\right)\normt{\tilbW^{s,t+1}-\bW^{s,t}}^2.  \label{eq:bound_exp_f_hat}
\end{align}
Now, define a surrogate function $\hatf_{s,t}(\bW)\defeq f'(\bW)+\zeta_t\normt{\bW-\bW^{s,0}}^2$, where the sequence $\{\zeta_t\}_{t=0}^m$ is given by the recursion
\begin{equation}
\zeta_t = (1+1/m)\zeta_{t+1} +{\eta L^2}\alpha(n,b), \quad \zeta_m = 0. \label{eq:recursion_hat}
\end{equation}
In particular, we have \begin{equation}
\hatf_{s,0}(\bW^{s,0}) = f'(\bW^{s,0}) \quad \mbox{and}\quad \hatf_{s,m}(\bW^{s,m}) = f'(\bW^{s,m}).\label{eq:def_t0_m_hat}
\end{equation} 
From the recursion \eqref{eq:recursion_hat}, we also observe that $\{\zeta_t\}_{t=0}^m$ is deceasing and 
\begin{equation}
\zeta_0 = {\left((1+1/m)^m-1\right)m}\alpha(n,b)\eta L^2 \le {\left(e-1\right)m}\alpha(n,b)\eta L^2 \le 2m\alpha(n,b)\eta L^2. \label{eq:bound_zeta0_hat}
\end{equation}
Again by using Lemma~\ref{lem:inner_prod_bound}, we bound 
\begin{equation}
\hatf_{s,t+1}(\bW^{s,t+1}) \le f'(\bW^{s,t+1})+(1+m)\zeta_{t+1}\normt{\bW^{s,t+1}-\bW^{s,t}}^2+(1+1/m)\zeta_{t+1}\normt{\bW^{s,0}-\bW^{s,t}}^2. \label{eq:bound_hatf_hat}
\end{equation}
Combining \eqref{eq:bound_exp_f_hat} and \eqref{eq:bound_hatf_hat}, we have
\begin{align}
\bbE_{\calB_{s,t}}[\hatf_{s,t+1}(\bW^{s,t+1})\vert\calF_{s,t}]&\le f'(\bW^{s,t}) + \left((1+1/m)\zeta_{t+1}+{\eta L^2}\alpha(n,b)\right)  \normt{\bW^{s,t}-\bW^{s,0}}^2+ \eta\normt{\bE_{s,t}}^2\nn\\
 &\hspace{-3cm}+ \left(\frac{L}{2}-\frac{1}{2\eta}+(1+m)\zeta_{t+1}\right)\bbE_{\calB_{s,t}}[\normt{\bW^{s,t+1}-\bW^{s,t}}^2\vert\calF_{s,t}]+\left(L-\frac{1}{2\eta}\right)\normt{\tilbW^{s,t+1}-\bW^{s,t}}^2\\
 &= \hatf_{s,t}(\bW^{s,t}) + \left(\frac{L}{2}-\frac{1}{2\eta}+(1+m)\zeta_{t+1}\right)\bbE_{\calB_{s,t}}[\normt{\bW^{s,t+1}-\bW^{s,t}}^2\vert\calF_{s,t}]\nn\\
 &\hspace{4cm}+\left(L-\frac{1}{2\eta}\right)\normt{\tilbW^{s,t+1}-\bW^{s,t}}^2+ \eta\normt{\bE_{s,t}}^2. 
\end{align}
Using the conditions $\eta = 1/(\theta L)$ and $\theta(\theta-1)\ge 4m(m+1)\alpha(n,b)$ and \eqref{eq:bound_zeta0}, we have for any $t\in[m]$,
\begin{align}
\frac{L}{2}-\frac{1}{2\eta}+(1+m)\zeta_{t} &\le \frac{1}{2}(1-\theta)L+(1+m)\zeta_0\\
&\le \frac{1}{2}(1-\theta)L+2m(m+1)\alpha(n,b)(L/\theta)\\
&\le \frac{L}{2\theta }\left(\theta(1-\theta)+4m(m+1)\alpha(n,b)\right)\\
&\le 0. 
\end{align}
Therefore, by~\eqref{eq:newDef_Gamma}, we have
\begin{align}
\bbE_{\calB_{s,t}}[\hatf_{s,t+1}(\bW^{s,t+1})\vert\calF_{s,t}] \le \hatf_{s,t}(\bW^{s,t}) +\eta\left(\eta L-\frac{1}{2}\right)\normt{\Gamma_{f',\eta}(\bW^{s,t})}^2+ \eta\normt{\bE_{s,t}}^2. \label{eq:toTel_inner_hat}
\end{align}
Telescoping~\eqref{eq:toTel_inner} over $t=0,\ldots,m-1$ and noting~\eqref{eq:def_t0_m_hat}, we have
\begin{align}
\bbE_{\calB_{s,(m]}}[f'(\bW^{s+1,0})\vert\calF_{s,0}] &= \bbE_{\calB_{s,(m]}}[f'(\bW^{s,m})\vert\calF_{s,0}] \nn\\
&\le f'(\bW^{s,0}) + \eta\left(\eta L-\frac{1}{2}\right)\sum_{t=0}^{m-1}\bbE_{\calB_{s,(t]}}[\normt{\Gamma_{f',\eta}(\bW^{s,t})}^2\vert\calF_{s,0}]+ \eta\sum_{t=0}^{m-1}\normt{\bE_{s,t}}^2. \label{eq:toTel_outer_hat}
\end{align}
For any $S\ge 0$, telescope~\eqref{eq:toTel_outer_hat} over $s=0,1,\ldots,S-1$ and we have
\begin{align}
f^*\le \bbE[f'(\bW^{S,0})] &\le f'(\bW^{0,0}) + \eta\left(\eta L-\frac{1}{2}\right)\sum_{s=0}^{S-1}\sum_{t=0}^{m-1}\bbE[\normt{\Gamma_{f',\eta}(\bW^{s,t})}^2] + \eta\sum_{s=0}^{S-1}\sum_{t=0}^{m-1}\normt{\bE_{s,t}}^2.
\end{align}
Since $\normt{\bE_{s,t}}=o(1/\!\sqrt{ms+t})$, for any $S\ge 1$, there exists a constant $E<\infty$ (independent of $S$) such that $\sum_{s=0}^{S-1}\sum_{t=0}^{m-1}\normt{\bE_{s,t}}^2 \le E$.
Therefore by using option I to choose $\bW_{\rm final}$, we have
\begin{equation}
\bbE[\normt{\Gamma_{f',\eta}(\bW_{\rm final})}^2] \le \left(\frac{f'(\bW^{0,0})-f^*+\eta E}{\eta\left({1}/{2}-\eta L\right)}\right)\frac{1}{mS}.
\end{equation}

\section{Proof of Lemma~\ref{lem:regularity}} \label{sec:proof_regularity}

Define $(\bh^*(\by,\bW),\br^*(\by,\bW))\defeq \min_{\bh\in\calH,\br\in\calR}\tilell_2(\by,\bW,\bh,\br)$. By assumption~\ref{assump:compact_set}, if $\calH$ is not compact, then $\varphi$ will be coercive. This implies the boundedness of $\bh^*(\by,\bW)$. A similar argument also applies to $\br^*\!(\by,\!\bW)$. Thus it is equivalent to consider minimizing $\tilell_2$ over some compact sets $\calH'\subseteq\calH$ and $\calR'\subseteq\calR$.

It is easy to verify that the following conditions hold:
\begin{enumerate}
\item $\tilell_2(\cdot,\cdot, \bh, \br)$ is differentiable on $\bbR^d\times\bbR^{d\times k}$, for each $(\bh,\br)\in\bbR^k\times\bbR^d$,
\item $\tilell_2(\cdot,\cdot,\cdot,\cdot)$ and $(\by,\bW, \bh, \br)\mapsto\nabla_{(\by,\bW)}\tilell_2(\by,\bW, \bh, \br)$ are continuous on $\bbR^d\times\bbR^{d\times k}\times\bbR^k\times\bbR^d$,
\item For any $\by\in\bbR^d$ and $\bW\!\in\!\bbR^{d\times k}$, the minimizer $(\bh^*(\by,\bW),\br^*(\by,\bW))$ is unique, due to assumption~\ref{assump:sc}. 
\end{enumerate}
Thus, we can invoke Danskin's theorem (see Lemma~\ref{lem:Danskin}) to conclude that $\ell_2(\cdot,\cdot)$ is differentiable on $\bbR^d\times\bbR^{d\times k}$, and compute 
\begin{equation}
\nabla_\bW\ell_2(\by,\bW) = \left(\bW\bh^*(\by,\bW)+\br^*(\by,\bW)-\by\right)\bh^*(\by,\bW)^T. \label{eq:nablaW_ell}
\end{equation}
Furthermore, we can show $\bh^*(\by,\bW)$ and $\br^*(\by,\bW)$ are both continuous on $\calY\times\calC$ by the maximum theorem (see Lemma~\ref{lem:maximum}), since the conditions in this theorem are trivially satisfied in our case. Therefore, from \eqref{eq:nablaW_ell}, we can see that $(\by,\bW)\mapsto\nabla_\bW\ell_2(\by,\bW)$ is continuous on $\calY\times \calC$. 

We next show that for all $\by\in\calY$, both $\bh^*(\by,\cdot)$ and $\br^*(\by,\cdot)$ are Lipschitz on $\calC$, with Lipschitz constants independent of $\by$. 
Fix any $\by_1,\by_2\in\calY$ and $\bW_1,\bW_2\in\calC$. Define 
\begin{align*}
D(\bh,\br) &\defeq \tilell_2(\by_1,\bW_1,\bh,\br) - \tilell_2(\by_2,\bW_2,\bh,\br)\\
& =\frac{1}{2}\norm{\by_1-\bY_1\bb(\bh,\br)}_2^2 - \frac{1}{2}\norm{\by_2-\bY_2\bb(\bh,\br)}_2^2
\end{align*}
where $\bY_i = [\bW_i\;\; \bI]$, $i=1,2$ and $\bb(\bh,\br) = [\bh^T\;\;\br^T]^T$. By Lemma~\ref{lem:Lips_b}, we have for all $(\bh_1,\br_1)$ and $(\bh_2,\br_2)$ in $\calH'\times\calR$, there exist constants $c_1,c_2>0$ (independent of $\by_1$, $\by_2$, $\bY_1$ and $\bY_2$) such that 
\begin{align}
\abs{D(\bh_1,\br_1)-D(\bh_2,\br_2)} &\le \left(c_1\norm{\by_1-\by_2}_2 + c_2\norm{\bY_1-\bY_2}_2\right)\norm{\bb(\bh_1,\br_1)-\bb(\bh_2,\br_2)}_2.
\end{align}
In particular, we have
\begin{equation}
\abs{D(\bh^*_1,\br^*_1)-D(\bh^*_2,\br^*_2)} \le \left(c_1\norm{\by_1-\by_2}_2 + c_2\norm{\bY_1-\bY_2}_2\right)\norm{\bb(\bh^*_1,\br^*_1)-\bb(\bh^*_2,\br^*_2)}_2 \label{eq:Lips_upperbd}
\end{equation}
where $\bh^*_i=\bh^*(\by_i,\bW_i)$ and $\br^*_i=\br^*(\by_i,\bW_i)$, $i=1,2$. On the other hand, by Assumption~\ref{assump:sc}, there exist a constant $\nu>0$ such that
\begin{align*}
\abs{D(\bh^*_2,\br^*_2) - D(\bh^*_1,\br^*_1)} &=\abs{\tilell_2(\by_1,\bW_1,\bh^*_2,\br^*_2) - \tilell_2(\by_2,\bW_2,\bh^*_2,\br^*_2) - \tilell_2(\by_1,\bW_1,\bh^*_1,\br^*_1) + \tilell_2(\by_2,\bW_2,\bh^*_1,\br^*_1)}\\
&= (\tilell_2(\by_1,\bW_1,\bh^*_2,\br^*_2)-\tilell_2(\by_1,\bW_1,\bh^*_1,\br^*_1)) + (\tilell_2(\by_2,\bW_2,\bh^*_1,\br^*_1) - \tilell_2(\by_2,\bW_2,\bh^*_2,\br^*_2))\\
&\ge \nu\norm{\bb(\bh^*_1,\br^*_1)-\bb(\bh^*_2,\br^*_2)}_2^2.\numberthis \label{eq:sc_lowerbd}
\end{align*}
Combining \eqref{eq:Lips_upperbd} and \eqref{eq:sc_lowerbd}, we have
\begin{equation}
\norm{\bb(\bh^*_1,\br^*_1)-\bb(\bh^*_2,\br^*_2)}_2 \le c'_1\norm{\by_1-\by_2}_2 + c'_2\norm{\bW_1-\bW_2}_F
\end{equation}
where $c'_i = c_i/\nu$, $i=1,2$. 
Since
\begin{align}
\max\{\norm{\bh^*_1-\bh^*_2}_2,\norm{\br^*_1-\br^*_2}_2\} &\le \norm{\bb(\bh^*_1,\br^*_1)-\bb(\bh^*_2,\br^*_2)}_2\quad\mbox{and}\\
c'_1\norm{\by_1-\by_2}_2 + c'_2\norm{\bW_1-\bW_2}_F &\le 2\max(c'_1,c'_2)\norm{[\by_1\;\;\bW_1]-[\by_2 \;\;\bW_2]}_F,
\end{align}
we have 
\begin{align}
\max\{\norm{\bh^*_1-\bh^*_2}_2,\norm{\br^*_1-\br^*_2}_2\} \le 2\max(c'_1,c'_2)\norm{[\by_1\;\;\bW_1]-[\by_2 \;\;\bW_2]}_F. 
\end{align}
This indeed shows both $\bh^*(\cdot,\cdot)$ and $\br^*(\cdot,\cdot)$ are Lipschitz on $\calY\times\calC$. 

To show the Lipschitz continuity of $\bW\mapsto\nabla_\bW\ell_2(\by,\bW)$ on $\calC$, consider any $\bW_1,\bW_2 \in\calC$ and any $\by\in\calY$. From \eqref{eq:nablaW_ell}, we have 
\begin{align*}
\norm{\nabla_{\bW}\ell_2(\by,\bW_1) - \nabla_{\bW}\ell_2(\by,\bW_2)}_F &\le \norm{\bW_1\bh^*(\by,\bW_1)\bh^*(\by,\bW_1)^T - \bW_2\bh^*(\by,\bW_2)\bh^*(\by,\bW_2)^T}_F \\
&+ \norm{\br^*(\by,\bW_1)\bh^*(\by,\bW_1)^T - \br^*(\by,\bW_2)\bh^*(\by,\bW_2)^T}_F \\
&+ \norm{\by\bh^*(\by,\bW_1)^T - \by\bh^*(\by,\bW_2)^T}_F. \numberthis \label{eq:norm_nabla_ell}
\end{align*}
Define $c\defeq 2\max(c'_1,c'_2)$. We bound each term on the RHS of \eqref{eq:norm_nabla_ell} as follows
\begin{flalign*}
&\norm{\bW_1\bh^*(\by,\bW_1)\bh^*(\by,\bW_1)^T - \bW_2\bh^*(\by,\bW_2)\bh^*(\by,\bW_2)^T}_F& \\
\le \;&\norm{\bW_1}_F\norm{\bh^*(\by,\bW_1)}_2\norm{\bh^*(\by,\bW_1)-\bh^*(\by,\bW_2)}_2 + \norm{\bW_1\bh^*(\by,\bW_1) - \bW_2\bh^*(\by,\bW_2)}_2\norm{\bh^*(\by,\bW_2)}_2&\\
\le \;&\left(c\norm{\bW_1}_F\norm{\bh^*(\by,\bW_1)}_2 + \norm{\bh^*(\by,\bW_1)}_2\norm{\bh^*(\by,\bW_2)}_2 + c\norm{\bW_2}_F\norm{\bh^*(\by,\bW_2)}_2\right)\norm{\bW_1-\bW_2}_F,
\end{flalign*}
\begin{flalign*}
&\norm{\br^*(\by,\bW_1)\bh^*(\by,\bW_1)^T - \br^*(\by,\bW_2)\bh^*(\by,\bW_2)^T}_F&\\
\le\;&\norm{\br^*(\by,\bW_1)}_2\norm{\bh^*(\by,\bW_1)-\bh^*(\by,\bW_2)}_2 + \norm{\br^*(\by,\bW_1)-\br^*(\by,\bW_2)}_2\norm{\bh^*(\by,\bW_2)}_2&\\
\le\;&c(\norm{\br^*(\by,\bW_1)}_2 + \norm{\bh^*(\by,\bW_2)}_2)\norm{\bW_1-\bW_2}_F,&
\end{flalign*}
\begin{flalign*}
&\quad\norm{\by\bh^*(\by,\bW_1)^T - \by\bh^*(\by,\bW_2)^T}\le c\norm{\by}_2\norm{\bW_1-\bW_2}_F.&
\end{flalign*}
Using compactness of $\calY$, $\calC$, $\calH'$ and $\calR'$, we complete the proof. 


\section{Proof of Lemma~\ref{lem:var_bound} }\label{sec:proof_var_bound}
From Lemma~\ref{lem:regularity}, we know 
\begin{equation}
\bV_{s,t} = \nabla g_{\calB_{s,t}}(\bW^{s,t}) - \nabla g_{\calB_{s,t}}(\bW^{s,0}) + \nabla g(\bW^{s,0}), 
\end{equation}
where
\begin{equation}
\nabla g_{\calB_{s,t}} (\bW) = \frac{1}{\abs{\calB_{s,t}}} \sum_{i\in\calB_{s,t}} \nabla_\bW \ell_2(\by_i,\bW), \;\forall\,\bW\in\bbR^{d\times k}. 
\end{equation}
Thus 
\begin{align}
\bbE_{\calB_{s,t}}[\normt{\bV_{s,t}-\nabla g(\bW^{s,t})}^2] &\eqa \bbE_{\calB_{s,t}}[\normt{(\nabla g_{\calB_{s,t}}(\bW^{s,t})-\nabla g_{\calB_{s,t}}(\bW^{s,0}))-(\nabla g(\bW^{s,t}))-\nabla g(\bW^{s,0}))}^2]\\
&\leb\frac{n-b}{b(n-1)}\left(\frac{1}{n}\sum_{i\in[n]}\normt{\nabla_\bW\ell_2(\by_i,\bW^{s,t})-\nabla_\bW\ell_2(\by_i,\bW^{s,0})}^2\right)\\
&\lec \frac{n-b}{b(n-1)} L^2\normt{\bW^{s,t}-\bW^{s,0}}^2,
\end{align}
where in (a) we use \eqref{eq:deriv_ell2} in Lemma~\ref{lem:regularity}, in (b) we use Lemma~\ref{lem:uni_samp_wo} and in (c) we use the Lipschitz continuity of $\bW\!\mapsto\!\nabla_\bW\ell_2(\by,\bW)$ in Lemma~\ref{lem:regularity}.

\section{Algorithms for SMF via SMM and SGD}

For completeness, we present the algorithms to solve SMF problems via SMM and (proximal) SGD in Algorithms~\ref{algo:SMF_SMM} and~\ref{algo:SMF_SGD} respectively. 

\begin{algorithm}[t]
\caption{Stochastic Matrix Factorization via SMM~\cite{Mairal_10}} \label{algo:SMF_SMM}
\begin{algorithmic}[1]
\State {\bf Input}: Number of iterations $M$, mini-batch size $b$ 
\State {\bf Initialize} dictionary $\bW^{0}\in\calC$, sufficient statistics $\bA_0:=\vecz_{k\times k}$, $\bB_0:=\vecz_{d\times k}$ 
\State {\bf For} $t = 0, 1, \ldots, M-1$ 
\State \quad Uniformly randomly sample $\calB_{t}\subseteq[n]$ of size $b$ without replacement 
\State \quad Solve $(\bh^{t}_j,\br^{t}_j)\defeq \argmin_{\bh\in\calH,\br\in\calR} \tilell_2(\by_{j},\bW^{t},\bh,\br)$, for any $j\!\in\!\calB_{t}$. 
\State \quad $\bA_{t+1}:= \bA_t + \sum_{j\in\calB_{t}}\bh^t_j(\bh^t_j)^T$, $\bB_{t+1}:=\bB_t + \sum_{j\in\calB_{s,t}}(\by^t_j-\br^t_j)(\bh^t_j)^T$
\State \quad $\bW^{t+1}:=\argmin_{\bW\in\calC}\frac{1}{t}\left(\frac{1}{2}\tr(\bW^T\bW\bA_{t+1})-\tr(\bW^T\bB_{t+1})\right)$
\State {\bf End for}
\end{algorithmic}
\end{algorithm}

\begin{algorithm}[t]
\caption{Stochastic Matrix Factorization via SGD~\cite{Mairal_10,Zhao_17b}} \label{algo:SMF_SGD}
\begin{algorithmic}[1]
\State {\bf Input}: Number of iterations $M$, mini-batch size $b$, step sizes $\{\gamma_t\}_{t\ge 0}$ 
\State {\bf Initialize} dictionary $\bW^{0}\in\calC$ 
\State {\bf For} $t = 0, 1, \ldots, M-1$ 
\State \quad Uniformly randomly sample $\calB_{t}\subseteq[n]$ of size $b$ without replacement 
\State \quad Solve $(\bh^{t}_j,\br^{t}_j)\defeq \argmin_{\bh\in\calH,\br\in\calR} \tilell_2(\by_{j},\bW^{t},\bh,\br)$, for any $j\!\in\!\calB_{s,t}$. 
\State \quad $\bV_{t}:=\frac{1}{b}\sum_{j\in\calB_{t}}(\bW^{t}\bh_j^{t}+\br_j^{t}-\by_j)(\bh_j^{t})^T$
\State \quad $\bW^{t+1}:=\prox_{\eta\psi+\delta_\calC}(\bW^{t}-\gamma_t\bV_{t})$ \label{line:prox_W}
\State {\bf End for}
\end{algorithmic}
\end{algorithm}

\section{Additional Experimental Results and Discussions}\label{sec:exp_discussion}

\begin{figure}[t]\centering
\subfloat[VR-ODL ({\tt CBCL})]{\includegraphics[width=.24\columnwidth,height=.18\columnwidth]{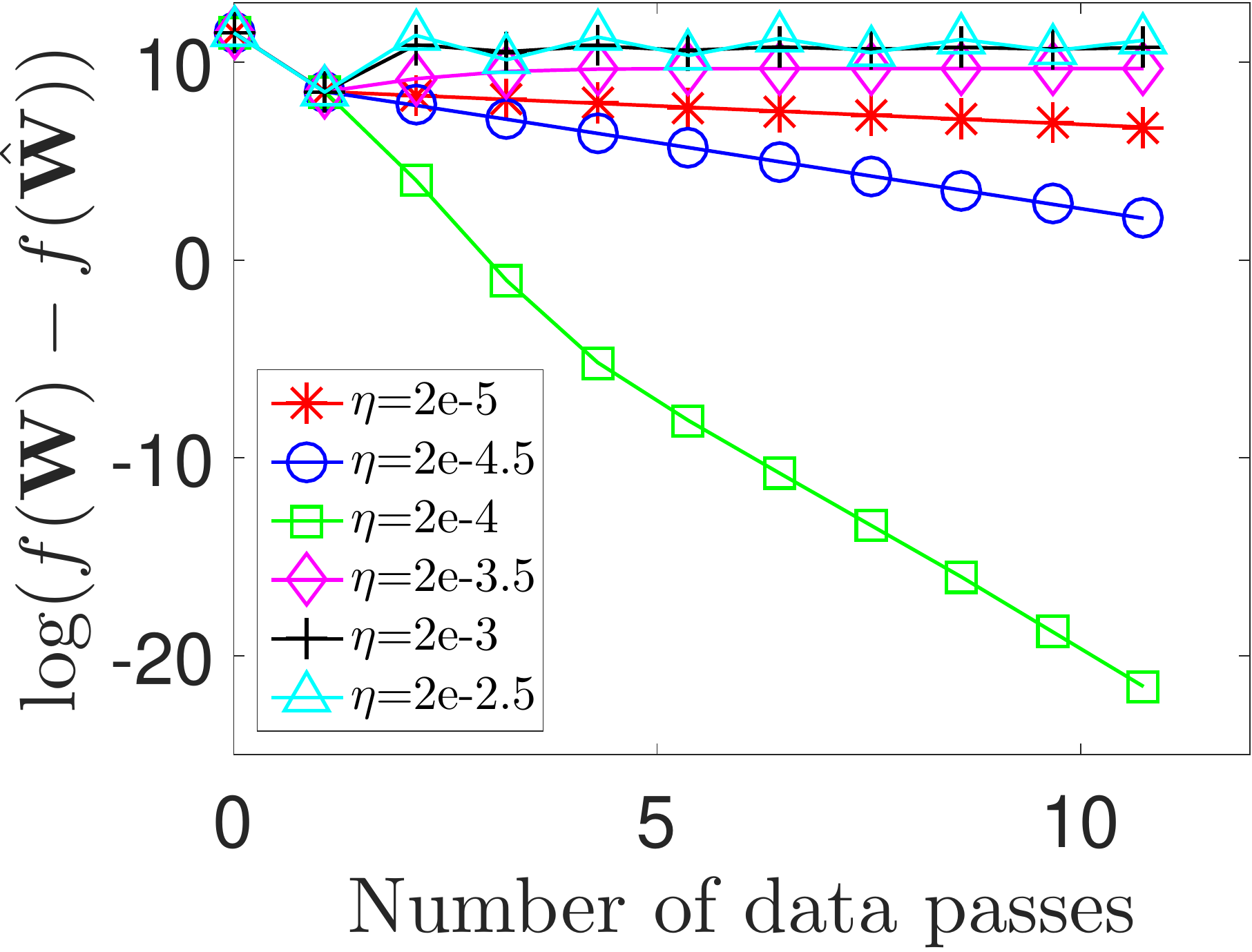}}\hfill
\subfloat[VR-ODL ({\tt MNIST})]{\includegraphics[width=.24\columnwidth,height=.18\columnwidth]{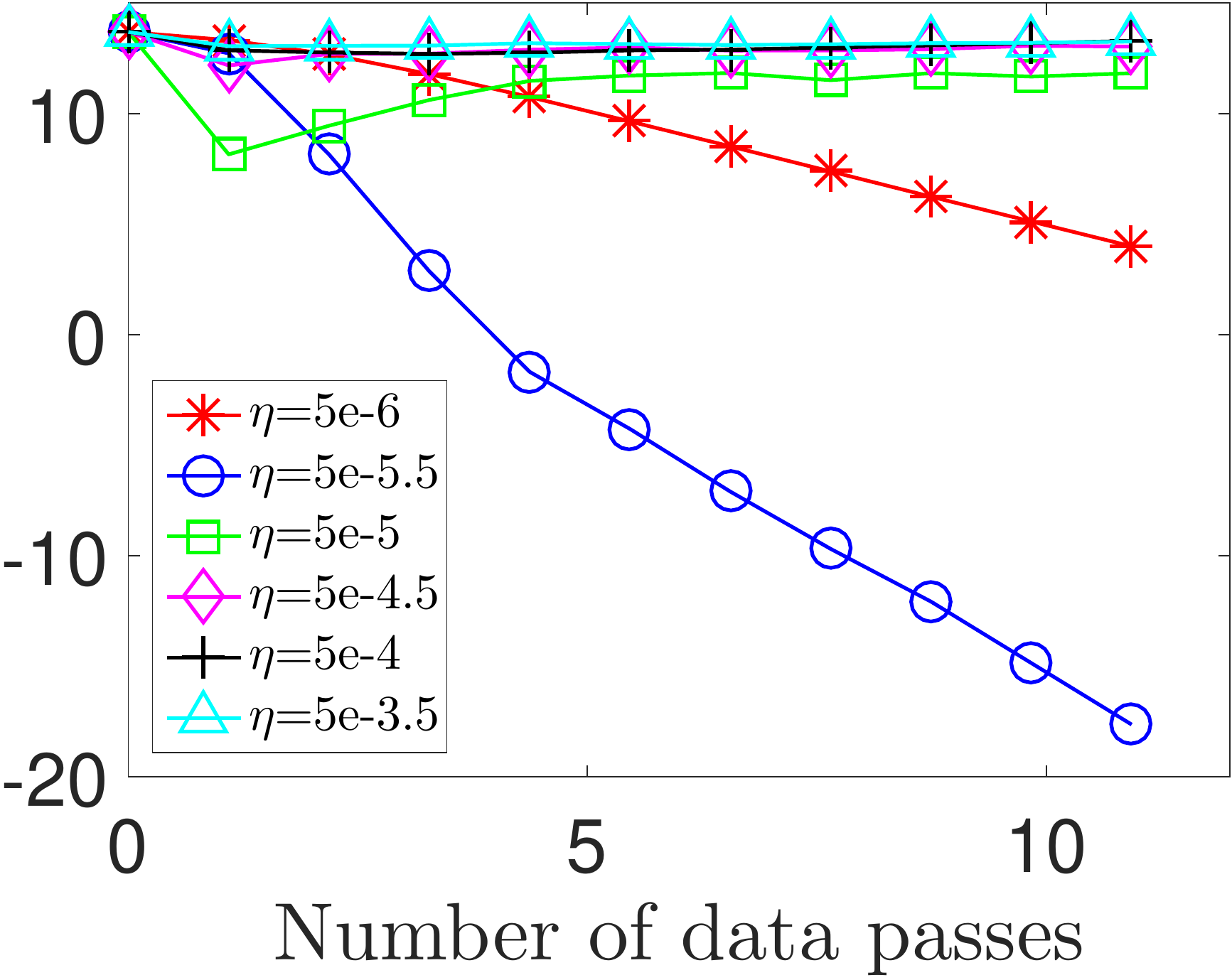}}\hfill
\subfloat[VR-ONMF ({\tt CBCL})]{\includegraphics[width=.24\columnwidth,height=.18\columnwidth]{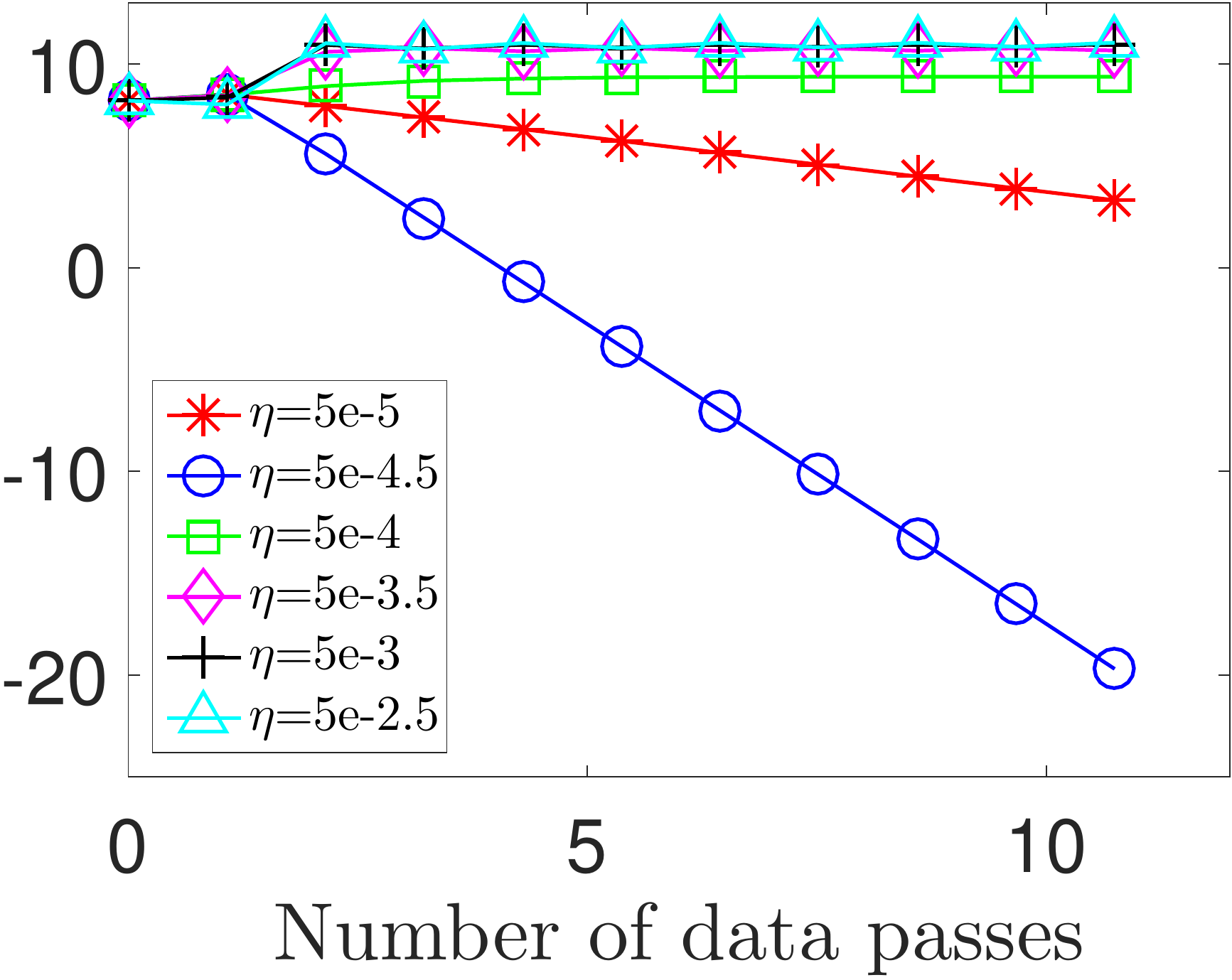}}\hfill
\subfloat[VR-ONMF ({\tt MNIST})]{\includegraphics[width=.24\columnwidth,height=.18\columnwidth]{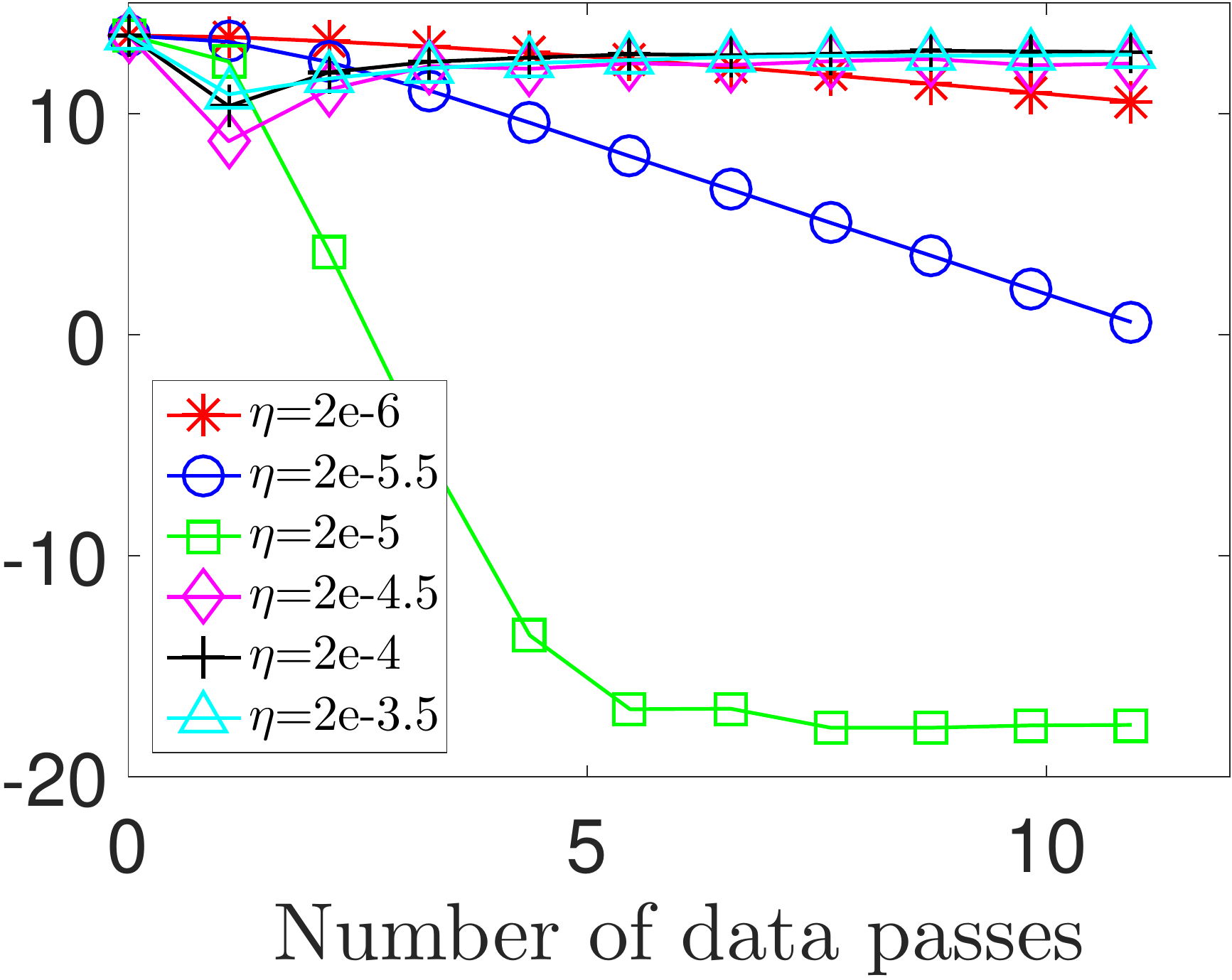}}\\
\subfloat[VR-ORPCA ({\tt Synth})]{\includegraphics[width=.24\columnwidth,height=.175\columnwidth]{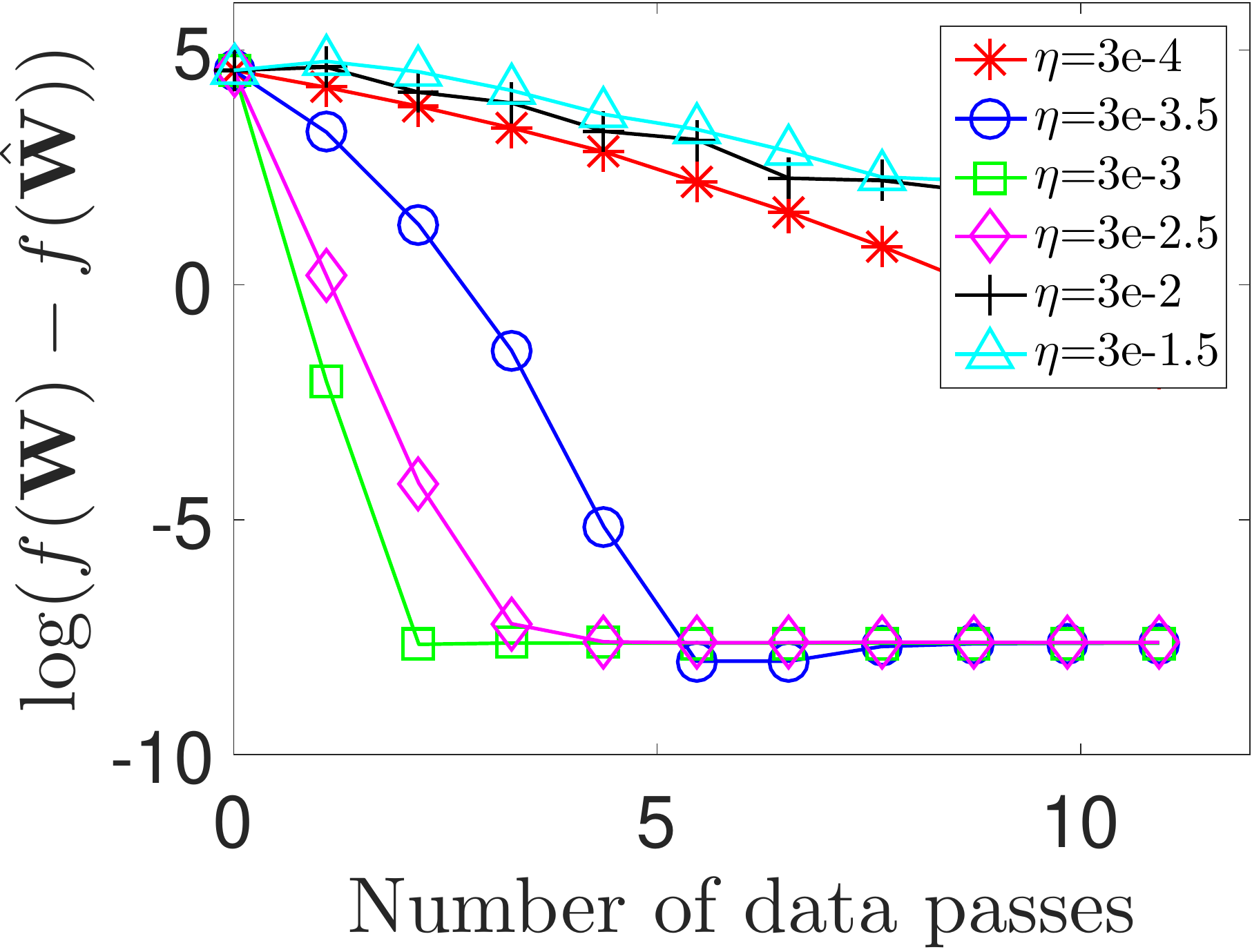}}\hfill
\subfloat[VR-ORPCA ({\tt YaleB})]{\includegraphics[width=.24\columnwidth,height=.175\columnwidth]{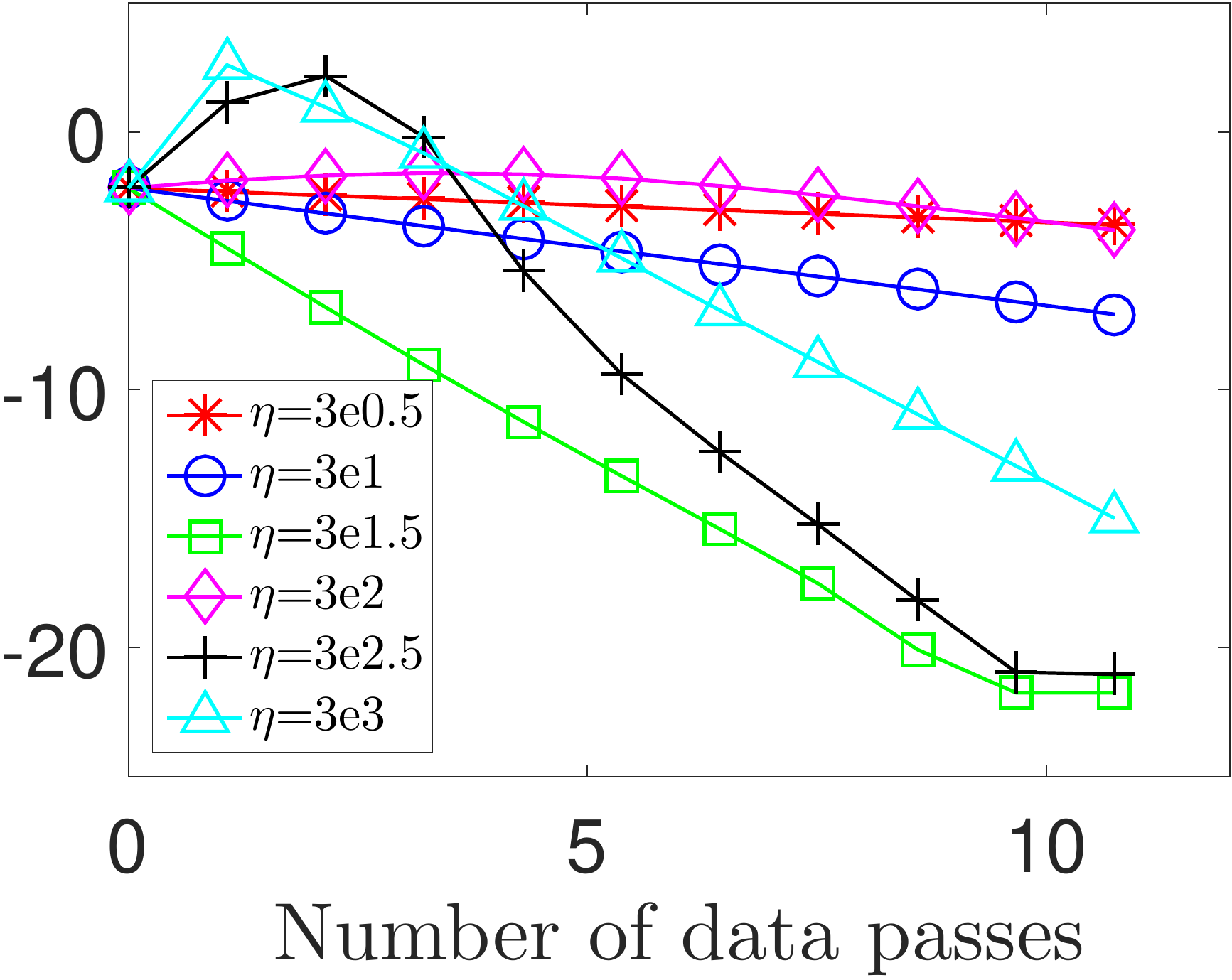}}\hfill
\subfloat[VR-ORNMF ({\tt Synth})]{\includegraphics[width=.24\columnwidth,height=.18\columnwidth]{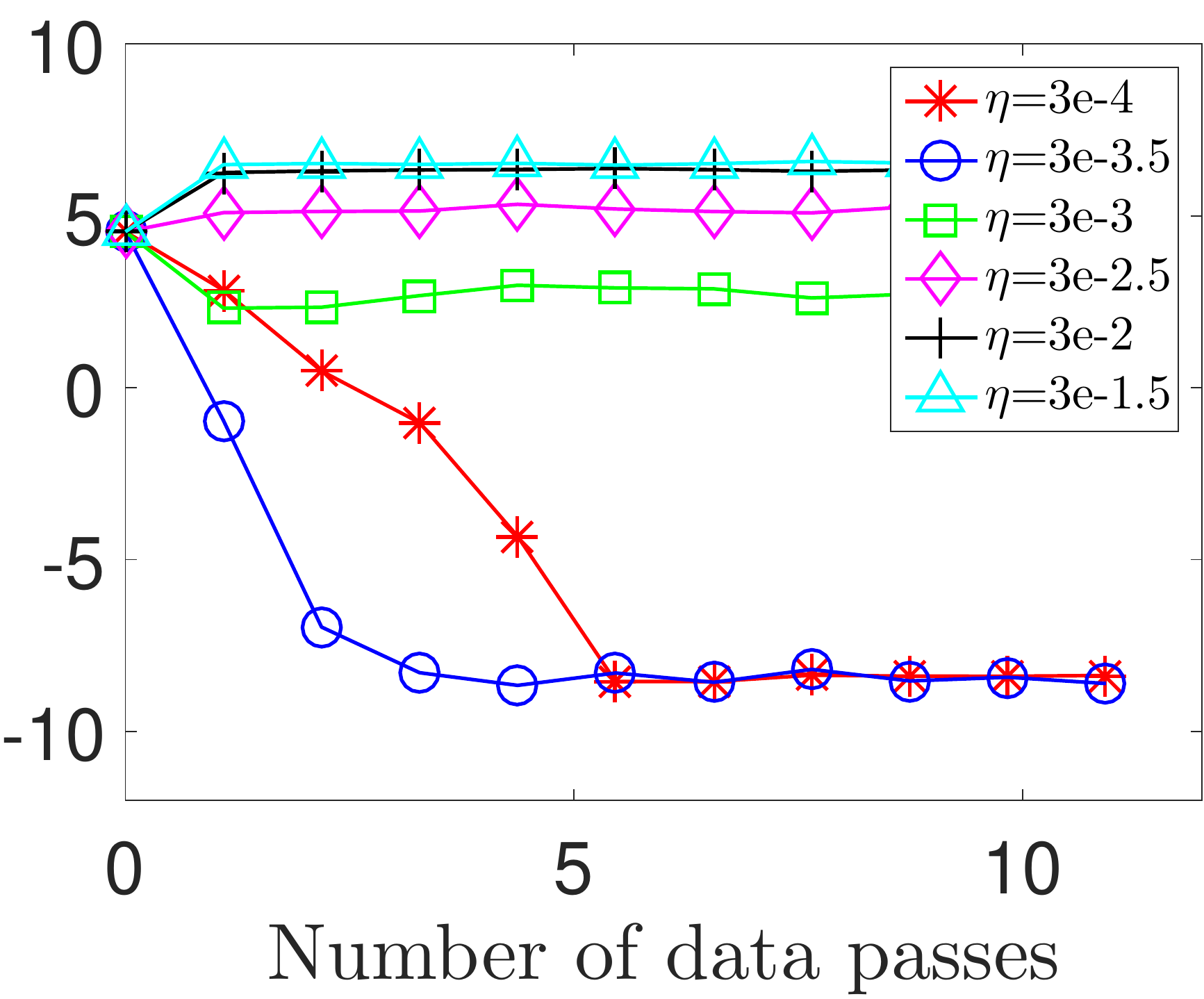}}\hfill
\subfloat[VR-ORNMF ({\tt YaleB})]{\includegraphics[width=.24\columnwidth,height=.18\columnwidth]{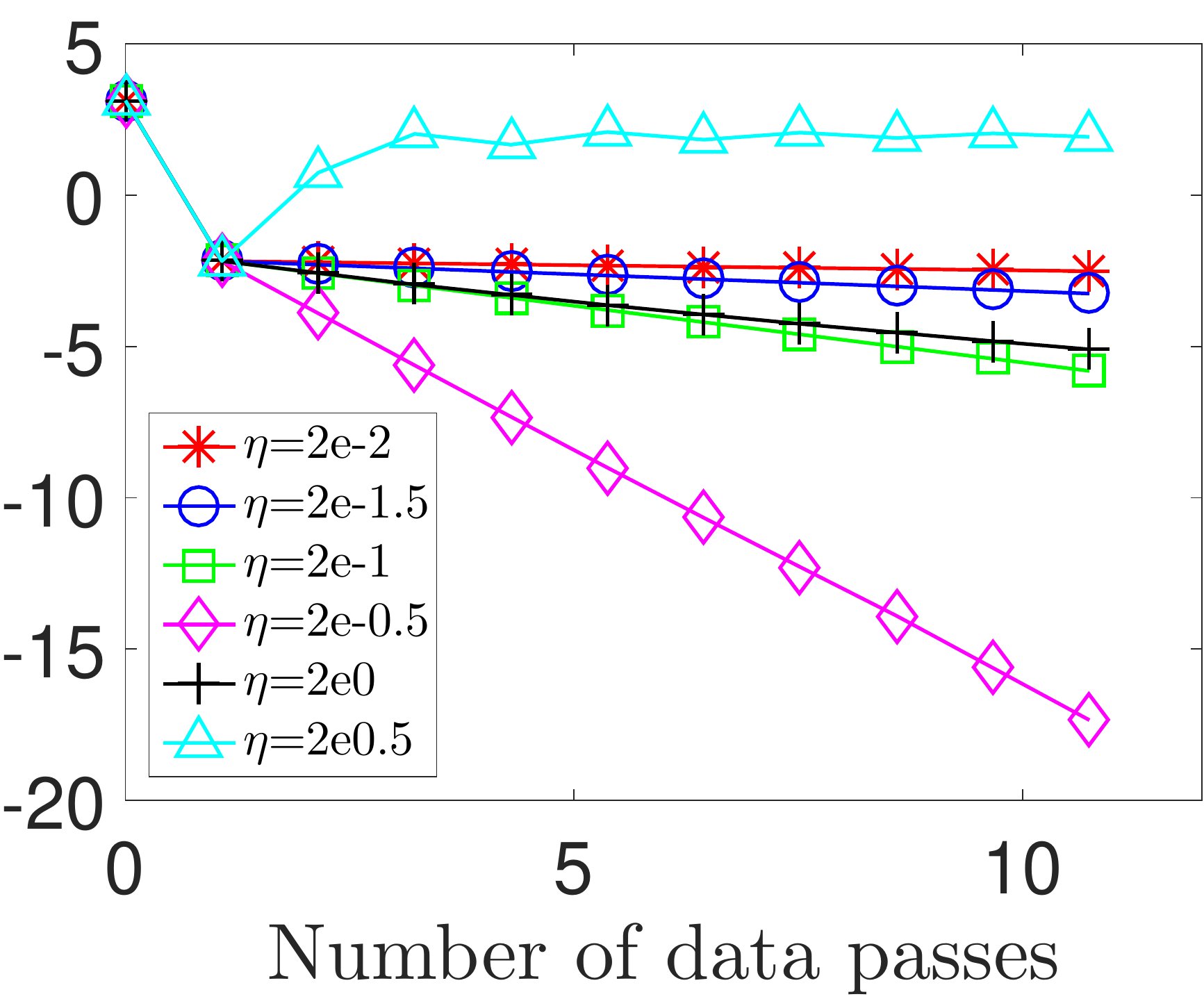}}
\caption{Log suboptimality versus number of passes through data of all the four variance-reduced algorithms (VR-ODL, VR-ONMF, VR-ORPCA and VR-ORNMF) on different datasets ({\tt CBCL}, {\tt MNIST}, {\tt Synth} and {\tt ExtYaleB}) datasets with different step sizes $\eta$. \label{fig:etas}}
\end{figure}

Additional experimental results are shown in Figure~\ref{fig:etas}. Next, we provide some intuitions about the observations in Section 5.3. As shown in~\cite{Raza_16}, under mild conditions, the SMM method can be regarded as a spacial case of the SGD method (with diminishing step sizes). Since the step size vanishes asymptotically, both SMM and SGD will  make minute progress in learning the dictionary $\bW$ after a large number of iterations (or equivalently, data samples). Therefore, they fail to find highly accurate stationary points. In contrast, our variance-reduced method employs a {\em constant} step size, therefore it continues to make non-negligible progress asymptotically, so as to reach a much higher accuracy. Similar arguments can also explain the results in Section~5.4. 


\section{Technical Lemmas}

\begin{lemma}[Danskin's Theorem {\cite{Bon_98}}] \label{lem:Danskin}
Let $\calX$ be a metric space and $\calU$ be a normed vector space. Let $f:\calX\times \calU\to\bbR$ have the following properties
\begin{enumerate}
\item $f(x,\cdot)$ is differentiable on $\calU$, for any $x\in\calX$,
\item $f(x,u)$ and $\nabla_u f(x,u)$ are continuous on $\calX\times\calU$.
\end{enumerate}
Let $\Phi$ be a compact set in $\calX$. Define $v(u) \defeq \inf_{x\in\Phi}f(x,u)$ and $\calS(u) \defeq \argmin_{x\in\Phi}f(x,u)$. 
If for some $u_0\in\calU$, $\calS(u_0) = \{x_0\}$, then $v$ is (Hadamard) differentiable at $u=u_0$ and $\nabla v(u_0) = \nabla_uf(x_0,u_0)$.
\end{lemma}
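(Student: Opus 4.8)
The plan is to prove Danskin's theorem by sandwiching the one-sided directional difference quotients of $v$ between matching upper and lower bounds, both equal to the linear functional $d\mapsto\lranglet{\nabla_u f(x_0,u_0)}{d}$, and then to upgrade this pointwise directional differentiability to Hadamard differentiability. First I would record that the problem is well posed: compactness of $\Phi$ together with continuity of $f(\cdot,u)$ (a consequence of the joint continuity hypothesis) guarantees via Weierstrass that $\calS(u)\neq\emptyset$ for every $u$, so the infimum defining $v$ is attained and $v$ is finite. The upper bound is then immediate from the fixed competitor $x_0$: since $v(u_0+td)\le f(x_0,u_0+td)$ and $v(u_0)=f(x_0,u_0)$, differentiability of $f(x_0,\cdot)$ at $u_0$ gives $\limsup_{t\downarrow 0}\tfrac{1}{t}(v(u_0+td)-v(u_0))\le\lranglet{\nabla_u f(x_0,u_0)}{d}$.

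Next I would establish the matching lower bound by selecting, for each $t>0$, a minimizer $x_t\in\calS(u_0+td)$, so that $v(u_0+td)=f(x_t,u_0+td)$. Because $x_0$ minimizes $f(\cdot,u_0)$ we have $f(x_0,u_0)\le f(x_t,u_0)$, whence $\tfrac{1}{t}(v(u_0+td)-v(u_0))\ge\tfrac{1}{t}(f(x_t,u_0+td)-f(x_t,u_0))$. Applying the one-dimensional mean value theorem to $s\mapsto f(x_t,u_0+sd)$ on $[0,t]$ rewrites the right-hand side as $\lranglet{\nabla_u f(x_t,u_0+\theta_t td)}{d}$ for some $\theta_t\in(0,1)$.

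The hard part will be passing to the limit in this lower bound, which hinges on showing $x_t\to x_0$ as $t\downarrow 0$; this is exactly the upper semicontinuity of the solution map at the singleton $\calS(u_0)=\{x_0\}$, and it is where compactness and uniqueness are both indispensable. I would argue it along any sequence $t_n\downarrow 0$: compactness of $\Phi$ extracts a subsequence with $x_{t_n}\to\bar x$; the upper bound yields $\limsup_n f(x_{t_n},u_0+t_n d)\le f(x_0,u_0)=v(u_0)$, while joint continuity of $f$ gives $f(x_{t_n},u_0+t_n d)\to f(\bar x,u_0)\ge v(u_0)$, so $f(\bar x,u_0)=v(u_0)$ and hence $\bar x\in\calS(u_0)=\{x_0\}$, i.e. $\bar x=x_0$. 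Since every subsequential limit is $x_0$, we get $x_t\to x_0$, and continuity of $\nabla_u f$ then forces $\nabla_u f(x_t,u_0+\theta_t td)\to\nabla_u f(x_0,u_0)$, giving $\liminf_{t\downarrow 0}\tfrac{1}{t}(v(u_0+td)-v(u_0))\ge\lranglet{\nabla_u f(x_0,u_0)}{d}$. Combined with the upper bound, $v$ has a two-sided directional derivative in every direction $d$ equal to the continuous linear functional $\lranglet{\nabla_u f(x_0,u_0)}{d}$.

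Finally, to obtain Hadamard rather than merely Gâteaux differentiability I would run the same sandwich along arbitrary sequences $t_n\downarrow 0$ and $d_n\to d$. The upper bound then uses (Hadamard/Fréchet) differentiability of $f(x_0,\cdot)$ at $u_0$ applied to $f(x_0,u_0+t_n d_n)$, and the lower bound again uses $x_{t_n}\to x_0$ together with the uniform continuity of $\nabla_u f$ on a compact neighbourhood of $(x_0,u_0)$ to control $\lranglet{\nabla_u f(x_{t_n},u_0+\theta_n t_n d_n)}{d_n}$. Both limits equal $\lranglet{\nabla_u f(x_0,u_0)}{d}$, which is precisely the statement that $v$ is Hadamard differentiable at $u_0$ with $\nabla v(u_0)=\nabla_u f(x_0,u_0)$.
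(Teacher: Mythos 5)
The paper does not actually prove this lemma: it is imported as a black-box technical tool from the cited reference \cite{Bon_98} in the ``Technical Lemmas'' section, so there is no in-paper argument to compare yours against. Your self-contained proof is the standard sandwich argument for Danskin's theorem and is essentially correct: the upper bound from the fixed competitor $x_0$, the lower bound via a measurable selection $x_t\in\calS(u_0+td)$ plus the mean value theorem, the upper semicontinuity of the solution map at the singleton $\calS(u_0)=\{x_0\}$ (compactness to extract limits, the sandwich plus joint continuity to identify every limit point as a minimizer, uniqueness to conclude $x_t\to x_0$), and the final limit using continuity of $\nabla_u f$ are all sound. Two small points are worth tightening. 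First, for the Hadamard upper bound along $t_n\downarrow 0$, $d_n\to d$ you invoke Hadamard/Fr\'echet differentiability of $f(x_0,\cdot)$, which is more than hypothesis 1 grants; but the same mean value theorem device you use for the lower bound, namely $f(x_0,u_0+t_nd_n)-f(x_0,u_0)=t_n\lranglet{\nabla_u f(x_0,u_0+\theta_n t_n d_n)}{d_n}$, together with continuity of $\nabla_u f$ at $(x_0,u_0)$, gives the upper bound with no upgrade of the hypotheses (and plain continuity at the point suffices in place of uniform continuity on a compact neighbourhood, which need not exist when $\calU$ is infinite-dimensional). Second, you only treat $t\downarrow 0$; to obtain an actual derivative you should observe that the one-sided directional derivative you compute is linear in $d$, so applying it to $-d$ yields the two-sided limit. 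Neither point is a genuine gap.
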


\begin{lemma}[The Maximum Theorem {\cite[Theorem 14.2.1]{Syd_05}}]\label{lem:maximum}
Let $\calP$ and $\calX$ be two metric spaces. Consider a maximization problem
\begin{equation}
\max_{x\in \calB(p)} f(p,x), \label{eq:max_problem}
\end{equation}
where $\calB:\calP \rightrightarrows \calX$ is a set-valued map and  $f:\calP\times \calX\to\bbR$ is a function. 
Define the set-valued map $\calS(p) \defeq \argmax_{x\in \calB(p)} f(p,x)$.
If $\calB$ is compact-valued and continuous on $\calP$ and $f$ is continuous on $\calP\times \calX$, and if for some $p_0\in\calP$, $\calS(p_0)=\{s(p_0)\}$, where $s:\calP\to\calX$ is a function, then $s$ is continuous at $p=p_0$. Moreover, we have the same conclusions if the maximization in \eqref{eq:max_problem} is replaced by minimization. 
\end{lemma}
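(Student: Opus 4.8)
The plan is to reduce the statement to the standard upper-hemicontinuity of the argmax correspondence $\calS$ and then exploit the singleton hypothesis. The key observation is that once $\calS$ is known to be upper hemicontinuous at $p_0$ (meaning: for every open $V\supseteq\calS(p_0)$ there is a neighborhood $N$ of $p_0$ with $\calS(p)\subseteq V$ for all $p\in N$), the conclusion is immediate. Indeed, given any open $V\ni s(p_0)$, upper hemicontinuity produces a neighborhood $N$ of $p_0$ on which $s(p)\in\calS(p)\subseteq V$, which is exactly continuity of $s$ at $p_0$. Hence the whole content lies in establishing this upper-hemicontinuity of $\calS$ at $p_0$. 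Moreover, since replacing $f$ by $-f$ swaps maximization and minimization without altering the continuity hypotheses, it suffices to treat the maximization problem; the minimization claim then follows verbatim.

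I would argue sequentially. Fix $p_n\to p_0$ and choose $x_n\in\calS(p_n)\subseteq\calB(p_n)$; the aim is to show that every convergent subsequence of $\{x_n\}$ has limit $s(p_0)$. I split this into two ingredients. First, \emph{feasibility of limit points}: by upper hemicontinuity of $\calB$ together with compactness of $\calB(p_0)$, for each $\varepsilon>0$ the set $\calB(p_n)$ is contained in the $\varepsilon$-neighborhood of $\calB(p_0)$ for all large $n$, so any limit point $\bar x$ of $\{x_n\}$ lies within $\varepsilon$ of $\calB(p_0)$ for every $\varepsilon$; as $\calB(p_0)$ is closed, $\bar x\in\calB(p_0)$. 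Second, \emph{optimality of limit points}: supposing $x_{n_k}\to\bar x$ and taking an arbitrary $y\in\calB(p_0)$, lower hemicontinuity of $\calB$ yields $y_k\in\calB(p_{n_k})$ with $y_k\to y$, while optimality of $x_{n_k}$ gives $f(p_{n_k},x_{n_k})\ge f(p_{n_k},y_k)$; passing to the limit and using joint continuity of $f$ produces $f(p_0,\bar x)\ge f(p_0,y)$. Since $y\in\calB(p_0)$ was arbitrary, $\bar x\in\calS(p_0)=\{s(p_0)\}$, so $\bar x=s(p_0)$.

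The remaining point, and the one I expect to be the main obstacle, is guaranteeing that $\{x_n\}$ actually \emph{admits} convergent subsequences, so that the two ingredients above force $x_n\to s(p_0)$ by the usual "every subsequence has a further subsequence with the same limit" argument. This is where compactness must be handled with care: upper hemicontinuity of $\calB$ only confines $x_n$ eventually to the $\varepsilon$-neighborhood of the compact set $\calB(p_0)$, and in an arbitrary metric space such a neighborhood need not be relatively compact. In the setting where this lemma is invoked (Lemma~\ref{lem:regularity}), however, $\calX$ is a finite-dimensional Euclidean space and $\calB$ takes values in a fixed compact set ($\calH'\times\calR'$), so every such sequence is bounded and admits convergent subsequences; more generally one simply assumes the images near $p_0$ lie in a relatively compact set. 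With this compactness secured, the sequential characterization yields $s(p_n)\to s(p_0)$, establishing continuity of $s$ at $p_0$ and completing the proof.
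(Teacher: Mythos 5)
The paper offers no proof of this lemma: it is imported from \cite[Theorem~14.2.1]{Syd_05} as an off-the-shelf technical tool, so there is no in-paper argument to compare against. Your proof is the standard sequential proof of Berge's maximum theorem specialized to a singleton argmax, and its structure is sound: feasibility of limit points via upper hemicontinuity of $\calB$ and closedness of the compact set $\calB(p_0)$; optimality of limit points via lower hemicontinuity of $\calB$ (to approximate an arbitrary $y\in\calB(p_0)$ by feasible $y_k\in\calB(p_{n_k})$) together with joint continuity of $f$; and the ``every subsequence has a further subsequence converging to $s(p_0)$'' argument to conclude $s(p_n)\to s(p_0)$. The reduction of the minimization case to maximization via $f\mapsto -f$ is also fine.

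The one step you flag as the main obstacle --- that $\{x_n\}$ admits convergent subsequences at all --- does not in fact require the fallback you invoke (restriction to the Euclidean setting of Lemma~\ref{lem:regularity}, or an added local relative-compactness hypothesis). For a compact-valued correspondence between metric spaces, upper hemicontinuity at $p_0$ already forces every sequence $x_n\in\calB(p_n)$ with $p_n\to p_0$ to have a subsequential limit in $\calB(p_0)$: if no point of $\calB(p_0)$ were a subsequential limit of $\{x_n\}$, then every $y\in\calB(p_0)$ would admit an open ball around it containing only finitely many of the $x_n$; extracting a finite subcover of the compact set $\calB(p_0)$ from these balls produces an open set $V\supseteq\calB(p_0)$ containing only finitely many $x_n$, contradicting the fact that upper hemicontinuity puts $x_n\in\calB(p_n)\subseteq V$ for all large $n$. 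Inserting this observation closes your gap and yields the lemma in the stated generality, with no extra assumptions. (Two small points worth making explicit if you write this up: $\calS(p_n)\neq\emptyset$ because $\calB(p_n)$ is nonempty and compact and $f(p_n,\cdot)$ is continuous, and the statement implicitly takes $s$ to be a selection of $\calS$, i.e., $s(p)\in\calS(p)$ for all $p$ near $p_0$, which is how it is used in the proof of Lemma~\ref{lem:regularity}.)
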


\begin{lemma} \label{lem:Lips_b}
Let $\bz$, $\bz' \in\calZ\subseteq \bbR^m$ and $\bA$, $\bA'\in\calA\subseteq \bbR^{m\times n}$, where both $\calZ$ and $\calA$ are compact sets. Let $\calB$ be a compact set in $\bbR^n$, and define $g:\calB\to\bbR$ as $g(\bb) = 1/2\norm{\bz-\bA\bb}_2^2 - 1/2 \norm{\bz'-\bA'\bb}^2_2$. Then $g$ is Lipschitz on $\calB$ with Lipschitz constant $c_1\norm{\bz-\bz'}_2 + c_2\norm{\bA-\bA'}_2$, where $c_1$ and $c_2$ are two positive constants. In particular, when both $\bz'$ and $\bA'$ are zero, we have that $\tilg(\bb) = 1/2\norm{\bz-\bA\bb}_2^2$ is Lipschitz on $\calB$ with Lipschitz constant $c$ independent of $\bz$ and $\bA$. 
\end{lemma}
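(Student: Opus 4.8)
The plan is to exploit that $g$ is a quadratic polynomial in $\bb$, hence continuously differentiable on all of $\bbR^n$, and to bound its gradient uniformly on $\calB$ by a quantity of the prescribed form. A direct computation gives
\[
\nabla g(\bb) = \bA^T(\bA\bb-\bz) - (\bA')^T(\bA'\bb-\bz') = \big(\bA^T\bA-(\bA')^T\bA'\big)\bb - \big(\bA^T\bz-(\bA')^T\bz'\big).
\]
The essential observation is that $g\equiv 0$ whenever $\bz=\bz'$ and $\bA=\bA'$, so the Lipschitz constant must vanish in that regime; I will make this structure explicit through an add-and-subtract manipulation on each of the two bracketed quantities above.

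First I would telescope, writing $\bA^T\bA-(\bA')^T\bA' = \bA^T(\bA-\bA') + (\bA-\bA')^T\bA'$ and $\bA^T\bz-(\bA')^T\bz' = \bA^T(\bz-\bz') + (\bA-\bA')^T\bz'$. Taking Euclidean/operator norms and using the compactness of $\calA$, $\calZ$ and $\calB$ to fix finite constants $A,Z,B$ with $\norm{\bA}_2,\norm{\bA'}_2\le A$, $\norm{\bz'}_2\le Z$ and $\norm{\bb}_2\le B$, I obtain, for every $\bb$ with $\norm{\bb}_2\le B$,
\[
\norm{\nabla g(\bb)}_2 \le A\,\norm{\bz-\bz'}_2 + (2AB+Z)\,\norm{\bA-\bA'}_2,
\]
which identifies $c_1=A$ and $c_2=2AB+Z$.

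Next I would upgrade this uniform gradient bound to a Lipschitz estimate. Since $\calB$ need not be convex, I would first note that the displayed bound in fact holds on the convex hull $\conv\calB$ (which also lies in the ball $\{\norm{\bb}_2\le B\}$); then for any $\bb_1,\bb_2\in\calB$ the segment $[\bb_1,\bb_2]$ is contained in $\conv\calB$, and the mean value inequality yields $\abs{g(\bb_1)-g(\bb_2)}\le (c_1\norm{\bz-\bz'}_2+c_2\norm{\bA-\bA'}_2)\norm{\bb_1-\bb_2}_2$, as claimed. For the special case, substituting $\bz'=\vecz$ and $\bA'=\vecz$ collapses the gradient to $\bA^T(\bA\bb-\bz)$, whose norm is at most $A(AB+Z)$; hence $\tilg$ is Lipschitz with constant $c=A^2B+AZ$, which depends only on the uniform bounds $A,B,Z$ and is therefore independent of the particular $\bz\in\calZ$ and $\bA\in\calA$.

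The gradient computation and the norm bookkeeping are entirely routine; the only point that warrants a moment's care is the possible non-convexity of $\calB$, which I handle by passing to $\conv\calB$ before applying the mean value inequality. I anticipate no substantive obstacle.
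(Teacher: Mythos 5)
Your proposal is correct and follows essentially the same route as the paper's proof: compute $\nabla g(\bb)=(\bA^T\bA-(\bA')^T\bA')\bb-(\bA^T\bz-(\bA')^T\bz')$, telescope each bracket by adding and subtracting cross terms, and bound everything via the compactness of $\calZ$, $\calA$ and $\calB$, arriving at the same constants ($c_1=M_2$ and $c_2=M_1+2M_2M_3$ in the paper's notation). Your additional step of passing to $\conv\calB$ before invoking the mean value inequality is a small point of rigor that the paper leaves implicit but that does no harm and slightly tightens the argument.
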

\begin{proof}
It suffices to show $\norm{\nabla g(\bb)}_2 \le c_1\norm{\bz-\bz'}_2 + c_2\norm{\bA-\bA'}_2$ for any $\bb\in\calB$ and some positive constants $c_1$ and $c_2$ (independent of $\bb$). We write  $\norm{\nabla g(\bb)}_2$ as
\begin{align*}
\norm{\nabla g(\bb)}_2 &= \normt{{\bA'}^T\left(\bA'\bb - \bz'\right) - \bA^T\left(\bA\bb-\bz\right)}_2\\
& = \normt{({\bA'}^T\bA'-\bA^T\bA)\bb - ({\bA'}^T\bz' - \bA^T\bz)}_2\\
& \le \underbrace{\normt{{\bA'}^T\bA'-\bA^T\bA}_2\norm{\bb}_2}_{\lrang{1}} + \underbrace{\normt{{\bA'}^T\bz' - \bA^T\bz}_2}_{\lrang{2}}.
\end{align*}
By the  compactness of $\calZ$, $\calA$ and $\calB$, there exist positive  constants $M_1$, $M_2$ and $M_3$ such that $\norm{\bz}_2\le M_1$, $\norm{\bA}_2\le M_2$ and $\norm{\bb}_2\le M_3$, for any $\bz\in\calZ$, $\bA\in\calA$ and $\bb\in\calB$. Thus, 
\begin{align*}
\lrang{1} &\le M_3\normt{\bA^T\bA-\bA^T\bA'+\bA^T\bA'-{\bA'}^T\bA'}_2\\
&\le  M_3\lrpar{\normt{\bA^T(\bA-\bA')}_2+\norm{(\bA-\bA')^T\bA'}_2}\\
&\le 2M_2M_3\norm{\bA-\bA'}_2.
\end{align*}
Similarly for $\lrang{2}$ we have
\begin{align*}
\lrang{2} &= \norm{\bA^T\bz- \bA^T\bz'+\bA^T\bz'-{\bA'}^T\bz'}_2\\
&\le \norm{\bA^T(\bz-\bz')}_2+\norm{(\bA-\bA')^T\bz'}_2\\
&\le M_2\norm{\bz-\bz'}_2 + M_1\norm{\bA-\bA'}_2.
\end{align*}
Hence $\lrang{1}+\lrang{2}\le M_2\norm{\bz-\bz'}_2 + (M_1+2M_2M_3)\norm{\bA-\bA'}_2$. We now take $c_1 = M_2$ and $c_2 = M_1+2M_2M_3$ to complete the proof. 
\end{proof}

\begin{lemma}[{\cite{Nitanda_14}}]\label{lem:uni_samp_wo}
Let $\{\bz_i\}_{i=1}^n\subseteq\bbR^d$ and define $\barbz\defeq 1/n\sum_{i=1}^n\bz_i$. Uniformly sample a random subset $\calS$ of $[n]$ with size $b$ without replacement. Then 
\begin{equation}
\bbE_{\calS} \left[\norm{\frac{1}{b}\sum_{i\in\calS}\bz_i - \barbz}^2\right] \le \frac{n-b}{b(n-1)}\left(\frac{1}{n}\sum_{i=1}^n\norm{\bz_i}^2\right). 
\end{equation}
\end{lemma}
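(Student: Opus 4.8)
The plan is to recognize the left-hand side as the (trace of the) covariance of the sample mean under sampling without replacement, compute it exactly, and then relax to the stated bound. First I would center the data by defining $\bw_i\defeq\bz_i-\barbz$ for each $i\in[n]$, so that $\sum_{i=1}^n\bw_i=\vecz$ and $\frac{1}{b}\sum_{i\in\calS}\bz_i-\barbz=\frac{1}{b}\sum_{i\in\calS}\bw_i$. This rewrites the target quantity as $\frac{1}{b^2}\bbE_\calS[\normt{\sum_{i\in\calS}\bw_i}^2]$ and disposes of the mean term cleanly.

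Next I would introduce the inclusion indicators $X_i\defeq\bbone[i\in\calS]$, write $\sum_{i\in\calS}\bw_i=\sum_{i=1}^n X_i\bw_i$, and expand the squared norm as
\begin{equation}
\bbE_\calS\Big[\normt{\textstyle\sum_{i=1}^n X_i\bw_i}^2\Big]=\sum_{i=1}^n\bbE[X_i]\normt{\bw_i}^2+\sum_{i\neq j}\bbE[X_iX_j]\lranglet{\bw_i}{\bw_j}.\nonumber
\end{equation}
For a uniformly random size-$b$ subset drawn without replacement, $\bbE[X_i]=b/n$ and $\bbE[X_iX_j]=\bbP(i,j\in\calS)=\frac{b(b-1)}{n(n-1)}$ for $i\neq j$; these two elementary counting facts are the only probabilistic input needed.

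The key simplification comes from the centering: since $\sum_i\bw_i=\vecz$, we have $\sum_{i\neq j}\lranglet{\bw_i}{\bw_j}=\normt{\sum_i\bw_i}^2-\sum_i\normt{\bw_i}^2=-\sum_i\normt{\bw_i}^2$. Substituting the two probabilities, the diagonal and off-diagonal contributions collapse into a single multiple of $\sum_i\normt{\bw_i}^2$, namely $\big(\frac{b}{n}-\frac{b(b-1)}{n(n-1)}\big)\sum_i\normt{\bw_i}^2=\frac{b(n-b)}{n(n-1)}\sum_i\normt{\bw_i}^2$. Dividing by $b^2$ gives the exact identity $\bbE_\calS[\normt{\frac{1}{b}\sum_{i\in\calS}\bz_i-\barbz}^2]=\frac{n-b}{bn(n-1)}\sum_{i=1}^n\normt{\bz_i-\barbz}^2$.

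Finally, I would pass from this equality to the stated inequality via the elementary bias--variance identity $\sum_i\normt{\bz_i-\barbz}^2=\sum_i\normt{\bz_i}^2-n\normt{\barbz}^2\le\sum_i\normt{\bz_i}^2$. This upgrades the exact identity to the bound $\frac{n-b}{bn(n-1)}\sum_i\normt{\bz_i}^2=\frac{n-b}{b(n-1)}\big(\frac{1}{n}\sum_i\normt{\bz_i}^2\big)$, which is precisely the claimed right-hand side. The only mild obstacle is getting the pairwise inclusion probability $\frac{b(b-1)}{n(n-1)}$ correct and carrying out the fraction arithmetic $\frac{b}{n}-\frac{b(b-1)}{n(n-1)}=\frac{b(n-b)}{n(n-1)}$ without error; everything else is routine expansion.
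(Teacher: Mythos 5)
Your proof is correct. Note that the paper itself does not prove this lemma at all---it is imported verbatim from the cited reference (Nitanda, 2014) and used as a black box---so there is no in-paper argument to compare against. Your derivation is the standard finite-population computation: centering to kill the mean, expanding with inclusion indicators, using $\bbE[X_i]=b/n$ and $\bbE[X_iX_j]=\tfrac{b(b-1)}{n(n-1)}$, and exploiting $\sum_i \bw_i=\vecz$ to convert the cross terms into $-\sum_i\normt{\bw_i}^2$. The arithmetic $\tfrac{b}{n}-\tfrac{b(b-1)}{n(n-1)}=\tfrac{b(n-b)}{n(n-1)}$ is right, and the final relaxation $\sum_i\normt{\bz_i-\barbz}^2\le\sum_i\normt{\bz_i}^2$ correctly recovers the (slightly lossy) bound as stated. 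In fact your argument yields the sharper exact identity with $\sum_i\normt{\bz_i-\barbz}^2$ in place of $\sum_i\normt{\bz_i}^2$, which is a small bonus over the cited form.
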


\begin{lemma}[Upperbound of a composite function; {\cite{Ghad_16a,Reddi_16}}]\label{lem:Lips_bound}
Consider a composite function $h\defeq h_1+h_2$ defined on a Euclidean space $\calE$, where both $h_1:\calE\to\bbR$ is differentiable with $L$-Lipschitz gradient and $h_2:\calE\to\barbbR_+$ is proper, closed and convex. Fix any $\eta>0$ and $\bx\in\calE$. Let $y = \prox_{\eta h_2}(\bx-\eta\bd)$, for some $\bd\in\calE$. Then 
\begin{align}
h(\by) \le h(\bz)+\lrangle{\by-\bz}{\nabla h_1(\bx)-\bd} + \left(\frac{L}{2}-\frac{1}{2\eta}\right)\norm{\by-\bx}^2 + \left(\frac{L}{2}+\frac{1}{2\eta}\right)\norm{\bz-\bx}^2-\frac{1}{2\eta}\norm{\by-\bz}^2,\;\forall\,\bz\in\calE.\nn
\end{align}
\end{lemma}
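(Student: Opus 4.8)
The plan is to combine three ingredients: the first-order optimality of the proximal step, convexity of $h_2$, and the \emph{two-sided} consequence of the $L$-Lipschitz gradient of $h_1$. First I would unwind the definition of the prox: since $\by=\prox_{\eta h_2}(\bx-\eta\bd)$ minimizes $\bv\mapsto \frac{1}{2\eta}\norm{\bv-(\bx-\eta\bd)}^2+h_2(\bv)$, the optimality condition yields $\bxi\defeq -\frac{1}{\eta}(\by-\bx)-\bd\in\partial h_2(\by)$. Convexity of $h_2$ then gives, for every $\bz\in\calE$, the linear lower bound $h_2(\bz)\ge h_2(\by)+\lrangle{\bxi}{\bz-\by}$, i.e.\ $h_2(\by)\le h_2(\bz)-\frac{1}{\eta}\lrangle{\by-\bx}{\by-\bz}-\lrangle{\bd}{\by-\bz}$.

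The delicate point is the smooth part, because $h_1$ is \emph{not} assumed convex; I would therefore invoke the descent lemma in both directions. The standard quadratic upper bound gives $h_1(\by)\le h_1(\bx)+\lrangle{\nabla h_1(\bx)}{\by-\bx}+\frac{L}{2}\norm{\by-\bx}^2$, while the reverse half of the two-sided bound gives $h_1(\bx)\le h_1(\bz)+\lrangle{\nabla h_1(\bx)}{\bx-\bz}+\frac{L}{2}\norm{\bz-\bx}^2$. Adding these and telescoping the $h_1(\bx)$ terms produces $h_1(\by)\le h_1(\bz)+\lrangle{\nabla h_1(\bx)}{\by-\bz}+\frac{L}{2}\norm{\by-\bx}^2+\frac{L}{2}\norm{\bz-\bx}^2$, which is precisely the convexity-type inequality one would have wanted but obtained here purely from Lipschitzness of the gradient.

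Finally I would add the $h_1$ and $h_2$ bounds, recognize $h_1(\bz)+h_2(\bz)=h(\bz)$, and merge the two linear terms into $\lrangle{\nabla h_1(\bx)-\bd}{\by-\bz}$. The only remaining term is $-\frac{1}{\eta}\lrangle{\by-\bx}{\by-\bz}$, which I would eliminate via the polarization identity
\[
\lrangle{\by-\bx}{\by-\bz}=\tfrac{1}{2}\left(\norm{\by-\bx}^2+\norm{\by-\bz}^2-\norm{\bz-\bx}^2\right),
\]
valid since $(\by-\bx)-(\by-\bz)=\bz-\bx$. Substituting and collecting the coefficients of $\norm{\by-\bx}^2$, $\norm{\bz-\bx}^2$ and $\norm{\by-\bz}^2$ yields exactly $\frac{L}{2}-\frac{1}{2\eta}$, $\frac{L}{2}+\frac{1}{2\eta}$ and $-\frac{1}{2\eta}$, which is the claimed inequality. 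The main obstacle is solely the nonconvexity of $h_1$: once the two-sided descent-lemma trick replaces the unavailable first-order convexity inequality, the remainder is routine bookkeeping with the polarization identity, so I expect no further difficulty.
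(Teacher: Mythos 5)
Your proof is correct. The paper itself does not prove Lemma~\ref{lem:Lips_bound} --- it only cites \cite{Ghad_16a,Reddi_16} --- and your argument (prox optimality giving a subgradient of $h_2$ at $\by$, the two-sided descent lemma applied at $\bx$ toward both $\by$ and $\bz$ to substitute for the unavailable convexity of $h_1$, and the polarization identity to split $-\frac{1}{\eta}\lrangle{\by-\bx}{\by-\bz}$) is exactly the standard derivation used in those references, with all coefficients checking out.
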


\begin{lemma}[{\cite{Hayashi_16}}] \label{lem:inner_prod_bound}
For any two vectors $\ba,\bb$ in a Euclidean space $\calE$ and any $\beta>0$, we have
\begin{equation}
2\lrangle{\ba}{\bb} \le ({1}/{\beta})\norm{\ba}^2 + \beta\norm{\bb}^2. 
\end{equation}
\end{lemma}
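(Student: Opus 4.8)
The statement is a weighted Young-type inequality for the inner product on a general Euclidean space $\calE$, so the plan is to reduce it to the single observation that the squared norm of any vector is nonnegative. Concretely, I would introduce the auxiliary vector $\bv\defeq (1/\sqrt{\beta})\,\ba-\sqrt{\beta}\,\bb$, which is a well-defined element of $\calE$ precisely because $\beta>0$ lets us take both $\sqrt{\beta}$ and $1/\sqrt{\beta}$ as positive reals. Since $\calE$ carries an inner product, we have $\norm{\bv}^2=\lrangle{\bv}{\bv}\ge 0$, and this nonnegativity is the only genuine input to the argument.

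The second step is to expand $\norm{\bv}^2$ using the bilinearity and symmetry of the inner product. The three resulting terms are $(1/\beta)\norm{\ba}^2$, $\beta\norm{\bb}^2$, and a cross term equal to $-2\,(1/\sqrt{\beta})\sqrt{\beta}\,\lrangle{\ba}{\bb}$. The key algebraic point is that the two scalar prefactors in the cross term cancel, $(1/\sqrt{\beta})\cdot\sqrt{\beta}=1$, so the cross term is exactly $-2\lrangle{\ba}{\bb}$, independent of $\beta$. Combining with the nonnegativity from the first step gives $(1/\beta)\norm{\ba}^2-2\lrangle{\ba}{\bb}+\beta\norm{\bb}^2\ge 0$, and rearranging yields the claimed bound.

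There is essentially no obstacle here; the only point requiring the slightest care is the bookkeeping of the weights, i.e.\ choosing the split $1/\sqrt{\beta}$ versus $\sqrt{\beta}$ (rather than, say, an equal factor on each) so that the weights on $\norm{\ba}^2$ and $\norm{\bb}^2$ come out as $1/\beta$ and $\beta$ exactly as required. One could alternatively derive the same inequality from the scalar bound $2uv\le u^2+v^2$ applied to $u=\norm{\ba}/\sqrt{\beta}$ and $v=\sqrt{\beta}\,\norm{\bb}$ together with Cauchy--Schwarz, $\lrangle{\ba}{\bb}\le\norm{\ba}\norm{\bb}$, but the completed-square argument above is cleaner since it avoids invoking Cauchy--Schwarz and produces the result in a single line.
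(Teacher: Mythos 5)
Your proof is correct and is exactly the paper's argument: the paper's entire proof is the one-liner $(\sqrt{1/\beta}\,\ba-\sqrt{\beta}\,\bb)^2\ge 0$, which is precisely the completed-square expansion you carry out in detail. No differences to report.
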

\begin{proof}
$(\sqrt{{1}/{\beta}}\ba -\sqrt{\beta}\bb)^2\ge 0$.
\end{proof}

\end{document}